\documentclass{article}
\usepackage{float}
\usepackage{graphicx}
\usepackage{adjustbox}
\usepackage{amsmath}
\usepackage{algorithm}
\usepackage[noend]{algpseudocode}
\usepackage{multirow}
\usepackage{booktabs}
\usepackage{enumitem}
\usepackage{comment}
\usepackage{mathtools}

\usepackage{wrapfig}
    \PassOptionsToPackage{numbers, compress}{natbib}
    \usepackage[final]{neurips_2024}

\usepackage[utf8]{inputenc} %
\usepackage[T1]{fontenc}    %
\usepackage{hyperref}       %
\usepackage{url}            %
\usepackage{booktabs}       %
\usepackage{amsfonts}       %
\usepackage{nicefrac}       %
\usepackage{microtype}      %
\usepackage{xcolor}         %

\usepackage{bbm}
\usepackage{amsthm}
\usepackage{natbib}
\bibliographystyle{unsrt}

\usepackage{amsmath}

\DeclareMathOperator*{\argmin}{arg\,min}

\newtheorem{thm}{Theorem}
\newtheorem{lemma}{Lemma}

\newtheorem{cor}{Corollary}
\newtheorem*{remark}{Remark}

\newcommand{\newc}{\newcommand}
\newc{\E}{\mbox{E}}
\newc{\V}{\mbox{V}}
\newc{\N}{\mbox{N}}
\newc{\Bern}{\mbox{Bern}}
\newc{\Po}{\mbox{Po}}
\newc{\IG}{\mbox{IG}}
\newc{\Gam}{\mbox{Gam}}
\newc{\bdp}{\mathsf{p}}
\newc{\bdt}{\mathbf{t}}
\newc{\Normal}{\mathcal{N}}
\newc{\Expectation}{\mathbb{E}}
\newc{\odds}{\mbox{odds}}
\newc{\OR}{\mbox{OR}}
\newc{\stderr}{\mbox{s.e.}}
\newc{\logit}{\mbox{logit}}
\newc{\sign}{\mbox{sign}}
\newc{\SD}{\mbox{SD}}
\newc{\bdmu}{\mbox{\boldmath $\mu$}}
\newc{\bdSigma}{\mbox{\boldmath $\Sigma$}}
\newc{\bdLambda}{\mbox{\boldmath $\Lambda$}}
\newc{\bdmuhat}{\mbox{\boldmath $\hat{\mu}$}}
\newc{\bdeta}{\mbox{\boldmath $\eta$}}
\newc{\bdtheta}{\mbox{\boldmath $\theta$}}
\newc{\bdbeta}{\mbox{\boldmath $\beta$}}
\newc{\bdgamma}{\mbox{\boldmath $\gamma$}}
\newc{\bdbetahat}{\mbox{\boldmath $\hat{\beta}$}}
\newc{\bdgammahat}{\mbox{\boldmath $\hat{\gamma}$}}
\newc{\bdthetahat}{\mbox{\boldmath $\hat{\theta}$}}
\newc{\bdvareps}{\mbox{\boldmath $\varepsilon$}}
\newc{\bdzero}{\mbox{\boldmath $0$}}
\newc{\bdone}{\mbox{\boldmath $1$}}
\newc{\bdnu}{\mbox{\boldmath $\nu$}}
\newc{\bdell}{\mbox{\boldmath $\ell$}}
\newc{\bdxi}{\mbox{\boldmath $\xi$}}
\newc{\bdomega}{\mbox{\boldmath $\omega$}}
\newc{\bdepsilon}{\mbox{\boldmath $\varepsilon$}}
\newc{\bdI}{\mathbf{I}}
\newc{\bdP}{\mbox{\boldmath $P$}}
\newc{\bdX}{\mbox{\boldmath $X$}}
\newc{\bdA}{\mbox{\boldmath $A$}}
\newc{\bdB}{\mbox{\boldmath $B$}}
\newc{\bdC}{\mbox{\boldmath $C$}}
\newc{\bdD}{\mbox{\boldmath $D$}}
\newc{\bdG}{\mbox{\boldmath $G$}}
\newc{\bdJ}{\mbox{\boldmath $J$}}
\newc{\bdK}{\mbox{\boldmath $K$}}
\newc{\bda}{\mbox{\boldmath $a$}}
\newc{\bdb}{\mbox{\boldmath $b$}}
\newc{\bdc}{\mbox{\boldmath $c$}}
\newc{\bde}{\mbox{\boldmath $e$}}
\newc{\bdu}{\mbox{\boldmath $u$}}
\newc{\bdv}{\mbox{\boldmath $v$}}
\newc{\bdx}{\mbox{\boldmath $x$}}
\newc{\bdy}{\mbox{\boldmath $y$}}
\newc{\bdz}{\mbox{\boldmath $z$}}
\newc{\bdr}{\mbox{\boldmath $r$}}
\newc{\bdQ}{\mbox{\boldmath $Q$}}
\newc{\bdR}{\mbox{\boldmath $R$}}
\newc{\bdY}{\mbox{\boldmath $Y$}}
\newc{\bdT}{\mbox{\boldmath $T$}}
\newc{\bdW}{\mbox{\boldmath $W$}}
\newc{\bdH}{\mbox{\boldmath $H$}}
\newc{\bdL}{\mbox{\boldmath $L$}}
\newc{\bdU}{\mbox{\boldmath $U$}}
\newc{\bdV}{\mbox{\boldmath $V$}}
\newc{\Multinom}{\mbox{Multinom}}
\newc{\Var}{\mbox{Var}}
\newc{\var}{\mbox{var}}
\newc{\diag}{\mbox{diag}}
\newc{\tr}{\mbox{tr}}
\newc{\phat}{\hat{p}}
\newc{\Xbar}{\bar{X}}
\newc{\xbar}{\bar{x}}
\newc{\Ybar}{\bar{Y}}
\newc{\ybar}{\bar{y}}
\newc{\dbar}{\bar{d}}
\newc{\yhat}{\hat{y}}
\newc{\bdyhat}{\mbox{\boldmath $\hat{y}$}}
\newc{\ytil}{\tilde{y}}
\newc{\ftil}{\tilde{f}}
\newc{\Ho}{\mbox{\bf H}_o}
\newc{\Ha}{\mbox{\bf H}_a}
\newc{\phatYX}{\phat_Y - \phat_X}
\newc{\SSG}{\mbox{SSG}}
\newc{\SSB}{\mbox{SSB}}
\newc{\SSE}{\mbox{SSE}}
\newc{\SST}{\mbox{SST}}
\newc{\SSR}{\mbox{SSR}}
\newc{\SSAB}{\mbox{SSAB}}
\newc{\MSG}{\mbox{MSG}}
\newc{\MSB}{\mbox{MSB}}
\newc{\MSE}{\mbox{MSE}}
\newc{\MST}{\mbox{MST}}
\newc{\MSAB}{\mbox{MSAB}}
\newc{\dfE}{\mbox{dfE}}
\newc{\dfG}{\mbox{dfG}}
\newc{\dfB}{\mbox{dfB}}
\newc{\dfT}{\mbox{dfT}}
\newc{\dfAB}{\mbox{dfAB}}
\newc{\muhat}{\hat{\mu}}
\newc{\betahat}{\hat{\beta}}
\newc{\alphahat}{\hat{\alpha}}
\newc{\etahat}{\hat{\eta}}
\newc{\phihat}{\hat{\phi}}
\newc{\sigmahat}{\hat{\sigma}}
\newc{\cl}{\centerline}
\newc{\redtitle}[1]{ {\color{red}\und{#1}:} }
\newc{\bluetitle}[1]{ {\color{blue}\und{#1}:} }
\newc{\magentatitle}[1]{ {\color{magenta}\und{#1}:} }
\newc{\R}{\mathbb{R}}
\newc{\trans}{^\mathsf{T}}
\newc{\xtx}{\bdX\trans\bdX}
\newc{\xxtxx}{\bdX(\xtx)^{-1}\bdX\trans}
\newc{\bdk}{\mathbf{k}}
\newcommand{\bdw}{\mathbf{w}}
\newcommand{\rmin}{r_m}

\title{Stabilizing Linear Passive-Aggressive Online Learning with Weighted Reservoir Sampling}

\author{%
  Skyler Wu \\
  Booz Allen Hamilton\\
  Harvard University\\
  Stanford University\\
  \texttt{wu\_skyler@bah.com} \\
  \And
  Fred Lu \\
  Booz Allen Hamilton\\
  University of Maryland, Baltimore County\\
  \texttt{lu\_fred@bah.com}\\
  \And
  Edward Raff\\
  Booz Allen Hamilton\\
  University of Maryland, Baltimore County\\
  \texttt{raff\_edward@bah.com} \\
  \And
  James Holt\\
  Laboratory for Physical Sciences\\
  \texttt{holt@lps.umd.edu} \\
}

\begin{document}

\maketitle

\begin{abstract}
  Online learning methods, like the seminal Passive-Aggressive (PA) classifier, are still highly effective for high-dimensional streaming data, out-of-core processing, and other throughput-sensitive applications. Many such algorithms rely on fast adaptation to individual errors as a key to their convergence. While such algorithms enjoy low theoretical regret, in real-world deployment they can be sensitive to individual outliers that cause the algorithm to over-correct. When such outliers occur at the end of the data stream, this can cause the final solution to have unexpectedly low accuracy. We design a weighted reservoir sampling (WRS) approach to obtain a stable ensemble model from the sequence of solutions without requiring additional passes over the data, hold-out sets, or a growing amount of memory. Our key insight is that good solutions tend to be error-free for more iterations than bad solutions, and thus, the number of passive rounds provides an estimate of a solution's relative quality. Our reservoir thus contains $K$ previous intermediate weight vectors with high survival times. We demonstrate our WRS approach on the Passive-Aggressive Classifier (PAC) and First-Order Sparse Online Learning (FSOL), where our method consistently and significantly outperforms the unmodified approach. We show that the risk of the ensemble classifier is bounded with respect to the regret of the underlying online learning method.
\end{abstract}

\section{Introduction}

Online learning algorithms are especially attractive when working with high-volume and high-dimensional streaming data, out-of-core processing, and other throughput-sensitive applications \cite{zhao2020unified}. For example, the seminal Vowpal Wabbit uses importance-weighted online learning algorithms \cite{karampatziakis2011online} to reach high quality solutions quickly, with an optional second pass using LBFGS to refine the solution. The MOA library still uses the Pegasos algorithm as its linear classifier \cite{bifet2010moa}. Most relevantly, online learning algorithms are particularly appealing for binary classification tasks, such as web spam classification \cite{zhao2020unified}. Such algorithms often enjoy fast theoretical convergence rates due to their fast adaptation to errors on individual data points, as opposed to batch or offline learning.

However, in real-world deployment, \textbf{online algorithms can be very sensitive to noisy observations in the data stream and over-correct, resulting in out-of-sample performance dropping precipitously between timesteps.} Indeed, in many cases (see Figure \ref{fig:exhibition}), an online learning algorithm might achieve over $90\%$ test accuracy after a given timestep, but then see its test accuracy drop by $20-30\%$ after over-correcting on the next observation in the data stream. In many real-world settings, it may be infeasible computationally or memory-wise to maintain a hold-out evaluation set to select the highest-performance solutions learned by our online algorithm. It may also be practically infeasible to train our classifier over multiple passes of a given dataset or when online algorithms are used for ``any-time'' ready predictions.

In this paper, we introduce a weighted reservoir sampling (WRS) \cite{efraimidis2006weighted} based approach that dramatically mitigates the aforementioned accuracy fluctuations. Our proposed method, WRS-Augmented Training (WAT), neither requires a hold-out evaluation set nor additional passes over the training data. Most importantly, WAT can be used to stabilize \textit{any} passive-aggressive online learning algorithm.
We demonstrate the promise of our WAT method on the Passive Aggressive Classifier (PAC) \cite{crammer2006online} and First-Order Sparse Online Learning (FSOL) \cite{zhao2020unified} methods --- creating two new methods PAC-WRS and FSOL-WRS. \textbf{Strikingly, across 16 benchmark datasets, WAT is able to mitigate test accuracy fluctuations in all 16 datasets on FSOL and 14 datasets on PAC.}
To analyze the theoretical effectiveness of our method, we situate our approach in the online-to-batch conversion literature, enabling us to obtain generalization bounds on i.i.d. data streams.

\begin{figure}[t]
     \centering
     \includegraphics[width=0.70\textwidth,
     ]{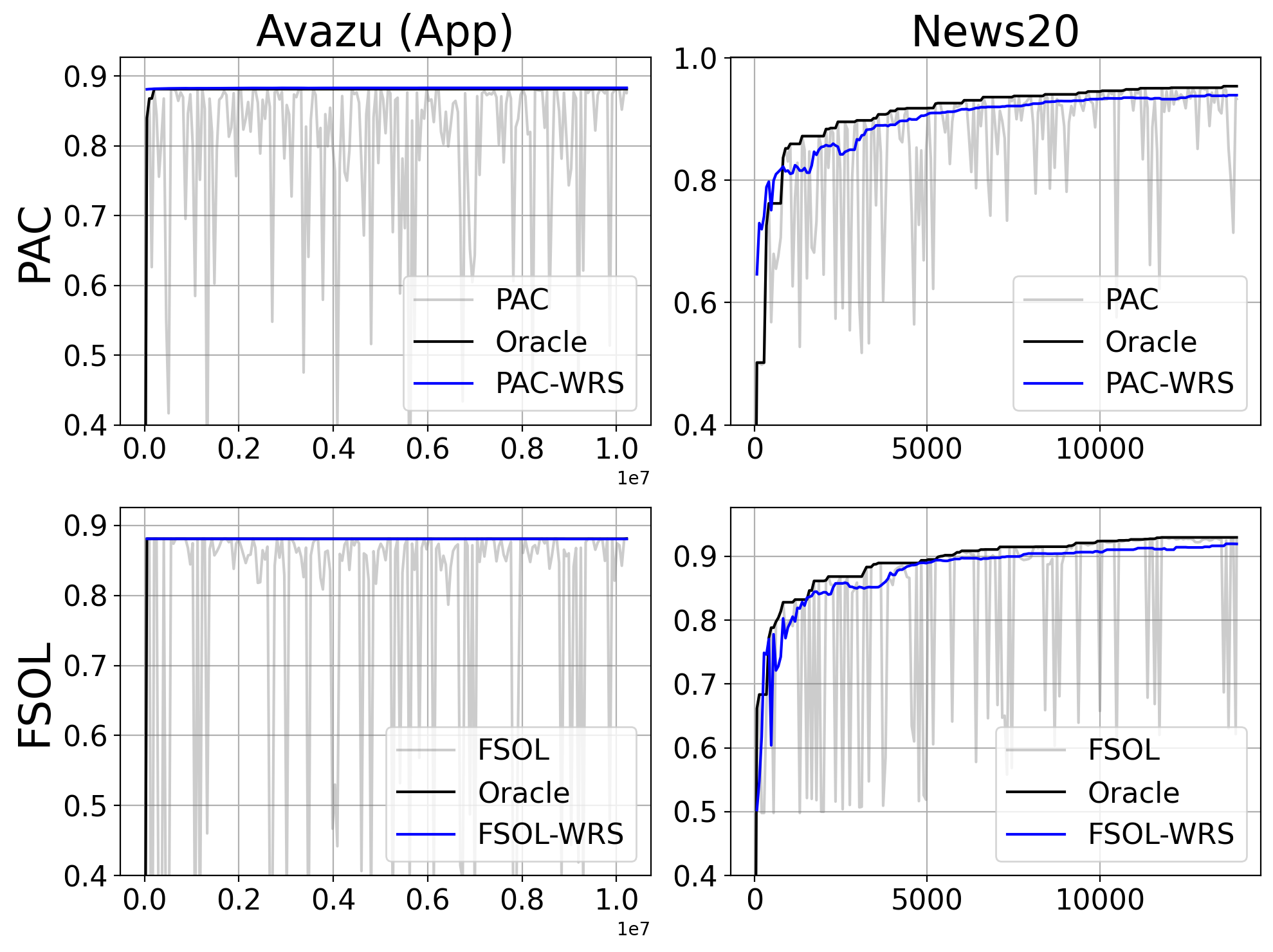}
     \caption{Test accuracies ($y$-axis) over timestep ($x$-axis) for PAC-WRS and FSOL-WRS on Avazu (App) and News20. \textbf{Light grey lines:} test accuracies of the baseline methods --- PAC or FSOL --- at each timestep. \textbf{Solid black lines:} test accuracies of the ``oracle" models, computed as the cumulative maximum of the baselines. \textbf{Solid blue lines:} test accuracies of WRS-enhanced models. Note massive fluctuations of grey lines and stability of blue lines. All variants shown are using standard sampling weights for WRS, with simple-averaging.}
     \label{fig:exhibition}
\end{figure}

\section{Review of related work}

Our work is motivated in part by a real-world need in malware detection~\cite{Raff2020d}, in which large datasets make online methods particularly attractive~\cite{2018arXiv180404637A,sorel}, and a naturally noisy labeling process inhibits standard passive-aggressive methods~\cite{MALDICT,agtr,Joyce2021,10.1145/3605764.3623907}. We show results for the EMBER malware benchmark in Appendix \ref{ember}.

\textbf{Online learning.}
In online learning for linear binary classification, one maintains a solution vector $\bdw_t \in \mathbb{R}^D$, where $D$ is the dimensionality of our data points (usually quite large), and $t$ represents the timestep. At each timestep $t$, we observe a single observation $(\bdx_t, y_t)$ from a high-throughput data stream, with feature vector $\bdx_t \in \mathbb{R}^D$ and class label $y_t \in \{ +1, -1 \}$. Given $(\bdx_t, y_t)$, an online learning algorithm will make a minor (potentially no effect) update to $\bdw_t$ to output $\bdw_{t+1}$, before receiving the next $(\bdx_{t+1}, y_{t+1})$ in the data stream. The classification rule using $\bdw_t$ on any test point $\bdx^*$ is simply $\hat{y}^* = \sign(\bdw_t^\top \bdx^*)$. 
The goal is that as $t \to \infty$, the sequence of $\bdw_t$ will enjoy low cumulative loss.
Towards this end, many online learning algorithms with various update rules have been proposed in the literature, including the Passive-Aggressive Classifier (PAC)~\cite{crammer2006online}, the Adaptive Regularization of Weight Vectors (AROW) methods~\cite{crammer2013adaptive} and the Adaptive Subgradient method (ADAGRAD)~\cite{duchi2011adaptive}. Some online learning algorithms have also been designed to specifically learn sparse solutions for $\bdw_t$ (proportion of zero entries), such as Truncated Gradient method~\cite{langford2009sparse}, Stochastic MIrror Descent Algorithm made Sparse method (SMIDAS)~\cite{shalev2009stochastic}, Regularized Dual Averaging (RDA) method~\cite{xiao2009dual}, and First-Order and Second-Order Sparse Online Learning methods (FSOL and SSOL)~\cite{zhao2020unified}. In general, these sparsity-inducing methods are powered by some combination of solution truncation and L1 norm minimization.
\textbf{Passive-aggressive online learning.}
Within the family of online learning algorithms, a \textit{passive-aggressive} algorithm is one whose update rule \textit{makes no update} to $\bdw_t$ if $(\bdx_t, y_t)$ is classified correctly with sufficient margin.
That is, we \textit{passively} leave $\bdw_{t+1} = \bdw_t$. If a margin error occurs, we \textit{aggressively} update $\bdw_{t+1}$ such that the error is fully correct (though a regularization penalty $C$ will tamper the degree of aggressiveness). 
Usually, correct classification with sufficient margin is defined using the \textit{hinge loss} $\ell$ --- the algorithm remains passive at timestep $t$ if 
$ 
\ell\left(\bdw_t; \left(\bdx_t, y_t\right)\right) = \max\left( 1 - y_t  \bdw_t^\top \bdx_t, 0 \right) = 0. $
Next, we introduce the two passive-aggressive algorithms that we will use to test our WRS-Augmented Training method.
\textit{Passive-Aggressive Classifier (PAC).}
Introduced by Crammer et al., PAC is actually a family of three algorithms: PA, PA-I, and PA-II \cite{crammer2006online}. The base PA algorithm update rule seeks to solve the following constrained optimization problem:
\[ \bdw_{t+1} = \argmin_{\bdw \in \mathbb{R}^D} \frac{1}{2}\| \bdw - \bdw_t \|^2 \;\; \text{s.t.} \;\; \ell(\bdw; (\bdx_t, y_t)) = 0.\]
Because of the hard constraint of forcing $\bdw_{t+1}$ to satisfy $\ell(\bdw; (\bdx_t, y_t)) = 0$, the optimization is particularly vulnerable to noisy data.
As such, Crammer et al. introduce a new constrained optimization function with a slackness hyperparameter $C_{err}$ to allow for some residual hinge loss and induce less aggressive, but presumably more stable updates:
\[ \bdw_{t+1} = \argmin_{\bdw \in \mathbb{R}^D} \frac{1}{2}\| \bdw - \bdw_t \|^2 + C_{err} \epsilon^m \;\; \text{s.t.} \;\; \ell(\bdw; (\bdx_t, y_t)) \leq \epsilon \;\; \text{and} \;\; \epsilon \geq 0,\]
where setting $m=1$ corresponds to PA-I and $m=2$ to PA-II. From initial testing, PA-I and PA-II performed very similarly, with a slight edge to PA-II. As such, for this paper, we will focus on PA-II, which performs the following closed-form update when in aggressive mode \cite{crammer2006online}:
$ \bdw_{t+1} = \bdw_t + \frac{\ell(\bdw_t; (\bdx_t, y_t))}{\| \bdx_t \|^2 + \frac{1}{2C_{err}}} y_t \bdx_t. $
For the remainder of this paper, ``PAC" will refer to PA-II.

\textit{First-Order Sparse Online Learning (FSOL).}
Introduced by Zhao et al. \cite{zhao2020unified}, FSOL is a passive-aggressive algorithm which attempts to find sparse solutions for $\bdw_t$. Governed by a learning rate $\eta$ and a sparsity parameter $\lambda$, FSOL keeps track of two vectors $\bdtheta_t, \bdw_t \in \mathbb{R}^D$ and performs the following update rules when in aggressive mode \cite{zhao2020unified}:
\[ \bdtheta_{t+1} = \bdtheta_t + \eta y_t \bdx_t; \;\; \bdw_{t+1} = \sign(\bdtheta_{t+1}) \odot [|\bdtheta_{t+1}| - \lambda_t]_+, \]
where $\lambda_t = \eta \lambda$ and $[\bdv]_+$ takes the maximum of each element in $\bdv$ and $0$. Zhao et al. note that the above update rules are identical to that of Xiao's RDA method with soft $1$-norm regularization \cite{xiao2009dual, zhao2020unified}.

\textbf{Weighted Reservoir Sampling (WRS).}
Suppose we have a collection of items $V = \{\bdv_1, \dots, \bdv_T\}$, with corresponding nonnegative weights $w_1, \dots, w_T$. Our goal is to collect a size-$K$ weighted random sample from $V$ \textit{in one pass} (imagine this process is indexed by time), where the population size $T = |V|$ is potentially unknown. Introduced by Efraimidis and Spirakis \cite{efraimidis2006weighted}, \textit{weighted random sampling with a reservoir}, which we shorten to \textit{weighted reservoir sampling} (WRS), is an algorithm that allows us to collect such a size-$K$ weighted random sample under the aforementioned conditions. Specifically, as we are making our one pass through the items in $V$, at each timestep $t$, we maintain and update a \textit{reservoir} --- a temporary storage unit with a maximum capacity of $K$ items, with each item in the reservoir a potential candidate for our final size-$K$ sample. At time $T$, the $K$ items that are currently in the reservoir will constitute our sample of size-$K$. We invite the interested reader to look at Algorithm A in \cite{efraimidis2006weighted} for specific details.

\textbf{Online-to-batch conversion.}
Online learning algorithms such as PAC generally do not impose any restrictions on the distribution of the training data sequence. Their \textit{regret} bounds aim to control the cumulative loss $M_T$ of the algorithm over any sequence of data, compared with a minimal-loss fixed model $\hat{\bdw}$:
$\mathrm{reg}_T \coloneqq \sum_{t=1}^T \ell(\bdw_t; z_t) - \sum_{t=1}^T \ell(\hat{\bdw}; z_t) = M_T - \sum_{t=1}^T \ell(\hat{\bdw}; z_t)$.
When using a fixed model to classify unseen data, we need to impose an i.i.d. assumption on the data stream in order to measure the \textit{risk},
or expected generalization error,
of the model.
Note that the distribution $\mathcal{D}$ itself is arbitrary and can still permit outliers or mixtures.
Then the population \textit{risk} is defined as
$R_\mathcal{D}(\bdw) \coloneqq \mathbb{E}_{z\sim \mathcal{D}}[\ell(\bdw; z)]$.

To theoretically describe our algorithm, we leverage work on online-to-batch conversion, which takes an online algorithm with known regret bounds on an i.i.d. sequence of data and extracts a stable final model with low risk. For example, in the online perceptron algorithm, earlier work studied the \textit{pocket} approach, which selects the longest-surviving model in the sequence as the final model~\cite{freund1998large,muselli1997convergence}.
Other well-known approaches use the uniform average of the whole model sequence or the best-performing model over a validation set~\cite{cesa2004generalization}.

As will be seen, our method generalizes these approaches to utilize multiple long-survival models as an ensemble model.
Furthermore, we will introduce novel improvements, including a limited-size reservoir with probabilistic sampling.
The risk bounds for our WAT model leverage improved techniques from~\cite{cesa2008improved,dekel2008online}. Our experimental results also demonstrate that our novel conversion technique outperforms prior baselines (see Appendix \ref{other_averaging_schemes}).

\section{Our method: WRS-Augmented Training (WAT)}

In one extreme, if a candidate solution $\bdw_t$ from a passive-aggressive algorithm had \textit{perfect classification with sufficient margin} on any given data point, then the \textit{subsequent number of passive steps} taken after time $t$ (i.e., number of timesteps that our algorithm is in passive mode before going aggressive again) would be infinite. In the other extreme, if a candidate solution $\bdw_t$ had extremely low performance, then our passive-aggressive algorithm is likely to go aggressive very soon after time $t$, implying a very small subsequent number of passive steps after time $t$. In short, our key insight is that high-performing solutions $\bdw_t$ tend to be error-free for more iterations than low-performing solutions. As such, the subsequent number of passive steps taken after the formation of $\bdw_t$ provides an estimate of $\bdw_t$'s relative quality (i.e., test accuracy).

However, we do not want to take the intermediate solution $\bdw_t$ that had the most passive updates as this, too, can be noise (and luck) sensitive. Ideally, we would like to sample from the merging distribution of $\bdw_t$ as they occur, and take an average of those solutions to obtain a singular, highly robust, solution vector that performs well with little variance. But, we do not wish to store all $\bdw_t$ due to intractability. 

Putting these thoughts together, our WRS-Augmented Training (WAT) method functions as follows. Given a base passive-aggressive algorithm (e.g., PAC or FSOL), we will run said algorithm through our data stream $\{ (\bdx_t, y_t )\}_{t=1}^\top$, as normal, but keep a size-$K$ reservoir of promising candidate solutions. The reservoir approach allows us to run through our data stream and collect a weighted random sample of candidate solutions of size-$K$, \textit{weighted by their subsequent number of passive steps} and without storing all intermediate solutions.

Naturally, this setup is suited for Efraimidis and Spirakis's WRS algorithm. Procedurally, every timestep that our algorithm goes aggressive, we obtain a new active candidate solution. Right before we apply our aggressive mode update rule, we will add the outgoing candidate solution to our size-$K$ reservoir (and remove a current resident of the reservoir, if necessary) following the steps of the WRS algorithm. At any timestep $t$, we can form an ensemble solution $\bdw_{\textnormal{WRS}}$ by taking an average of the candidate solutions currently in our reservoir. Hopefully, at any timestep $t$, $\bdw_{\textnormal{WRS}}$ will have more stable test performance than the current active candidate solution $\bdw_t$.

\begin{algorithm}[ht!]
    \caption{WRS-Augmented Training (WAT)}
    \label{alg:General-WRS}
    \begin{algorithmic}[1]
        \renewcommand{\algorithmicrequire}{\textbf{Input:}}
        \renewcommand{\algorithmicensure}{\textbf{Output:}}
        \Require $\bdw$ - initial solution vector, $\{ (\bdx_t, y_t )\}_{t=1}^T$ - data stream, $K$ - reservoir size, $\textnormal{WS}$ - weighting scheme (``Standard" or ``Exponential"), $\textnormal{AS}$ - averaging scheme (``Simple Average" or ``Weighted Average"), $\textnormal{VZ}$ - voting-based zeroing (True or False), and other base-method-specific hyperparameters $\mathcal{H}$ (e.g., for PAC or FSOL)
        \Ensure WRS-stabilized solution vector $\bdw_{\textnormal{WRS}}$.
        \Function{WAT}{$\bdw, \{ (\bdx_t, y_t )\}_{t=1}^T; K, \textnormal{WS}, \textnormal{AS}, \textnormal{VZ}, \mathcal{H}$}
        \State \# Initializing intermediate data structures
        \State $s \to 0$ \Comment{Counter for subsequent passive steps of current solution candidate}
        \State $\mathcal{R} \to []$ \Comment{Size-$K$ reservoir for storing promising solutions, as array}
        \State $\bdb, \bdk \to [], []$ \Comment{Size-$K$ arrays for weights $b_r$ and auxiliary $k_r$ values for solutions in $\mathcal{R}$}
        \State \# At each timestep, we observe $(\bdx_t, y_t)$
        \For{$t \gets 1$ \textbf{to} $T$}
            \If{$\ell(\bdw; \bdx_t, y_t) > 0$} \Comment{If made error, in aggressive mode}
            \State \# Terminate current solution, probabilistically add to reservoir using WRS \cite{efraimidis2006weighted}
            \State Draw $u^* \sim \textnormal{Unif}(0, 1)$
            \If{$\textnormal{WS} == ``\textnormal{Standard}"$} 
            \State $b^* \gets s$
            \ElsIf{$\textnormal{WS} == ``\textnormal{Exponential}"$}
            \State $b^* \gets \exp(s)$
            \EndIf
            \State $k^* \gets (u^{*})^{\frac{1}{b^{*} + \epsilon}}$ \Comment{$\epsilon = 10^{-8}$ to prevent division by $0$}
            \State $\tau \gets \min_{j \in 1, \dots, K} \bdk[j]$; $i \gets \argmin_{j \in 1, \dots, K}\bdk[j]$
            \If{$k^* > \tau$ or $\mathcal{R}$ is not full with $K$ solutions}
            \State $\mathcal{R}[i] \gets \bdw, \bdb[i] \gets b^*, \bdk[i] \gets k^*$
            \EndIf
            \State \# Base method update rule
            \State $\bdw \gets \bdw + g(\dots)$ \Comment{$g(\dots)$ specific to base algorithm (e.g., PAC or FSOL)}
            \State $s \gets 0$ \Comment{Reset number of subsequent passive steps}
            \Else \Comment{correctly-classified, still in passive mode}
            \State $s \gets s + 1$ \Comment{Increment number of subsequent passive steps}
            \EndIf
            \State \# Forming our PAC-WRS solution
            \If{$\textnormal{AS} == ``\textnormal{Simple Average}"$}
            \State $\bdw_{\textnormal{WRS}} \gets \frac{1}{K} \sum_{j =1}^K \mathcal{R}[j]$ \Comment{Simple average of solutions in reservoir}
            \ElsIf{$\textnormal{AS} == ``\textnormal{Weighted Average}"$}
            \State $\bdw_{\textnormal{WRS}} \gets \sum_{j = 1}^K \mathbf{b}[j] \mathcal{R}[j] / (\sum_{j = 1}^K \mathbf{b}[j])$ \Comment{Weighted avg. of solutions in reservoir}
            \EndIf
            \State \# Voting-based zeroing for extra sparsity
            \If{$\textnormal{VZ}$ is \texttt{True}}
            \State Zero out entries in $\bdw_{\textnormal{WRS}}$ where the majority of solutions $\mathcal{R}[j]$ contain zeroes.
        \EndIf
        \EndFor
        \State \textbf{Return} $\bdw_{\textnormal{WRS}}$.
        \EndFunction
    \end{algorithmic}
\end{algorithm}

\subsection{WAT variants}

We will experiment with a few additional variants of the WAT method. First, instead of weighting using the subsequent number of passive steps (which we denote as \textit{standard weights}), what if we weight using the exponentiated subsequent number of passive steps (which we denote as \textit{exponential weights})? The idea with exponential weights is that it is closer in practice to deterministically picking the candidate solutions with the largest number of subsequent passive steps, while still maintaining some stochasticity --- i.e., a ``greedier" policy. Second, when constructing our ensemble solution $\bdw_{\textnormal{WRS}}$, should we take a \textit{simple average} of the residents in our reservoir or a \textit{weighted average}? Third, there are reasonable concerns that constructing $\bdw_{\textnormal{WRS}}$ via averaging might negate the sparsity advantages of a method like FSOL, due to different candidates in the reservoir containing $0$s in different entries. However, if the majority of candidate solutions in the reservoir contain $0$s at a given entry, what if we tried zeroing out said entry in $\bdw_{\textnormal{WRS}}$, as it is likely to be uninformative? We denote this add-on as \textit{voting-based zeroing}. 

We present the WAT method in full detail in Algorithm \ref{alg:General-WRS}. To clarify, the elements in the data structures $\mathcal{R}$, $\mathbf{b}$, and $\mathbf{k}$ (all of which are size-$K$ arrays) are paired with each other, so that when we add/remove an element in $\mathcal{R}$, the corresponding elements in $\mathbf{b}$ and $\mathbf{k}$ are removed as well. To be fully clear, $\mathcal{R}$ contains candidate solution vectors $\mathcal{R}[1], \dots, \mathcal{R}[K]$, with each $\mathcal{R}[k] \in \mathbb{R}^D$. The vector $\bdb$ contains scalar values $\bdb[1], \dots, \bdb[K]$, and likewise for the vector $\bdk$.

\subsection{Theoretical Analysis}
\label{math-theory}

One of the goals of our method is to choose a set of effective models as the algorithm runs live,
without the need for expensive evaluation on a validation set.
In this section we first provide \textit{validity} arguments for using observed survival as a proxy to select high-accuracy models.
These are finite-sample bounds based on reasoning about the reservoir $\mathcal{R}$.
Note that we are interested in studying generalization to unseen data, a complementary setting to prior work which shows convergence in terms of training set error for the perceptron model~\cite{muselli1997convergence}. After establishing validity, we will turn to learning bounds for our ensemble model risk in the i.i.d. setting. Proofs are deferred to Appendix \ref{appendix-proofs}.

\textbf{Validity analysis.} We assume a given dataset $z_1^T = \{z_t\}_{t=1}^T$ is an i.i.d. sequence sampled from a generating distribution $\mathcal{D}$.
We first suppose that $\ell(\bdw; z)$ is the 0-1 loss,
that is $\ell(\bdw; (x, y)) = \mathbbm{1}(\hat{y} \neq y)$.
Then the risk $R_\mathcal{D}(\bdw)$ is the probability that a random $z\sim\mathcal{D}$ is misclassified by $\bdw$.
At any time $t$, we have an online model $\bdw_t$, for which we define the \textit{survival time} $s_{\bdw_t}$ to be the number of subsequent correct classifications, stopping when $\bdw_t$ misclassifies a sample.
As $\bdw_t$ does not change until an error occurs, at any finite time we only collect $K$ updated models $\{\bdw^{(j)}\}_{j=1}^K$ into $\mathcal{R}$.

Our first result bounds the difference in risk among two models in $\mathcal{R}$:
(1) the minimal-risk hypothesis $\tilde{\bdw} = \argmin R_{\mathcal{D}}(\bdw^{(j)})$ which we do not know, and
(2) the longest-surviving hypothesis $\bdw_s = \argmin s_{\bdw^{(j)}}$ which we observe.
We also denote $s_{\tilde{\bdw}}$ and $s_{\bdw_s}$ to be their respective survival times.

\begin{thm} \label{thm:top1_bound}
    Let $\bdw^{(1)}, \ldots, \bdw^{(k)} \in \mathcal{R}$ be the updated outputs of an online PA algorithm on inputs $Z_1^T \sim \mathcal{D}$.
    Also define $\rmin$ as the minimal achievable risk of any model, such that $\rmin \leq R_\mathcal{D}(\tilde{\bdw})$ almost surely. Then
    $$P(R_\mathcal{D}(\bdw_s) > R_\mathcal{D}(\tilde{\bdw}) + \varepsilon ) \leq \min\bigg\{K \frac{\rmin}{ 2\rmin + \varepsilon - \rmin (\rmin + \varepsilon) },  e^{-\rmin^K (\rmin + \varepsilon)} \bigg\} $$
\end{thm}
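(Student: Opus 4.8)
The plan is to reduce everything to a comparison of independent geometric survival times. First I would fix the realized reservoir models $\bdw^{(1)}, \ldots, \bdw^{(K)}$ and condition on them (equivalently, on the history up to each model's formation). Because the data stream is i.i.d.\ and a passive-aggressive model stays frozen until it makes an error, the survival time $s_{\bdw^{(j)}}$ of each model is geometric: conditional on $\bdw^{(j)}$ we have $P(s_{\bdw^{(j)}} \ge m \mid \bdw^{(j)}) = (1 - R_\mathcal{D}(\bdw^{(j)}))^m$, since each subsequent fresh point is misclassified independently with probability $r_j := R_\mathcal{D}(\bdw^{(j)})$. The survival windows of distinct models are consecutive and disjoint, so conditional on the models I would treat the $s_{\bdw^{(j)}}$ as independent geometrics with parameters $r_j$; making this conditioning fully rigorous (the window boundaries are themselves data-dependent stopping times) is one of the points that needs care.

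Next I would reduce the target event by a union bound. If the longest-surviving model $\bdw_s$ has risk exceeding $R_\mathcal{D}(\tilde\bdw) + \varepsilon$, then in particular some model $\bdw^{(j)}$ with $r_j > R_\mathcal{D}(\tilde\bdw) + \varepsilon$ must survive at least as long as the minimal-risk model $\tilde\bdw$, which $\bdw_s$ out-survives by definition of being longest-surviving. Hence $\{R_\mathcal{D}(\bdw_s) > R_\mathcal{D}(\tilde\bdw) + \varepsilon\} \subseteq \bigcup_{j:\, r_j > R_\mathcal{D}(\tilde\bdw)+\varepsilon} \{ s_{\bdw^{(j)}} \ge s_{\tilde\bdw} \}$, and there are at most $K$ such ``bad'' models, so the probability is at most $K$ times the largest pairwise beat-probability.

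The core computation is then the pairwise probability that one geometric out-survives another. Writing $p = 1 - R_\mathcal{D}(\tilde\bdw)$ and $q = 1 - r_j$ for two independent geometrics, a direct summation gives $P(s_{\bdw^{(j)}} \ge s_{\tilde\bdw}) = \sum_{m\ge 0} p^m(1-p)\, q^m = \frac{1-p}{1-pq}$. This expression is decreasing in $r_j$, so the worst case over bad models is the boundary risk $r_j = R_\mathcal{D}(\tilde\bdw)+\varepsilon$, which after simplification yields $\frac{R_\mathcal{D}(\tilde\bdw)}{2R_\mathcal{D}(\tilde\bdw) + \varepsilon - R_\mathcal{D}(\tilde\bdw)(R_\mathcal{D}(\tilde\bdw)+\varepsilon)}$. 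Multiplying by $K$ gives the first term of the minimum; the exponential term $e^{-\rmin^K(\rmin+\varepsilon)}$ I would obtain as a second, cruder estimate of the same event, applying $1+x \le e^x$ to a product form of the survival probabilities.

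The main obstacle is the passage from the data-dependent quantity $R_\mathcal{D}(\tilde\bdw)$ to the fixed minimal achievable risk $\rmin$. The pairwise bound above is monotone increasing in the baseline risk level, so simply using $\rmin \le R_\mathcal{D}(\tilde\bdw)$ does not immediately replace $R_\mathcal{D}(\tilde\bdw)$ by $\rmin$ in the stated direction; I would therefore either invoke that the reservoir's best model essentially attains $\rmin$ (so $R_\mathcal{D}(\tilde\bdw) \approx \rmin$) or carry out a careful worst-case analysis over the admissible risk profiles. Reconciling this substitution with the conditional-independence step of the first paragraph is the delicate part of the argument; the union bound and the geometric computation themselves are routine once those two points are settled.
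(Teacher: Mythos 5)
Your proposal follows essentially the same route as the paper's proof, which splits the claimed bound into two independent results: Theorem~\ref{thm:top1_union} for the union-bound term and Theorem~\ref{thm:top1_direct} for the exponential term. Your steps --- survival times being geometric with parameter equal to the model's risk (the paper's Lemma~\ref{lem:same_dist}), inclusion of the bad event into a union over the at most $K$ models whose risk exceeds $R_\mathcal{D}(\tilde{\bdw})+\varepsilon$, the pairwise comparison $P(X \geq Y) = q/(p+q-pq)$ for independent geometrics (Lemma~\ref{lem:geom}), worst-casing at the boundary risk, and multiplying by $K$ --- are exactly the paper's. Your sketch of the exponential term is also the paper's idea, though executed there with more machinery: the event is relaxed to $\{\bdw_s \neq \tilde{\bdw}\}$, the probability that $\tilde{\bdw}$ out-survives all reservoir models is expanded via Lemma~\ref{lem:order}, only the $x=1$ term of the resulting series is kept, and $1-z \leq e^{-z}$ is applied.

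The step you flag as unresolved --- passing from $R_\mathcal{D}(\tilde{\bdw})$ to $\rmin$ in the union-bound term --- is precisely the step the paper itself does not justify: its proof simply asserts that the conditional probability can be bounded ``uniformly across the expectation'' by the expression evaluated at $\rmin$. Your monotonicity concern is mathematically correct: writing $f(q) = q/\left(2q+\varepsilon-q(q+\varepsilon)\right)$ for the worst-case pairwise beat-probability, one computes $f'(q) = (q^2+\varepsilon)/\left(2q+\varepsilon-q(q+\varepsilon)\right)^2 > 0$, so $\rmin \leq R_\mathcal{D}(\tilde{\bdw})$ gives $f(\rmin) \leq f(R_\mathcal{D}(\tilde{\bdw}))$, the opposite of what the substitution requires. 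The substitution is valid only under something like your first suggested fix --- that the best reservoir model essentially attains risk $\rmin$ --- which is implicitly how the paper treats it. By contrast, in the exponential term the analogous substitution is in the correct direction, since $R_\mathcal{D}(\tilde{\bdw})^K(R_\mathcal{D}(\tilde{\bdw})+\varepsilon) \geq \rmin^K(\rmin+\varepsilon)$ and $z \mapsto e^{-z}$ is decreasing; both you and the paper exploit this correctly. In short, your reconstruction is faithful to the paper's argument, and the gap you honestly identify is a gap in the paper's own proof rather than a defect of your proposal.
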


This immediately leads to
\begin{cor}
    With prob. $1-\delta$,
    $R_\mathcal{D}(\bdw_s) \leq R_\mathcal{D}(\tilde{\bdw}) + \min\bigg\{ \frac{1 - \delta - \rmin^{K+1}}{\rmin^K},  \frac{\rmin(K - \delta(2-\rmin))}{\delta(1-\rmin)}  \bigg\}$.
\end{cor}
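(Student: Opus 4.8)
The plan is to invert the tail bound of Theorem~\ref{thm:top1_bound}: for a target failure probability $\delta$, I would solve for the smallest deviation $\varepsilon$ that drives the right-hand side down to $\delta$, so that the complementary event $\{R_\mathcal{D}(\bdw_s) \le R_\mathcal{D}(\tilde{\bdw}) + \varepsilon\}$ has probability at least $1-\delta$. Since Theorem~\ref{thm:top1_bound} controls the tail by the minimum of a rational bound $B_1(\varepsilon) = K\rmin/(2\rmin + \varepsilon - \rmin(\rmin+\varepsilon))$ and an exponential bound $B_2(\varepsilon) = e^{-\rmin^K(\rmin+\varepsilon)}$, and both are decreasing in $\varepsilon$, I would treat each separately, set it equal to $\delta$, solve for $\varepsilon$, and then observe that the tail falls below $\delta$ as soon as $\varepsilon$ exceeds the smaller of the two resulting thresholds --- which is exactly the minimum appearing in the statement.

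For the rational term I would first simplify the denominator as $2\rmin + \varepsilon - \rmin(\rmin+\varepsilon) = \rmin(2-\rmin) + \varepsilon(1-\rmin)$, then set $K\rmin/[\rmin(2-\rmin)+\varepsilon(1-\rmin)] = \delta$, cross-multiply, and isolate $\varepsilon$; this is routine algebra and yields $\varepsilon = \rmin(K - \delta(2-\rmin))/[\delta(1-\rmin)]$, the second term of the minimum. For the exponential term I would take logarithms in $e^{-\rmin^K(\rmin+\varepsilon)} = \delta$ to obtain $\varepsilon = \ln(1/\delta)/\rmin^K - \rmin$, and then convert this to the stated closed form $(1-\delta-\rmin^{K+1})/\rmin^K$ by writing $\rmin = \rmin^{K+1}/\rmin^K$ and replacing the logarithmic factor through the elementary linearization $e^{-y}\approx 1-y$, i.e.\ reading off $\rmin^K(\rmin+\varepsilon)=1-\delta$. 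Since Theorem~\ref{thm:top1_bound} already takes the minimum of the two tails, the event holds at confidence $1-\delta$ once $\varepsilon$ reaches the \emph{minimum} of the two solved deviations, giving the min in the corollary.

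I expect the main obstacle to be the exponential branch: the exact inversion produces a $\ln(1/\delta)$ factor rather than the clean $1-\delta$ appearing in the corollary, so the delicate step is justifying the passage from $\ln(1/\delta)$ to $1-\delta$ via linearization of the exponential, and being explicit about the regime (small target deviation, or $\rmin^K(\rmin+\varepsilon)$ near zero) in which this replacement is used. The remaining work is bookkeeping: confirming that each $B_i$ is monotone decreasing in $\varepsilon$ so that ``setting equal to $\delta$'' correctly identifies the threshold, and checking that the complementary half-lines combine so that the bound is governed by the \emph{smaller} of the two solved values rather than the larger.
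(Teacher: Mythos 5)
Your proposal follows exactly the route the paper intends: the paper gives no separate proof of this corollary (it is introduced only with ``this immediately leads to''), and the implied argument is precisely your inversion --- set each branch of the Theorem~\ref{thm:top1_bound} tail bound equal to $\delta$, solve for $\varepsilon$, and note that since both branches are decreasing in $\varepsilon$, the tail falls below $\delta$ as soon as $\varepsilon$ exceeds the \emph{smaller} of the two thresholds. Your rational branch is exact and reproduces the second term of the min. Your worry about the exponential branch is well founded, and in fact that step cannot be made rigorous as stated: exact inversion gives $\varepsilon = (\ln(1/\delta) - \rmin^{K+1})/\rmin^K$, and since $\ln(1/\delta) \ge 1-\delta$ for all $\delta \in (0,1)$ (equivalently $e^{-y} \ge 1-y$), the paper's term $(1-\delta-\rmin^{K+1})/\rmin^K$ is a \emph{smaller} deviation than Theorem~\ref{thm:top1_bound} actually supports --- plugging it in makes the exponential bound evaluate to $e^{-(1-\delta)}$, which exceeds $\delta$ whenever $\delta$ is small. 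The linearization $e^{-y}\approx 1-y$ therefore goes in the wrong direction to serve as a sufficient condition in any regime; it is a silent first-order approximation (or looseness) in the corollary itself, not a gap you could have filled. In short: your approach is the same as the paper's, your algebra is correct, and the discrepancy you flagged between $\ln(1/\delta)$ and $1-\delta$ is a legitimate criticism of the stated corollary rather than a defect of your argument.
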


While this bound explains the use of top-1 survival in the worst case,
we further justify the use of an ensemble (or reservoir) of top-surviving models.
Specifically, given the use of top-$B$ models, there is always a certain probability that the top-$(B+1)$ can lower the risk. 
Since $B=1$ is a base case, this implies that any value of $B$ is worthy of consideration, until this probability decays to 0.

To simplify the notation, we sort the models in $\mathcal{R}$ by decreasing survival time $s'_1, \ldots, s'_k$, with corresponding re-indexed weights $w'_1, \ldots, w'_k$ and risks $R'_1, \ldots, R'_k$.
We also add an assumption, that there is a partition of $\mathcal{R} = \mathcal{R}_g \cup \mathcal{R}_b$ such that 
$\mathcal{R}_g = \{ \bdw \in \mathcal{R} : \rmin \leq R_\mathcal{D}(\bdw) \leq   \rmin + \varepsilon \} $
and 
$\mathcal{R}_b = \{ \bdw \in \mathcal{R} : \rmin + \varepsilon < R_\mathcal{D}(\bdw) \leq \rmin + 3\varepsilon \} $,
and that $|\mathcal{R}_b| \geq B $.
That is, there is a set of \textit{good} and \textit{bad} models in terms of risk. In the Appendix we show that the assumption is readily satisfied.

\begin{thm} \label{thm:topB_bound}
    Let $\bar R_\mathcal{D}(\bdw_{B})$ be the averaged risk of the top-$B$ surviving models, and let $\bar R_\mathcal{D}(\bdw_{B+1})$ be the averaged risk including the next highest survival model.
    Then $\bar R_\mathcal{D}(\bdw_{B+1}) \leq \bar R_\mathcal{D}(\bdw_{B})$
    with probability at least
    $|\mathcal{R}_g|\binom{|\mathcal{R}_b|}{B} \rmin^2 (\rmin + 3\varepsilon)^{|\mathcal{R}_b| - B}(1 - \rmin - 3\varepsilon)^B$.
\end{thm}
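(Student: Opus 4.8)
The plan is to first reduce the ensemble inequality to a pointwise condition on the newly added model, then isolate a concrete sufficient event whose probability factorizes into the stated binomial and good-model terms.

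First I would unwind the definition of the averaged risk. Writing $\bar{R}_\mathcal{D}(\bdw_B) = \frac{1}{B}\sum_{j=1}^B R'_j$ and $\bar{R}_\mathcal{D}(\bdw_{B+1}) = \frac{1}{B+1}\sum_{j=1}^{B+1} R'_j$, a one-line manipulation (clear the denominators and cancel the common $\sum_{j=1}^B R'_j$) shows that
\[ \bar{R}_\mathcal{D}(\bdw_{B+1}) \leq \bar{R}_\mathcal{D}(\bdw_B) \iff R'_{B+1} \leq \tfrac{1}{B}\textstyle\sum_{j=1}^B R'_j = \bar{R}_\mathcal{D}(\bdw_B). \]
That is, adding the $(B{+}1)$-st surviving model helps exactly when its own risk does not exceed the current top-$B$ average risk. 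This converts the claim into a statement about where the $(B{+}1)$-st survivor sits relative to the top-$B$ average.

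Next I would exhibit a sufficient event using the partition $\mathcal{R} = \mathcal{R}_g \cup \mathcal{R}_b$. If the $B$ longest-surviving models all lie in $\mathcal{R}_b$ while the $(B{+}1)$-st lies in $\mathcal{R}_g$, then $\bar{R}_\mathcal{D}(\bdw_B) > \rmin + \varepsilon \geq R'_{B+1}$, so the desired inequality holds; the assumption $|\mathcal{R}_b| \geq B$ is exactly what makes this configuration possible. It therefore suffices to lower-bound the probability that the top-$B$ survivors are all \emph{bad} and the next survivor is \emph{good}. Here I would invoke the $0$-$1$ loss and i.i.d. assumptions to model each reservoir model's survival time $s_{\bdw^{(j)}}$ as a geometric variable with parameter $R_\mathcal{D}(\bdw^{(j)})$ on a disjoint segment of the stream, so that, conditioned on the risks, the $K$ survival times are independent. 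The target probability then factors through the binomial event that exactly $B$ of the $|\mathcal{R}_b|$ bad models survive at least one round while the remaining bad models err immediately: bounding each bad model's one-round survival probability below by $1 - \rmin - 3\varepsilon$ and its immediate-error probability below by $\rmin + 3\varepsilon$ produces the factor $\binom{|\mathcal{R}_b|}{B}(1-\rmin-3\varepsilon)^B(\rmin+3\varepsilon)^{|\mathcal{R}_b|-B}$. A union bound over the $|\mathcal{R}_g|$ choices of which good model is the tipping point, together with a bound of order $\rmin^2$ controlling that good model's short survival needed to place it just below the surviving bad models, supplies the remaining factor $|\mathcal{R}_g|\,\rmin^2$.

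The hard part will be making the ordering step fully rigorous: translating the event about the \emph{entire} ranking of $K$ survival times into the clean product above. In particular I expect the delicate points to be (i) justifying the conditional independence of the survival times and the legitimacy of the geometric model, (ii) handling ties at survival value $0$ so that the designated good model is genuinely the $(B{+}1)$-st survivor rather than merely tied with the immediately-erring bad models, and (iii) verifying that every risk-range bound is applied in the direction consistent with a \emph{lower} bound on the probability, given that bad-model risk is only known to lie in $(\rmin+\varepsilon,\,\rmin+3\varepsilon]$ and good-model risk in $[\rmin,\,\rmin+\varepsilon]$. Once a single concrete realization of the survival times is fixed that forces the correct ranking, independence lets its probability factorize and the risk-range bounds deliver the claimed expression.
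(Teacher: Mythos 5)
Your proposal follows essentially the same route as the paper's proof: the same reduction of the ensemble inequality to $R'_{B+1} \leq \tfrac{1}{B}\sum_{j=1}^{B} R'_j$, the same sufficient event (top-$B$ survivors all in $\mathcal{R}_b$, the $(B{+}1)$-st in $\mathcal{R}_g$), the same geometric-survival and independence modeling, and the same ``immediate error vs.\ one-round survival'' realization, which is exactly the $x=1$ term of the series the paper obtains by conditioning on $s'_{B+1}=x$ and summing. As for your delicate point (iii), the paper does not resolve it more rigorously than you do --- it simply assigns every model in $\mathcal{R}_b$ risk $\rmin+3\varepsilon$ and every model in $\mathcal{R}_g$ risk $\rmin$, even though for the bad models required to err immediately this substitution bounds the corresponding factor from above rather than below --- so your instinct that this direction-of-inequality step is the fragile part of the argument is well placed.
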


Finally note that we can define an averaged model (weighted or unweighted) $\bar\bdw_B$. For any convex $\ell_h(\bdw; z)$ with risk $R_\mathcal{D}^h$, such as the hinge loss, Jensen's inequality gives 
$R_\mathcal{D}^h(\bar\bdw_{B+1}) \leq \bar R_\mathcal{D}(\bdw_{B+1})$.

\textbf{Learning bounds.} Now we turn to generalization bounds of our method,
when run to a fixed stopping time $T$.
We assume more generally that $\ell(\bdw; z)$ is convex, such as the hinge loss. Furthermore we suppose that the loss of any point in the training set is bounded by $C$.
This is a safe assumption for many passive-aggressive algorithms, where the input data is normalized and the update steps are not too large. 

\begin{thm} \label{thm:wrs_bound}
    Suppose $\ell$ is convex and bounded from above by $C$. By time $T$, suppose the reservoir contains $K_T$ models, with survival time of each at least $s_T$.
    Let $M_{wrs}=\sum_{t=1}^T \pi_t \ell(\bdw_{t-1}; z_t)$ be the cumulative loss of the WAT reservoir sequence,
    with $\pi_t$ formally defined in the proof.
    Then w.p. $1-\delta$,
    $$ R_\mathcal{D}(\bdw_{wrs}) \leq \frac{M_{wrs}}{K_T s_T} + \sqrt{\frac{2 C \log \Big(\frac{T}{\delta}\Big) M_{wrs}}{(K_T s_T)^2}} + \frac{7C \log \Big(\frac{T}{\delta}\Big)}{K_T s_T}$$
\end{thm}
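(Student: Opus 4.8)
The plan is to treat this as a weighted online-to-batch conversion, combining Jensen's inequality with a Freedman-type martingale tail bound in the style of \cite{cesa2008improved, dekel2008online}. Write $R_t = R_\mathcal{D}(\bdw_{t-1})$ and $\ell_t = \ell(\bdw_{t-1}; z_t)$, and set $n = K_T s_T$. First I would pin down the weighting: define $\pi_t$ to be supported on the survival windows of the $K_T$ reservoir models, normalized so that $\sum_t \pi_t = n$ and $\frac{1}{n}\sum_t \pi_t \bdw_{t-1} = \bdw_{wrs}$. This is consistent because each reservoir model is constant throughout its passive window, so a uniform weight over the first $s_T$ passive steps of each of the $K_T$ models reproduces the reservoir average exactly. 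Since $\ell$ is convex, Jensen's inequality then gives $R_\mathcal{D}(\bdw_{wrs}) \leq \frac{1}{n}\sum_t \pi_t R_t =: \bar R$, reducing the claim to a high-probability upper bound on the weighted average risk $\bar R$ in terms of $M_{wrs} = \sum_t \pi_t \ell_t$.

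The core is concentration of $S := \sum_t \pi_t (R_t - \ell_t) = n\bar R - M_{wrs}$. Because $\bdw_{t-1}$ is a function of $z_1^{t-1}$ while $z_t$ is independent of the past, $\mathbb{E}[\ell_t \mid z_1^{t-1}] = R_t$, so for a predictable $\pi_t$ these summands form a martingale difference sequence. I need two ingredients: (i) a bounded range, $\pi_t(R_t - \ell_t) \in [-C, C]$ since $\ell \in [0,C]$ and $\pi_t \leq 1$; and (ii) a conditional-variance bound, $\Var(\ell_t \mid z_1^{t-1}) \leq \mathbb{E}[\ell_t^2 \mid z_1^{t-1}] \leq C R_t$, so that the total conditional variance is at most $\sum_t \pi_t^2 C R_t \leq C\sum_t \pi_t R_t = C n \bar R$ (using $\pi_t^2 \leq \pi_t$). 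Feeding these into Freedman's inequality yields, with probability $1-\delta'$, $S \leq \sqrt{2 C n \bar R \log(1/\delta')} + O(C\log(1/\delta'))$.

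The resulting inequality $n\bar R - M_{wrs} \leq \sqrt{2Cn\bar R\log(1/\delta')} + O(C\log(1/\delta'))$ is self-bounding: viewing it as a quadratic in $\sqrt{n\bar R}$, solving, and then substituting the crude bound $n\bar R \leq M_{wrs} + O(C\log(1/\delta'))$ inside the square root gives $\bar R \leq \frac{M_{wrs}}{n} + \sqrt{\frac{2C\log(1/\delta')M_{wrs}}{n^2}} + \frac{7C\log(1/\delta')}{n}$, which is exactly the claimed form once $\bar R$ is relayed back to $R_\mathcal{D}(\bdw_{wrs})$ through Jensen. This is the routine part; the constant $7$ merely absorbs the lower-order terms produced by solving the quadratic.

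The main obstacle is that the WAT weighting $\pi_t$ is \emph{not} predictable: whether the hypothesis active at time $t$ survives $s_T$ further rounds and is retained in the reservoir depends on the future data $z_t,\ldots,z_{t+s_T-1}$ and on the internal WRS draws, so the martingale property can fail for the realized $\pi$. The fix, following the data-driven conversion of \cite{dekel2008online}, is to prove the Freedman bound simultaneously for every weighting in the admissible family that WAT can output, and then instantiate it at the realized one. Because the admissible weightings are indexed by at most $O(T)$ choices (the survival threshold and suffix structure), a union bound sets $\delta' = \delta/T$, which is precisely the source of the $\log(T/\delta)$ factor. I expect the bulk of the careful work to lie in specifying this admissible family and verifying that its cardinality is $O(T)$ rather than exponential in $T$, so that the union bound remains affordable; the concentration and algebra above then go through essentially verbatim.
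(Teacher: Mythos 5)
Your skeleton (Jensen on the convex loss, then a Freedman-type bound on the martingale $\sum_t \pi_t(R_t - \ell_t)$, then the self-bounding quadratic step) is exactly the paper's skeleton, and you correctly identify the crux: the realized WAT weights $\pi_t$ are not predictable, since membership in the reservoir depends on future survival. The gap is in your fix. The family of weightings that WAT can output is not of size $O(T)$: in your own construction a weighting is determined by the start times of the $K_T$ reservoir windows together with the length $s_T$, which gives on the order of $\binom{T}{K_T}\cdot T = T^{\Theta(K_T)}$ candidates, and no obvious coarsening reduces this to $O(T)$ (the reservoir is an essentially arbitrary collection of disjoint windows, further randomized by the WRS draws). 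With $\delta' = \delta/|\Pi|$ the union bound then charges $\log(|\Pi|/\delta) \approx (K_T+1)\log(T/\delta)$, so your final inequality degrades to a last term of order $C K_T \log(T/\delta)/(K_T s_T) = C\log(T/\delta)/s_T$ and a middle term inflated by $\sqrt{K_T}$. This loses precisely the $1/K_T$ improvement that the ensemble is supposed to buy, so the claimed bound is not recovered by this route.

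The paper avoids any union bound over weightings by changing the weighting scheme so that $\pi_t$ \emph{is} predictable. It analyzes a threshold variant of the reservoir: a model is credited nothing while its observed survival is below a threshold $s$; at the step where its survival first reaches $s$ (an event measurable with respect to $z_1^{t-1}$) it receives a lump-sum weight of $s$, and it receives weight $1$ for every subsequent passive step. The total weight of any model still equals its survival time, and $\sum_t \pi_t \geq K_T s_T$, but now every weight assignment uses only past information, so $U_t = \mathbb{E}[\pi_t \ell(\bdw_{t-1};z_t)\mid z_1^{t-1}] = \pi_t R_\mathcal{D}(\bdw_{t-1})$ holds and the time-uniform Freedman inequality of Dekel et al.\ applies directly; the $\log(T/\delta)$ there comes from uniformity over the $T$ time steps, not from enumerating reservoirs. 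If you want to salvage your argument, the lump-sum trick is the missing idea: rather than proving the bound for every reservoir WAT might output, redefine the per-step weights so that the realized reservoir's weighting is itself a predictable sequence.
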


The result shows that the risk of the ensemble model is stochastically bounded by the cumulative loss of the online procedure.
By applying known regret bounds for the underlying online algorithms (e.g. PAC or FSOL), we can further bound the risk in terms of the original learner, and subsequently the optimal risk.
In particular, we can ``abstract out'' the actual online learning method in the proof,
as a generic regret bound of form $ \frac{M_T}{T} \leq \frac{1}{T}\sum_{t=1}^T \ell(\bdw; z_t) + \frac{r(T)}{T}$, for any $\bdw$.
Depending on the algorithm often $r(T) = O(\sqrt{T})$ or even $O(1)$.

With this in hand, we can show that the risk actually approaches the minimal risk. One additional assumption is required: $M_{wrs}/\sum_{t=1}^T \pi_t \leq \frac{1}{T} \sum_{t} \ell(\bdw_{t-1}; z_t) = \frac{M_T}{T}$ (the WRS sequence has lower cumulative loss than the original learner). By definition, the reservoir contains the models which have longest survivals, and hence lowest regret density. So this statement is readily satisfied, as long as the distributions of non-zero losses are not different among the two sequences. In fact, under certain conditions, the inequality is likely strict which further improves the bound. Altogether we obtain:

\begin{thm}  \label{thm:risk_bound}
   Given a PA algorithm, let $M_T$ be its cumulative loss, and $r(T)$ be the algorithm-specific excess regret. Then with probability $1 - \delta$, the deviation in risk of our WRS model $\bdw_{wrs}$ from the optimal model $\bdw^*$ is bounded as
    $$
R_\mathcal{D}(\bdw_{wrs}) \leq R_\mathcal{D}(\bdw^*) + \frac{r(T)}{T} + \sqrt{\frac{2 C \log \Big(\frac{2T}{\delta}\Big) M_T}{T K_T s_T}} + \frac{7C \log \Big(\frac{2T}{\delta}\Big)}{K_T s_T} + C \sqrt{\frac{\log \Big(\frac{2}{\delta}\Big)}{2T}}
$$
\end{thm}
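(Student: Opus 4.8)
The plan is to chain Theorem~\ref{thm:wrs_bound} with the two facts supplied in the text (the regret bound and the cumulative-loss assumption) and one concentration inequality, while managing a two-way union bound. I would not start from the displayed form of Theorem~\ref{thm:wrs_bound} with $K_T s_T$ already in every denominator; instead I would work from the sharper intermediate form established inside its proof, which keeps the effective sample size in its exact shape $\sum_{t=1}^T \pi_t$, namely $R_\mathcal{D}(\bdw_{wrs}) \le \frac{M_{wrs}}{\sum_t \pi_t} + \sqrt{\frac{2C\log(T/\delta')M_{wrs}}{(\sum_t \pi_t)^2}} + \frac{7C\log(T/\delta')}{\sum_t \pi_t}$, valid with probability $1-\delta'$. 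Keeping $\sum_t \pi_t$ explicit is exactly what makes the subsequent substitutions land on the correct denominators.

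Next I would apply the two given facts in a carefully chosen order. The assumption $M_{wrs}/\sum_t \pi_t \le M_T/T$ applies \emph{directly} to the leading term, collapsing it to $M_T/T$. For the square-root term I would first substitute $M_{wrs} \le (M_T/T)\sum_t\pi_t$, which cancels one power of $\sum_t\pi_t$ and leaves $\frac{M_T}{T\sum_t\pi_t}$ under the root, and only then use $\sum_t\pi_t \ge K_T s_T$ on the remaining factor to obtain $\sqrt{\frac{2C\log(T/\delta')M_T}{T K_T s_T}}$. The $7C$ term needs only the same lower bound $\sum_t\pi_t \ge K_T s_T$. At this stage the bound reads $R_\mathcal{D}(\bdw_{wrs}) \le \frac{M_T}{T} + \sqrt{\frac{2C\log(T/\delta')M_T}{T K_T s_T}} + \frac{7C\log(T/\delta')}{K_T s_T}$.

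I would then convert $M_T/T$ into the target quantity. Instantiating the generic regret bound $\frac{M_T}{T} \le \frac{1}{T}\sum_t \ell(\bdw^*;z_t) + \frac{r(T)}{T}$ at the population risk minimizer $\bdw^* = \argmin_{\bdw} R_\mathcal{D}(\bdw)$ introduces the $r(T)/T$ term and the empirical loss of $\bdw^*$. Because $\bdw^*$ is determined by $\mathcal{D}$ alone and is therefore independent of the sample, the summands $\ell(\bdw^*;z_t)\in[0,C]$ are i.i.d., so Hoeffding's inequality yields $\frac{1}{T}\sum_t \ell(\bdw^*;z_t) \le R_\mathcal{D}(\bdw^*) + C\sqrt{\log(1/\delta'')/(2T)}$ with probability $1-\delta''$, which supplies the final term. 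Choosing $\delta'=\delta''=\delta/2$ and union-bounding the Theorem~\ref{thm:wrs_bound} event against the Hoeffding event turns $\log(T/\delta')$ into $\log(2T/\delta)$ and $\log(1/\delta'')$ into $\log(2/\delta)$, reproducing the claimed inequality.

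The main obstacle is bookkeeping rather than any deep estimate: the substitutions into the square-root term must be sequenced so that the assumption cancels one factor of $\sum_t\pi_t$ \emph{before} the $K_T s_T$ lower bound is invoked, and the leading term must use the assumption directly instead of the looser $M_{wrs}/(K_T s_T)$; otherwise the denominators fail to match the stated result. The only genuinely statistical ingredient is the Hoeffding step, and it stays clean precisely because $\bdw^*$ is the data-independent population minimizer. If one instead wished to compare against the empirical-loss minimizer, a uniform-convergence argument over the hypothesis class would be required, which I would sidestep by fixing $\bdw^*$ at the outset.
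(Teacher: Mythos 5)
Your proposal is correct and follows essentially the same route as the paper's proof: starting from the penultimate form of Theorem~\ref{thm:wrs_bound} with $\sum_t \pi_t$ explicit, applying the assumption $M_{wrs}/\sum_t\pi_t \leq M_T/T$ to collapse the leading term and cancel one factor of $\sum_t\pi_t$ under the root, then invoking the generic regret bound at $\bdw^*$, Hoeffding for the empirical loss of $\bdw^*$, a union bound with $\delta/2$ in each event, and finally the substitution $\sum_t\pi_t \geq K_T s_T$. Your version is, if anything, slightly more explicit about the sequencing of substitutions and the data-independence of $\bdw^*$ that justifies the Hoeffding step, but the argument is the same.
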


\begin{table}[!h]
    \caption{Sizes, dimensions, and sparsities of all datasets used for numerical experiments.}
    \centering
    \begin{adjustbox}{width=1\textwidth}
    \small
    \begin{tabular}{l|rrrrr}
    \toprule
    \textbf{Dataset} & $\mathbf{D}$ & $\mathbf{N}$ & $\mathbf{N_{train}}$ & $\mathbf{N_{test}}$ & \textbf{Sparsity} \\
    \midrule
    \href{https://www.kaggle.com/c/avazu-ctr-prediction/data}{Avazu (App)} \cite{misc_avazu} (via \href{https://www.csie.ntu.edu.tw/~cjlin/libsvmtools/datasets/multiclass.html#mnist8m}{LIBSVM}) & 1000000 & 14596137 & 10217295 & 4378842 & 0.999985 \\
    \href{https://www.kaggle.com/c/avazu-ctr-prediction/data}{Avazu (Site)} \cite{misc_avazu} (via \href{https://www.csie.ntu.edu.tw/~cjlin/libsvmtools/datasets/multiclass.html#mnist8m}{LIBSVM}) & 1000000 & 25832830 & 18082981 & 7749849 & 0.999985 \\
    \href{https://ailab.criteo.com/ressources/}{Criteo} (via \href{https://www.csie.ntu.edu.tw/~cjlin/libsvmtools/datasets/multiclass.html#mnist8m}{LIBSVM}) & 1000000 & 51882752 & 36317926 & 15564826 & 0.999961 \\
    Dexter \cite{misc_dexter_168} & 20000 & 600 & 420 & 180 & 0.995319 \\
    Dorothea \cite{misc_dorothea_169} & 100000 & 1150 & 805 & 345 & 0.990909 \\
    KDD2010 (Algebra) \cite{misc_kdd} (via \href{https://www.csie.ntu.edu.tw/~cjlin/libsvmtools/datasets/multiclass.html#mnist8m}{LIBSVM}) & 20216830 & 8918054 & 6242637 & 2675417 & 0.999998 \\
    MNIST8 (4+9) \cite{liang2021screening, loosli-canu-bottou-2006} (via \href{https://www.csie.ntu.edu.tw/~cjlin/libsvmtools/datasets/multiclass.html#mnist8m}{LIBSVM}) & 784 & 1591785 & 1114249 & 477536 & 0.757170 \\
    News20 \cite{keerthi2005modified} (via \href{https://www.csie.ntu.edu.tw/~cjlin/libsvmtools/datasets/multiclass.html#mnist8m}{LIBSVM}) & 1355191 & 19954 & 13967 & 5987 & 0.999664 \\
    Newsgroups (Binary, CS) \cite{Lang95, scikit-learn} (via \href{https://scikit-learn.org/stable/modules/generated/sklearn.datasets.fetch_20newsgroups_vectorized.html}{sklearn}) & 101631 & 18311 & 12817 & 5494 & 0.999049 \\
    PCMAC \cite{li2018feature} & 3289 & 1943 & 1360 & 583 & 0.985418 \\
    RCV1 \cite{lewis2004rcv1} (via \href{https://www.csie.ntu.edu.tw/~cjlin/libsvmtools/datasets/multiclass.html#mnist8m}{LIBSVM}) & 47236 & 697641 & 488348 & 209293 & 0.998451 \\
    Real-Sim (\href{https://people.cs.umass.edu/~mccallum/data.html}{McCallum} via \href{https://www.csie.ntu.edu.tw/~cjlin/libsvmtools/datasets/multiclass.html#mnist8m}{LIBSVM})& 20958 & 72201 & 50540 & 21661 & 0.997549 \\
    SST-2 \cite{socher2013recursive} & 13757 & 67337 & 47135 & 20202 & 0.999421 \\
    URL \cite{ma2009identifying} (via \href{https://www.csie.ntu.edu.tw/~cjlin/libsvmtools/datasets/multiclass.html#mnist8m}{LIBSVM}) & 3231961 & 2396130 & 1677291 & 718839 & 0.999964 \\
    W8A \cite{platt1998fast} (via \href{https://www.csie.ntu.edu.tw/~cjlin/libsvmtools/datasets/multiclass.html#mnist8m}{LIBSVM}) & 300 & 59245 & 41471 & 17774 & 0.957585 \\
    Webspam \cite{webb2006introducing} (via \href{https://www.csie.ntu.edu.tw/~cjlin/libsvmtools/datasets/multiclass.html#mnist8m}{LIBSVM}) & 254 & 350000 & 244999 & 105001 & 0.664833 \\
    \bottomrule
    \end{tabular}
    \end{adjustbox}
    \label{table:datasets}
\end{table}

\section{Numerical experiments}
\label{numerical-experiments}

We combine WAT with base PAC and FSOL, forming two new methods PAC-WRS and FSOL-WRS\footnote{Code available at \url{https://github.com/FutureComputing4AI/Weighted-Reservoir-Sampling-Augmented-Training}}.
We evaluate their performances across 16 binary classification datasets listed in Table \ref{table:datasets}. Please see Appendix \ref{methods-details-appendix-pac} for more dataset details.
We are interested in three metrics: 
    \textbf{1) Final test accuracy:} proportion of test set data points correctly-classified by solution obtained after making one pass through the training data.
    \textbf{2) Final sparsity:} proportion of zeroes in our classification solution obtained after making one pass through the training data.
    \textbf{3) Relative oracle performance (ROP):} let $p_{t,\text{base}}$ be the test accuracy of our base model (either PAC or FSOL) at time $t$ and $p_{t}$ be the test accuracy of our model of interest (either base PAC, base FSOL, PAC-WRS, or FSOL-WRS). Then, define
    $p_{t,\text{oracle}} = \max_{\text{base}}  p_{t,\text{base}}$
    to be the cumulative maximum test accuracy of the base method at time $t$. In other words, $p_{t,\text{oracle}}$ represents the highest performance we could obtain if we had an oracle telling us which candidate solution vector we encountered was best.
    Then, define $\text{ROP} = \frac{1}{T}\sum_{t=1}^\top (p_{t,\text{oracle}} - p_{t})$. Intuitively, if a method has very stable test accuracy over time with minimal fluctuations, then ROP should be close to $0$, or negative (i.e. achieving a higher test accuracy than the oracle). In contrast, a large, positive ROP suggests fluctuations in test accuracy.

Because we are interested in the stability of WAT's test-set accuracy over the entirety of the training run, it is most illustrative to look at figures when possible. Representative figures will be shown in the main paper, with all figures for all datasets in the appendix. 
When running any algorithm (PAC, FSOL, PAC-WRS, or FSOL-WRS) on any dataset, we perform a random 70/30 train-test split. We begin by tuning the $C_{err}$, $\eta$, and $\lambda$ hyperparameters for base PAC and FSOL, with details located in Appendix \ref{methods-details-appendix-pac}. For PAC-WRS and FSOL-WRS, we use the hyperparameters for the corresponding base models, but try all possible WAT variants of weighting scheme (standard or exponential), averaging scheme (simple vs. weighted), voting-based zeroing (True or False), and reservoir size $K \in \{ 1, 4, 16, 64 \}$. We perform five trials for each PAC-WRS and FSOL-WRS variant with randomized shuffles of the training and test data, running through the training data only once for each trial. All candidate solutions are initialized as the $\bdzero$ vector. Please see more experimental details in Appendix \ref{methods-details-appendix-pac}.

\subsection{Stabilizing test performance}

Figure \ref{fig:exhibition} shows visually how PAC-WRS and FSOL-WRS are highly-effective at stabilizing base PAC and FSOL's wildly-fluctuating test accuracy on Avazu (App) and News20. Corresponding figures for all 16 datasets and PAC-WRS/FSOL-WRS variants can be found in Appendix \ref{over-time-appendix}. From Figure \ref{fig:exhibition-ROP}, we see that PAC-WRS is highly-effective at reducing ROP compared to base PAC, and that \textbf{the larger the reservoir size $K$, the more stable the resultant test accuracy for $\bdw_{\text{WRS}}$ becomes.} Furthermore, looking more carefully at MNIST8 (4+9), we observe that many PAC-WRS variants were able to achieve \textit{negative} ROP values, suggesting that $\bdw_{\text{WRS}}$ could achieve \textit{higher test accuracies} than even the oracle. Corresponding figures for all 16 datasets can be found in Appendix \ref{errorbars-appendix}. 

\begin{figure}[t]
     \centering
     \includegraphics[width=0.98\textwidth, height=0.28\textwidth]{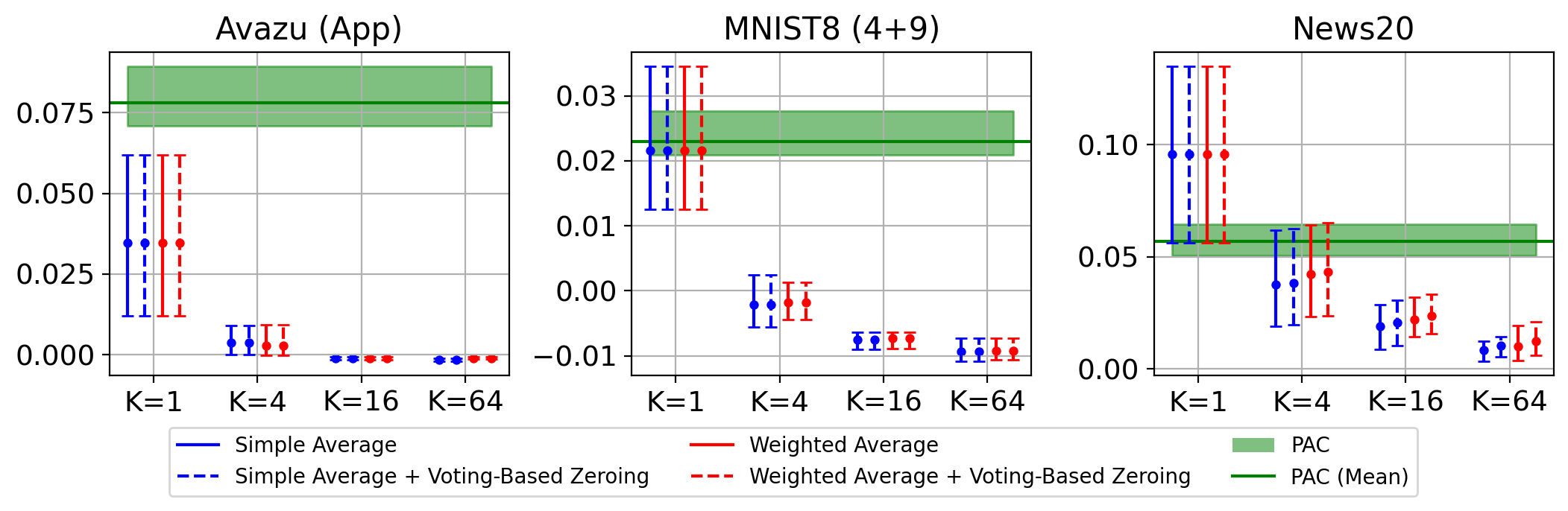}
     \caption{ Relative oracle performances  ($y$-axis) of base PAC and PAC-WRS using standard weights over reservoir sizes $K$ ($x$-axis) on 3 representative datasets. Error bars represent the minimum and maximum values achieved across 5 randomized trials. \textbf{Blue:} WRS-augmented variants via simple average of reservoir members. \textbf{Red:} WRS-augmented variants via weighted average of reservoir members. \textbf{Dotted lines:} indicates voting-based zeroing was performed for additional sparsity. \textbf{Lower values indicate more stable performance.} \normalsize}
     \label{fig:exhibition-ROP}
\end{figure}

\begin{wraptable}[10]{r}{0.5\textwidth}
\vspace{-10pt}
\caption{ Numbers of datasets out of 16 where each PAC-WRS or FSOL-WRS variant with $K=64$ outperformed its corresponding base method (PAC or FSOL), as measured by ROP averaged across 5 randomized trials.\normalsize}
\label{table:num-WRS-win}
\centering
\begin{adjustbox}{width=0.5\textwidth}
\begin{tabular}{c|cc|cc}
  & \multicolumn{2}{c|}{\textbf{Simple Average}} & \multicolumn{2}{c}{\textbf{Weighted Average}} \\
  \midrule
  & Standard & Exp. & Standard & Exponential \\
  \midrule
  \textbf{PAC} & 13 & 14 & 13 & 11 \\
  \textbf{FSOL} & 16 & 13 & 16 & 11 \\
  \hline
\end{tabular}
\end{adjustbox}
\end{wraptable}

From Table \ref{table:num-WRS-win}, we observe that FSOL-WRS with standard weights and simple averaging \textbf{successfully stabilized test accuracy in all 16 tested datasets compared to base FSOL}, as measured via ROP. 

PAC-WRS with exponential weights and simple averaging successfully stabilized test accuracy in 14 of 16 tested datasets compared to base PAC. Following the best practices in \cite{JMLR:v17:benavoli16a, JMLR:v7:demsar06a}, we perform Wilcoxon signed-rank tests for statistical significance on the differences in ROP between PAC/FSOL-WRS versus base PAC and FSOL, taking into account performance on all 16 datasets. 
At a significance level of $\alpha = 0.05$, we find that 
both PAC/FSOL-WRS achieve statistically-significant reductions in ROP compared to their corresponding base models when equipped with standard weights ($p < 0.0386$ in each case, see Table \ref{table:wilcoxon-ROP-WRS} in Appendix \ref{wilcoxon-appendix}). 

These results also suggest that standard weights are preferable to exponential weights. This makes sense because using exponential weights may be too ``greedy," causing the algorithm to overly trust in the number of passive steps as an indicator of test performance,
polluting the reservoir with poor ``lucky" candidate solutions,
and refusing to remove them later.

\textbf{A note on final test accuracy.} From Figure \ref{fig:exhibition-final-test-acc}, we see that
the final test accuracies achieved by FSOL-WRS are not only higher on average than base FSOL, but also have significantly lower variance. On these datasets, there is no significant difference between FSOL-WRS variants with or without voting-based zeroing. As expected, final test accuracy seems to be 
higher for larger values of reservoir size $K$. However,
Wilcoxon signed-rank tests to compare PAC-WRS and FSOL-WRS's improvements in final test accuracy over base PAC and FSOL
indicate that
FSOL-WRS with standard weights yields a statistically-significant improvement in final test accuracy compared to base FSOL, but PAC-WRS does not compared to base PAC (see Table \ref{table:final-test-acc-WRS-wilcoxon} in Appendix \ref{wilcoxon-appendix}). 
One hypothesis for this discrepancy is that it is relatively unlikely for any given training data point to cause a massive drop in test accuracy. Furthermore, subsequent data points will usually help the base model self-correct (see Appendix \ref{over-time-appendix}). Thus, this explains why the base method's mean final test accuracy after a particular fixed timestep (e.g., $N_{\text{train}}$) will not differ significantly from that of the WRS-augmented method, especially not across only 5 trials.

\begin{figure}[t]
     \centering
     \includegraphics[width=0.98\textwidth, height=0.28\textwidth]{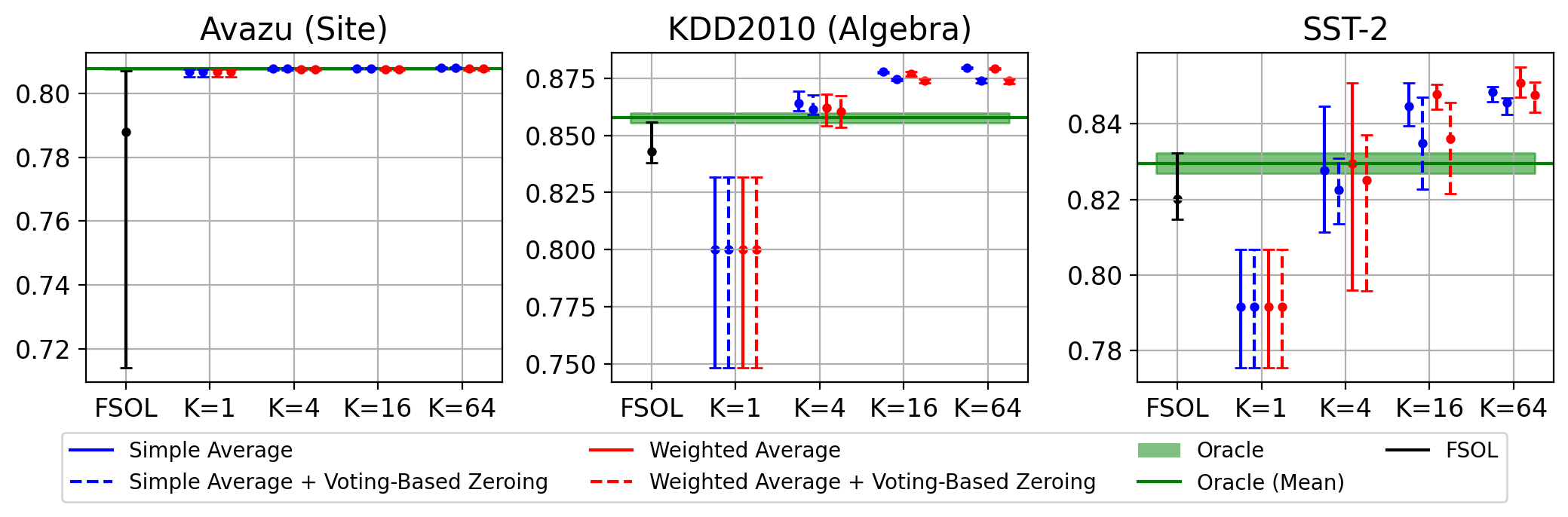}
     \caption{ Final test accuracies ($y$-axis) of base FSOL and FSOL-WRS using standard weights over reservoir sizes $K$ ($x$-axis) on 3 representative datasets. Error bars represent the minimum and maximum values achieved across 5 randomized trials. See Figure \ref{fig:exhibition-ROP} for legend description.\normalsize}
     \label{fig:exhibition-final-test-acc}
\end{figure}

Nonetheless, there are enough data points in the training stream that these massive fluctuations in test accuracy could still happen thousands of times throughout the training process, corroborating
what we saw in Figure~\ref{fig:exhibition}.
If such a drop in test accuracy were to occur 
at an unlucky timestep when the model training is stopped, 
the consequences could be unacceptable.
Furthermore, in continuously updated ``any-time'' environments,
it is hard to assess if one of these drops has occurred. As such,
WAT is valuable as a simple and effective way of preventing such fluctuations in test accuracy.

\begin{figure}[h!]
     \centering
     \includegraphics[width=0.98\textwidth, height=0.28\textwidth]{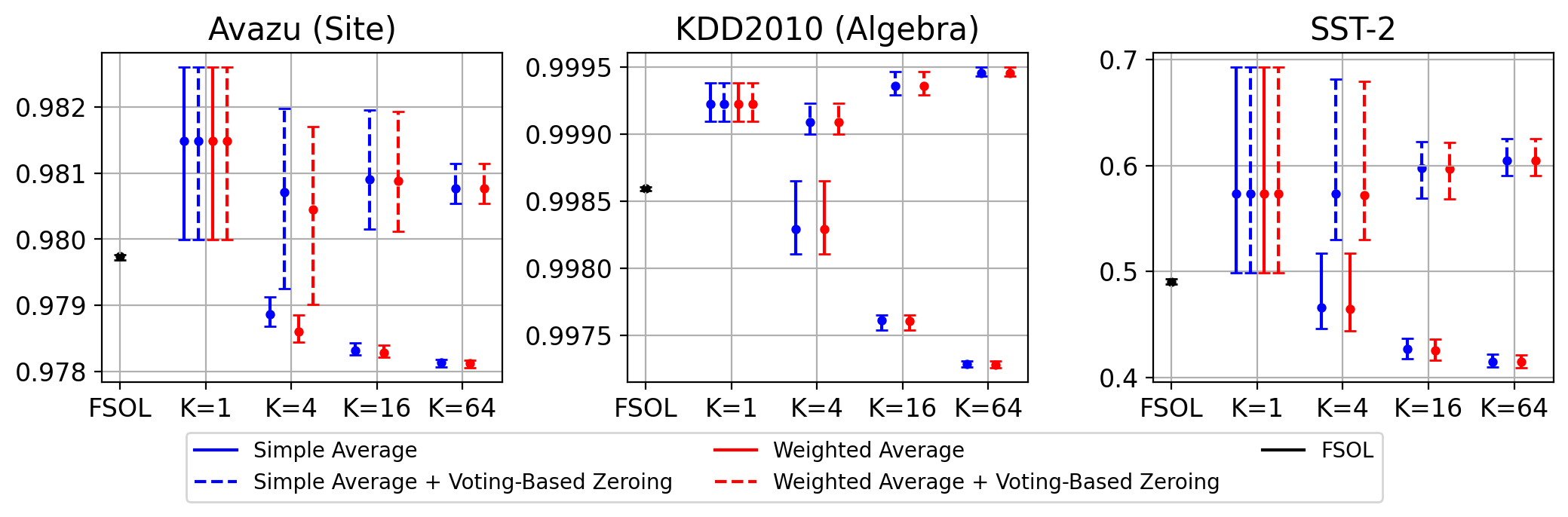}
     \caption{ Final sparsities ($y$-axis) of base FSOL and FSOL-WRS using standard weights over reservoir sizes $K$ ($x$-axis) on 3 representative datasets. Error bars represent the minimum and maximum values achieved across 5 randomized trials. See Figure \ref{fig:exhibition-ROP} for legend description. \normalsize}
     \label{fig:exhibition-sparsity}
\end{figure}

\begin{wraptable}[8]{r}{0.45\textwidth}
\vspace{-28pt}
\caption{ Numbers of datasets out of 16 where each top-$64$ PAC or FSOL variant outperformed its corresponding base method (PAC or FSOL), as measured by relative oracle performance averaged across 5 trials.\normalsize}
\label{table:top-K=64-ROP}
\centering
\begin{adjustbox}{width=0.45\textwidth}
\begin{tabular}{c|cc}
  \midrule
  & Simple Average & Weighted Average \\
  \midrule
  \textbf{PAC} & 10 & 12\\
  \textbf{FSOL} & 15 & 15\\
  \hline
\end{tabular}
\end{adjustbox}
\end{wraptable}

\textbf{Preservation of sparsity.} Finally, we aim to retain the favorable weight sparsity from FSOL. From Figure \ref{fig:exhibition-sparsity}, we observe that even with reservoir size $K=64$, the final sparsities achieved by FSOL-WRS are quite similar to that of base FSOL. Furthermore, we observe that with voting-based zeroing, FSOL-WRS often achieves even higher sparsity than the base model. Please see Appendix \ref{errorbars-appendix} for additional errorbar-type plots and Appendix \ref{over-time-appendix} for sparsity over timestep. One explanation for why even FSOL-WRS variants \textit{without} voting-based zeroing can achieve similar sparsity to base FSOL is that the final reservoir will likely contain candidate solutions from significantly earlier timesteps, when weights were sparser due to fewer aggressive updates.
Thus, WAT can maintain sparsity despite the use of weight averaging.
\subsection{Ablations and extensions}

\textbf{Top-$K$ ablation.} A natural question that one might ask is --- instead of probabilistically sampling candidate solutions, what if we deterministically picked the top-$K$ candidate solutions with the largest subsequent numbers of passive steps and took their simple and/or weighted average? Setting $K=64$, we see from Table \ref{table:top-K=64-ROP} that the best top-$64$ PAC/FSOL variants are not as effective as stabilizing test accuracy as the best PAC/FSOL-WRS variants, as shown in Table \ref{table:num-WRS-win} and measured via ROP. Furthermore, Wilcoxon signed-rank test 
$p$-values from Table \ref{table:p-val-top-K} also suggest that this top-$64$ ablation cannot produce as statistically-significant increases in test accuracy stability compared to WAT, as measured via ROP. Like WAT, top-$K$ is also ineffective at producing statistically-significant increases in final test accuracy on PAC. On FSOL, top-$K$ can produce statistically significant increases in final test accuracy, but these $p$-values are over an order of magnitude larger than FSOL-WRS's (see Table \ref{table:final-test-acc-WRS-wilcoxon} in Appendix \ref{wilcoxon-appendix}). As such, WAT is still the best for stabilizing test accuracy.

\begin{table}[!h]
\caption{ Wilcoxon signed-rank test $p$-values testing whether differences in \textbf{relative oracle performance} and \textbf{final test accuracy} between top-$64$ PAC/FSOL variants and base PAC/FSOL are statistically significant. \normalsize}
\label{table:p-val-top-K}
\centering
\begin{adjustbox}{width=0.75\textwidth}
\begin{tabular}{c|cc|cc}
  & \multicolumn{2}{c|}{\textbf{Relative Oracle Performance}} & \multicolumn{2}{c}{\textbf{Final Test Accuracy}} \\
  \midrule
  & Simple Average & Weighted Average & Simple Average & Weighted Average \\
  \midrule
  \textbf{PAC} & 0.162 & 0.0298 & 0.214 & 0.255 \\
  \textbf{FSOL} & 0.00270 & 0.00270 & 0.0130 & 0.0130 \\
  \hline
\end{tabular}
\end{adjustbox}
\end{table}

\textbf{Additional ablations and comparisons.} For the interested reader, in Appendix \ref{other_averaging_schemes} we include empirical performance comparisons of WRS-Augmented Training against two traditional ensembling mechanisms: moving average (e.g., averaging the most-recently-observed $K=64$ weight vectors at each timestep) and exponential average (e.g., forming an ensemble vector $\bar{\bdw}_t = \gamma \bdw_t + (1-\gamma) \bar{\bdw}_{t-1}$ at each timestep), where $\bdw_t$ is the base algorithm's candidate solution at timestep $t$. In short, especially in more real-world, large-scale settings where evaluation and checking are prohibitively expensive, WRS-Augmented Training is the fastest, most accurate, and most reliable method compared to all the aforementioned baselines.

\textbf{Modifying WAT for non-passive aggressive methods.} While the main theoretical and empirical results in this paper were primarily oriented towards passive-aggressive base models, in Appendix \ref{non-passive-aggressive}, we include empirical simulations of applying a modified form of WRS-Augmented Training on top of three non-passive-aggressive online learning methods: Stochastic Gradient Descent with Momentum \cite{sutskever2013importance}, ADAGRAD \cite{duchi2011adaptive}, and Truncated Gradient Descent \cite{langford2009sparse}. In general, our modified WRS-Augmented Training effectively mitigates test accuracy when it exists, and does minimal harm when it does not.

\section{Conclusion, limitations, and future work}
\label{conclusion+future-work}

In this paper, we introduced WRS-Augmented Training (WAT), a procedure that can be used to stabilize any passive-aggressive online learning algorithm, neither requiring a hold-out evaluation set nor additional passes over the training data. We applied WAT to base PAC and FSOL, demonstrating across 16 datasets that WAT is highly effective at mitigating the massive fluctuations in test accuracy between timesteps that affect many online learning algorithms. WAT runs at minimal cost, with only a fixed $K$ multiple on memory for multiple weight vectors to be saved. 

One limitation of this work is that WAT implicitly assumes that the training and test data come from fixed distributions that do not change over time. However, some data distributions will evolve over time. As such, a candidate solution that entered the reservoir early on due to having a large number of subsequent passive steps
might not actually retain its performance as the data distribution evolves over time. While they may eventually be replaced, non-IID adaptions may be useful in the future. 

\bibliographystyle{ACM-Reference-Format}
\bibliography{references}

\newpage

\appendix
\section{Proofs}
\label{appendix-proofs}
\subsection{Validity analysis}

\textbf{Proof of Theorem~\ref{thm:top1_bound}}. The main statement involves two independent bounds. See the proofs for Theorem~\ref{thm:top1_union} and Theorem~\ref{thm:top1_direct}.

\begin{thm} \label{thm:top1_union}
    Let $\bdw^{(1)}, \ldots, \bdw^{(k)} \in \mathcal{R}$ be the updated outputs of an online PA algorithm on inputs $z_1^T \sim \mathcal{D}$.
    Define $\bdw_s$ and $\tilde{\bdw}$ as previously.
    Also define $\rmin$ as the minimal achievable risk of any model, such that $\rmin \leq R_\mathcal{D}(\tilde{\bdw})$ almost surely.

    Then $$P(R_\mathcal{D}(\bdw_s) > R_\mathcal{D}(\tilde{\bdw}) + \varepsilon ) \leq K \frac{\rmin}{ 2\rmin + \varepsilon - \rmin (\rmin + \varepsilon) } $$
\end{thm}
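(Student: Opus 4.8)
The plan is to exploit the i.i.d.\ assumption to model each survival time as a geometric random variable, and then reduce the event to a pairwise survival comparison that admits a closed-form bound. First I would condition on a fixed hypothesis $\bdw$ with risk $R_\mathcal{D}(\bdw) = r$. Because the stream is i.i.d.\ and $\bdw$ is unchanged while it survives, each subsequent point is misclassified independently with probability $r$, so the survival time $s_{\bdw}$ is geometric with $P(s_{\bdw} \ge k) = (1-r)^k$ for every $k \ge 0$. This is the one modeling step I would state carefully, since the reservoir models are produced adaptively from the same stream; the cleanest justification measures each model's survival over the fresh samples that follow its creation, so that conditioned on the realized hypotheses the survival counts behave as independent geometrics governed by the respective risks.

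Next I would reduce the bad event. Since $\bdw_s$ is the \emph{longest}-surviving member of $\mathcal{R}$, on $\{R_\mathcal{D}(\bdw_s) > R_\mathcal{D}(\tilde{\bdw}) + \varepsilon\}$ there must be some $\bdw^{(j)} \in \mathcal{R}$ whose risk exceeds $R_\mathcal{D}(\tilde{\bdw}) + \varepsilon \ge \rmin + \varepsilon$ yet whose survival is at least that of the minimal-risk hypothesis $\tilde{\bdw}$ (namely $\bdw^{(j)} = \bdw_s$, which out-survives everyone including $\tilde{\bdw}$). Replacing $R_\mathcal{D}(\tilde{\bdw})$ by its lower bound $\rmin$ only enlarges this event, and a union bound over the at most $K$ models gives $P(R_\mathcal{D}(\bdw_s) > R_\mathcal{D}(\tilde{\bdw})+\varepsilon) \le \sum_{j=1}^{K} P\big(s_{\bdw^{(j)}} \ge s_{\tilde{\bdw}},\ R_\mathcal{D}(\bdw^{(j)}) > \rmin + \varepsilon\big)$, with the reference $\tilde{\bdw}$ taken at risk $\rmin$.

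The core computation is the comparison of two independent geometrics. For a bad model of risk $r_b$ against the reference of risk $\rmin$, summing over the reference's survival value gives $P(s_b \ge s_{\tilde{\bdw}}) = \sum_{k \ge 0} (1-\rmin)^k \rmin (1-r_b)^k = \frac{\rmin}{\rmin + r_b - \rmin r_b}$ by a geometric series. This expression is monotonically decreasing in $r_b$, so across all bad models (which satisfy $r_b > \rmin + \varepsilon$) it is maximized as $r_b \to \rmin + \varepsilon$, yielding the per-term bound $\frac{\rmin}{2\rmin + \varepsilon - \rmin(\rmin + \varepsilon)}$. Summing the at most $K$ identical upper bounds produces the claimed $K \cdot \frac{\rmin}{2\rmin + \varepsilon - \rmin(\rmin+\varepsilon)}$.

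I expect the main obstacle to be rigor in the probabilistic model rather than the algebra: arguing that conditioning on the realized hypotheses renders the survival counts independent geometrics with the stated parameters, despite the hypotheses and their survival windows being selected adaptively by the PA algorithm from a shared stream. A secondary care point is confirming that the two reductions move in the correct direction: enlarging the event by substituting $\rmin$ for $R_\mathcal{D}(\tilde{\bdw})$, and using the monotonicity in $r_b$ so that the worst case is exactly $r_b = \rmin + \varepsilon$. Once those are settled, the geometric-series evaluation and the final union bound are routine.
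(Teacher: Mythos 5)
Your proposal follows essentially the same route as the paper's own proof: your geometric modeling of survival times is the paper's Lemma~\ref{lem:same_dist}, your reduction of the bad event to ``some bad-risk reservoir model outsurvives $\tilde{\bdw}$'' followed by a union bound matches the paper's decomposition via the set $\mathcal{R}_\varepsilon$ (including the conditioning on realized hypotheses to handle adaptivity), your closed-form pairwise comparison $P(X \geq Y) = q/(p+q-pq)$ is the paper's Lemma~\ref{lem:geom}, and your final worst-case substitution $(p,q) \to (\rmin+\varepsilon,\, \rmin)$ is exactly the paper's concluding inequality. The one step you flag as needing verification --- that taking the reference survival at risk $\rmin$ rather than $R_\mathcal{D}(\tilde{\bdw})$ moves the bound in the correct direction --- is likewise asserted rather than argued in the paper's proof, so your attempt matches the paper in both structure and level of rigor.
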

\begin{proof}
    First note that while $R_\mathcal{D}(\bdw)$ is a constant value for any fixed $\bdw$,
$R_\mathcal{D}(\bdw_s)$ and $R_\mathcal{D}(\tilde{\bdw})$ are random variables induced by the distribution of $Z_1^T$.
The sequence of observed samples affects both the algorithm's output $\bdw$ as well as the subsequent survival of $\bdw$.

Next, we define the event of interest $A = \{ R_\mathcal{D}(\bdw_s) > R_\mathcal{D}(\tilde{\bdw}) + \varepsilon \}$.
For a given realization of $Z_1^T$ let the set $\mathcal{R}_\varepsilon = \{ \bdw \in \mathcal{R} : R_\mathcal{D}(\bdw) > R_\mathcal{D}(\tilde{\bdw}) + \varepsilon \}$. Then
\begin{align*}
    A &\subseteq \{\exists \bdw \in \mathcal{R} : (s_\bdw \geq s_{\tilde{\bdw}}) \cap (R_\mathcal{D}(\bdw) > R_\mathcal{D}(\tilde{\bdw}) + \varepsilon)\} \\
      &= \{ \exists \bdw \in \mathcal{R}_\varepsilon : s_\bdw \geq s_{\tilde{\bdw}} \}
\end{align*}
We aim to split this into separate events and apply a union bound, but the contents and size of $\mathcal{R}$ depend on $Z_1^T$.

Therefore we enumerate the models within each $\mathcal{R}_\varepsilon$ in the order they are received: $\{\bdw_\varepsilon^{(1)}, \bdw_\varepsilon^{(2)}, \ldots\}$, up to $\bdw_\varepsilon^{(K)}$,
enabling the index $i$ to refer to the corresponding model $\bdw_\varepsilon^{(i)}$ in any realization of $Z_1^T$.
Then we can split the event that a hypothesis in the reservoir has longer survival into the union of events on each ordered hypothesis.

We have
\begin{align*}
    A &\subseteq \{ \exists \bdw \in \mathcal{R}_\varepsilon : s_\bdw \geq s_{\tilde{\bdw}} \} \\
    &\subseteq \bigcup_{i} \{ s_{\bdw_\varepsilon^{(i)}} \geq s_{\tilde{\bdw}} \}
\end{align*}

Applying the union bound, Lemma~\ref{lem:same_dist}, and then Lemma~\ref{lem:geom},
\begin{align*}
    P(A) &\leq \sum_{i} P(s_{\bdw_\varepsilon^{(i)}} \geq s_{\tilde{\bdw}})\\
    &\leq \sum_{i} \mathbb{E}_{\bdw_\varepsilon^{(i)}, \tilde{\bdw}}P(s_{\bdw_\varepsilon^{(i)}} \geq s_{\tilde{\bdw}}  | \bdw_\varepsilon^{(i)}, \tilde{\bdw}  )\\
    &= \sum_{i} \mathbb{E}_{\bdw_\varepsilon^{(i)}, \tilde{\bdw}}P(S(\bdw_\varepsilon^{(i)}) \geq S(\tilde{\bdw}) | \bdw_\varepsilon^{(i)}, \tilde{\bdw}  ) \\
    &= \sum_{i} \mathbb{E}_{\bdw_\varepsilon^{(i)}, \tilde{\bdw}}\frac{ R_\mathcal{D}(\tilde{\bdw}) }{ R_\mathcal{D}(\bdw_\varepsilon^{(i)}) + R_\mathcal{D}(\tilde{\bdw}) - R_\mathcal{D}(\bdw_\varepsilon^{(i)}) R_\mathcal{D}(\tilde{\bdw}) } \\
    &\leq K \frac{\rmin}{ 2\rmin + \varepsilon - \rmin (\rmin + \varepsilon) }
\end{align*}

To further clarify, the probabilities are computed over the probability space of $Z_1^T\sim \mathcal{D}$, and $\tilde{\bdw}$ depends on the realization of $Z_1^T$.
Lemma~\ref{lem:same_dist} allows us to reason about the dependent model instances as independent geometric variables,
but we need to extract $\tilde{\bdw}$ into a instance-independent state.
Thus we need to condition on the weight values, and then bound the probability uniformly across the expectation, using the premise that the weight risks are $\varepsilon$-distant.
\end{proof}

\newpage
\begin{thm} \label{thm:top1_direct}
    (Alternative bound) Let the same conditions hold as in Theorem~\ref{thm:top1_union}. Then
    $$P(R_\mathcal{D}(\bdw_s) > R_\mathcal{D}(\tilde{\bdw}) + \varepsilon ) \leq e^{-\rmin^K (\rmin + \varepsilon)} $$
\end{thm}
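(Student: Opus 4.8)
The plan is to bound the complementary event and then convert it to the stated exponential form through the elementary inequality $1-x \le e^{-x}$. Keeping the notation $A = \{R_\mathcal{D}(\bdw_s) > R_\mathcal{D}(\tilde{\bdw}) + \varepsilon\}$ from the proof of Theorem~\ref{thm:top1_union}, it suffices to exhibit a favorable sub-event $G \subseteq A^c$ with $P(G) \ge \rmin^K(\rmin + \varepsilon)$; then $P(A) = 1 - P(A^c) \le 1 - P(G) \le 1 - \rmin^K(\rmin + \varepsilon) \le e^{-\rmin^K(\rmin + \varepsilon)}$. This is a genuinely different route from the union-bound argument of Theorem~\ref{thm:top1_union}: instead of summing failure probabilities over the $\varepsilon$-far models (tight when $\rmin$ is small, giving the first term of the $\min$ in Theorem~\ref{thm:top1_bound}), here I lower-bound the chance that a low-risk model wins the survival competition outright, which is the stronger statement precisely when $K$ is small and $\rmin$ is not tiny. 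The two bounds are therefore complementary, which is exactly why Theorem~\ref{thm:top1_bound} takes their minimum.

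For the construction I would first condition on the realized risks $p_j = R_\mathcal{D}(\bdw^{(j)})$ of the $K$ reservoir models, so that by Lemma~\ref{lem:same_dist} their survival times become independent geometric variables with these parameters. Recall that every model satisfies $p_j \ge \rmin$, while every model not $\varepsilon$-close to $\tilde{\bdw}$ satisfies $p_j > R_\mathcal{D}(\tilde{\bdw}) + \varepsilon \ge \rmin + \varepsilon$. I would then take $G$ to be the event that forces the longest survivor to be acceptable: the minimal-risk model $\tilde{\bdw}$ registers at least one correct classification, while each competing model misclassifies its next sample immediately (survival $0$). On $G$ the overall longest survivor has survival at least one and hence risk at most $R_\mathcal{D}(\tilde{\bdw}) + \varepsilon$, so $G \subseteq A^c$. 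Its conditional probability factors as a geometric-tail term for $\tilde{\bdw}$ times one point-mass term per competitor, each of which I lower-bound using $p_j \ge \rmin$ (and $p_j > \rmin + \varepsilon$ for the $\varepsilon$-far models) to get a distribution-free bound of the order $\rmin^K(\rmin + \varepsilon)$. Because this bound is uniform in the conditioning risks, it survives taking the expectation over the realization of $Z_1^T$, exactly as in the handling of $\tilde{\bdw}$ at the end of the proof of Theorem~\ref{thm:top1_union}.

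I expect the main obstacle to be pinning the favorable event down so that the product of its factors is at least $\rmin^K(\rmin + \varepsilon)$ in a fully distribution-free way: in particular, controlling the survival factor of $\tilde{\bdw}$, whose natural probability $1 - R_\mathcal{D}(\tilde{\bdw})$ must be shown to dominate the small quantity $\rmin(\rmin + \varepsilon)$, and correctly accounting for how many competitors are forced to fail so that the exponent reads $\rmin^K$ rather than $\rmin^{K-1}$. A secondary technical point is the treatment of ties in survival time, so that the longest survivor is unambiguous and the inclusion $G \subseteq A^c$ is airtight, together with verifying that the independence reduction of Lemma~\ref{lem:same_dist} genuinely applies to this configuration of models.
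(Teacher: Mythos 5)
Your high-level scaffolding is in fact the same as the paper's proof of this theorem (this is not a different route): the paper also passes to the complement, exhibits a favorable event whose probability is at least $\rmin^K(\rmin+\varepsilon)$, and finishes with $1-x\le e^{-x}$, handling the conditioning on realized risks exactly as you describe. The genuine gap is in your choice of the favorable event $G$, and it is precisely the obstacle you flagged but did not resolve. Requiring $\tilde\bdw$ to register at least one correct classification while every competitor fails immediately gives, conditionally on the realized risks, $P(G) = \bigl(1-R_\mathcal{D}(\tilde\bdw)\bigr)\prod_{j\ne \tilde\bdw} R_\mathcal{D}(\bdw^{(j)})$. The factor $1-R_\mathcal{D}(\tilde\bdw)$ admits \emph{no} distribution-free lower bound under the theorem's hypotheses: the only assumption is $R_\mathcal{D}(\tilde\bdw)\ge \rmin$ almost surely, so $R_\mathcal{D}(\tilde\bdw)$ may be arbitrarily close to (or equal to) $1$, driving $P(G)$ below $\rmin^K(\rmin+\varepsilon)$ and breaking the chain $P(A)\le 1-P(G)\le e^{-\rmin^K(\rmin+\varepsilon)}$. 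Worse, this is not fixable by choosing a cleverer strict-win event: any event on which $\tilde\bdw$ \emph{strictly} outlasts some competitor is contained in the event that $\tilde\bdw$ survives its first sample, so its probability is at most $1-R_\mathcal{D}(\tilde\bdw)$, and the same degeneracy applies. (Your secondary worry about the exponent, $\rmin^K$ versus $\rmin^{K-1}$, is a non-issue: over-counting competitors only makes the exhibited lower bound larger than required, and the paper's own bound is in fact slightly loose in this respect.)

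The paper's resolution is the tie-breaking device, stated explicitly as a simplification: ties in survival are resolved in favor of $\tilde\bdw$, so that $\{\bdw_s=\tilde\bdw\}\supseteq\{s_{\tilde\bdw}\ge s_\bdw,\ \forall \bdw\in\mathcal{R}\}$. The favorable event can then be taken to be ``every model in the reservoir misclassifies its very first sample,'' i.e.\ the $x=1$ term of the exact series from Lemma~\ref{lem:order} applied after the conditioning of Lemma~\ref{lem:same_dist}. This event forces an all-way tie, which $\tilde\bdw$ wins by convention, and its probability is a product of \emph{risks only} — no $\bigl(1-R_\mathcal{D}(\tilde\bdw)\bigr)$ factor appears — namely at least $R_\mathcal{D}(\tilde\bdw)^{K}\bigl(R_\mathcal{D}(\tilde\bdw)+\varepsilon\bigr)\ge \rmin^K(\rmin+\varepsilon)$, using $R_\mathcal{D}(\bdw^{(j)})\ge R_\mathcal{D}(\tilde\bdw)\ge\rmin$ for all reservoir models and $R_\mathcal{D}(\bdw^{(j)})\ge R_\mathcal{D}(\tilde\bdw)+\varepsilon$ for at least one of them. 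Then $1-x\le e^{-x}$ and monotonicity in $\rmin$ complete the proof. In short: replace ``$\tilde\bdw$ strictly survives longer'' by ``everyone dies immediately and $\tilde\bdw$ wins the tie,'' which converts the uncontrollable factor $1-R_\mathcal{D}(\tilde\bdw)$ into the controllable factor $R_\mathcal{D}(\tilde\bdw)\ge\rmin$.
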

\begin{proof}
As before, let $A = \{ R_\mathcal{D}(\bdw_s) > R_\mathcal{D}(\tilde{\bdw}) + \varepsilon \}$.
Then \begin{align*}
    A \subseteq \{ R_\mathcal{D}(\bdw_s) > R_\mathcal{D}(\tilde{\bdw}) \} &\subseteq \{ \bdw_s \neq \tilde{\bdw}\} \\
    &= \{ \bdw_s = \tilde{\bdw} \}^c \\
    &= \{s_{\tilde{\bdw}} \geq s_\bdw, \forall \bdw \in \mathcal{R} \}^c
\end{align*}

The event $\{ \bdw_s = \tilde{\bdw} \}^c$ implies $\tilde{\bdw}$ had or was tied for the highest survival,
and we make the simplification that ties are resolved in favor of $\tilde{\bdw}$. Then 
\begin{align*}
    P(A) &\leq 1 - P(s_{\tilde{\bdw}} \geq s_\bdw, \forall \bdw \in \mathcal{R})\\
    &= 1 - P( s_{\tilde{\bdw}} \geq \max_\bdw s_{\bdw}, \forall \bdw \in \mathcal{R})\\
    &= 1 - \sum_{x=1}^\infty R_\mathcal{D}(\tilde{\bdw}) (1 - R_\mathcal{D}(\tilde{\bdw}))^{x-1} \prod_{j=1}^K \Big(1 - (1 - R_\mathcal{D}(\bdw^{(j)}))^x\Big)\\
    &\leq  1 - \sum_{x=1}^\infty R_\mathcal{D}(\tilde{\bdw}) (1 - R_\mathcal{D}(\tilde{\bdw}))^{x-1} \Big(1 - (1 - R_\mathcal{D}(\tilde{\bdw})^x\Big)^{K-1}\Big(1 - (1 - R_\mathcal{D}(\tilde{\bdw}) - \varepsilon)^x\Big)
\end{align*}
where in the third step, we apply Lemma~\ref{lem:order}.
In the last step, we use the premise to suppose that there is at least one hypothesis which is $\varepsilon$-worse than $\tilde{\bdw}$. 

Considering the first-term only,
\begin{align*}
    P(A) &\leq 1 - R_\mathcal{D}(\tilde{\bdw}) ( R_\mathcal{D}(\tilde{\bdw}) )^{K-1} (R_\mathcal{D}(\tilde{\bdw}) + \varepsilon) \\
    &= 1 - R_\mathcal{D}(\tilde{\bdw})^K (R_\mathcal{D}(\tilde{\bdw}) + \varepsilon) \\
    &\leq \exp\big(-R_\mathcal{D}(\tilde{\bdw})^K (R_\mathcal{D}(\tilde{\bdw}) + \varepsilon)\big)\\
    &\leq \exp\big(-\rmin^K (\rmin + \varepsilon)\big)
\end{align*}
\end{proof}

\begin{remark}
    Note that this gives the worst-case bound presuming no knowledge over the distribution of risks among the $K$ hypotheses.
In such a scenario,
the worst case is that all hypotheses achieve $R_\mathcal{D}(\tilde{\bdw})$ except for one which is $R_\mathcal{D}(\tilde{\bdw}) + \varepsilon$.
This is unrealistically adversarial,
but the bound can easily be tightened by adding assumptions that the risks $R_\mathcal{D}(\bdw^{(j)})$ are well-behaved and applying those in the last step.
\end{remark}

\begin{lemma} \label{lem:same_dist}
    Let $R_\mathcal{D}(\bdw)$ be the misclassification risk. Suppose an online learner outputs hypothesis $\bdw_t$ at time $t$ after observing samples $Z_1^t$. Let $s_t$ be the subsequent survival time of $\bdw_t$, conditional on the observed sequence $Z_1^t$.
    Also let $S(\bdw_t)$ be the survival of $\bdw_t$ over the distribution of sample sequences from $\mathcal{D}$.
    Then $S(\bdw_t)$ has a Geometric distribution with parameter $R(\bdw_t)$. 
    In addition $s_t \,{\buildrel d \over =}\,S(\bdw)$.
\end{lemma}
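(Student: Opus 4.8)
The plan is to condition on the prefix $Z_1^t$, which fully determines the hypothesis $\bdw_t$, and then to exploit two structural facts: the passive-aggressive property freezes the weight vector during a survival run, and the i.i.d.\ assumption renders the future stream independent of the past. Fixing a realization of $Z_1^t$ so that $\bdw_t = \bdw$ is a deterministic vector, the crucial observation is that because the base learner is passive-aggressive it performs \emph{no} update so long as it classifies correctly. Hence throughout the entire survival run the active hypothesis remains exactly $\bdw$, changing only at the first misclassification (which terminates the run). This means the survival is governed by repeatedly testing a single frozen vector $\bdw$ against the subsequent stream, rather than against an evolving sequence of models.

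Next I would invoke the i.i.d.\ assumption: the subsequent samples $Z_{t+1}, Z_{t+2}, \ldots$ are i.i.d.\ from $\mathcal{D}$ and, being future draws, are independent of $Z_1^t$ and therefore of $\bdw = \bdw_t$. Consequently, conditional on $\bdw_t = \bdw$, each subsequent sample $Z_{t+k}$ is misclassified by $\bdw$ independently with probability $R_\mathcal{D}(\bdw)$, directly by the definition of the $0$--$1$ risk $R_\mathcal{D}(\bdw) = \mathbb{E}_{z\sim\mathcal{D}}[\mathbbm{1}(\hat y \neq y)]$. The survival run is thus a sequence of i.i.d.\ Bernoulli trials stopped at the first misclassification, so $s_t \mid (\bdw_t = \bdw)$ is Geometric with parameter $R_\mathcal{D}(\bdw)$.

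To finish, I would note that $S(\bdw)$ is defined as precisely this quantity --- the survival of the fixed vector $\bdw$ tested against a fresh i.i.d.\ draw from $\mathcal{D}$ --- so the conditional law of $s_t$ given $\bdw_t = \bdw$ coincides with the law of $S(\bdw)$. Since $Z_{t+1},\dots$ are independent of $Z_1^t$, no distortion is introduced by this conditioning, and marginalizing over the induced distribution of $\bdw_t$ yields $s_t \stackrel{d}{=} S(\bdw_t)$.

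The main obstacle is keeping the conditioning argument clean: one must cleanly separate the randomness that produces the model ($Z_1^t$) from the randomness that produces the survival ($Z_{t+1},\dots$), and argue that conditioning on the observed prefix --- equivalently on $\bdw_t$ --- leaves the i.i.d.\ law of the future stream intact. The passive-aggressive ``no update when correct'' property is exactly what guarantees the trials share the common success parameter $R_\mathcal{D}(\bdw)$; without it the active hypothesis would drift and the trials would fail to be identically distributed. I would also flag the minor bookkeeping of the geometric support convention (whether one counts the correct classifications, giving support $\{0,1,\dots\}$, or the index of the terminating error, giving support $\{1,2,\dots\}$ and matching the $R_\mathcal{D}(1-R_\mathcal{D})^{x-1}$ form used later), a reindexing by one that does not affect the geometric structure relied upon downstream.
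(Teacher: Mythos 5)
Your proposal is correct and takes essentially the same route as the paper's proof: both arguments reduce the survival run to i.i.d.\ Bernoulli trials with misclassification probability $R_\mathcal{D}(\bdw_t)$ (valid because the hypothesis is frozen until the first error), conclude that the survival is geometric, and identify the conditional law of $s_t$ with that of $S(\bdw_t)$ --- the paper phrases this last step via the memoryless property of the Bernoulli process, while you phrase it as the i.i.d.\ law of the future stream being unaffected by conditioning on the prefix, which is the same fact. Your added remarks on the passive-aggressive freezing and the geometric support convention are consistent with, and slightly more explicit than, the paper's treatment.
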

\begin{proof}
For the first statement, observe that when sampling $z$ independently from $\mathcal{D}$,
$$R(\bdw_t) = \mathbb{E}_\mathcal{D}[\mathbbm{1}(\bdw_t(x) \neq y)] = P_{(x,y) \sim \mathcal{D}}(\bdw_t(x) \neq y)$$
Since each i.i.d. sample has a probability $R(\bdw_t)$ to be misclassified,
the number of trials until the first misclassification is a Geometric distribution with parameter $R(\bdw_t)$.

For the second, we need to consider a sequence of correct/incorrect classifications by $\bdw_t$ as a Bernoulli process of r.v. $W_i$ with parameter $R(\bdw_t)$,
with a stopping time $\tau$ indicated by the first misclassification. 
$\tau$ is a random variable and we see that $S_t = \tau - t$.
Similarly $S(\bdw_t)$ can be defined on another Bernoulli process starting from $t=0$.
From the memoryless property, these have the same distribution.
\end{proof}

\begin{lemma} \label{lem:geom}
    Given random variables $X$ and $Y$ which are independent Geometric with parameters $p$ and $q$ respectively (e.g. PMF of $X$ is $P(X = k) = (1 - p)^{k - 1} p$, then
$$ P (X \geq Y) = \frac{q}{p + q - pq}$$
and $$P(X > Y) = \frac{q - pq}{p + q - pq}$$
\end{lemma}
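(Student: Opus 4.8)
The plan is to prove Lemma~\ref{lem:geom} by a direct summation using the independence and geometric PMFs, then combine the two cases to extract both claimed expressions. I will not need any earlier results here, since this is a self-contained elementary computation about two independent geometric random variables.

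First I would write the event $\{X \geq Y\}$ as a sum over the possible values of $Y$. Conditioning on $Y = k$, independence gives $P(X \geq Y) = \sum_{k=1}^\infty P(Y = k)\,P(X \geq k)$. For a geometric variable with parameter $p$ (supported on $\{1,2,\dots\}$), the tail is $P(X \geq k) = (1-p)^{k-1}$, and $P(Y = k) = (1-q)^{k-1} q$. Substituting yields
\begin{align*}
    P(X \geq Y) &= \sum_{k=1}^\infty (1-q)^{k-1} q \,(1-p)^{k-1} \\
    &= q \sum_{k=1}^\infty \big((1-p)(1-q)\big)^{k-1}.
\end{align*}
This is a geometric series with ratio $(1-p)(1-q) \in [0,1)$, so it sums to $q / \big(1 - (1-p)(1-q)\big)$. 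Expanding the denominator, $1 - (1-p)(1-q) = 1 - (1 - p - q + pq) = p + q - pq$, which gives exactly $P(X \geq Y) = q/(p+q-pq)$.

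For the strict inequality I would use the analogous decomposition $P(X > Y) = \sum_{k=1}^\infty P(Y = k)\,P(X > k)$, now with $P(X > k) = (1-p)^k$. The same geometric-series manipulation produces an extra factor of $(1-p)$ in the numerator, yielding $P(X > Y) = q(1-p)/(p+q-pq)$, matching the stated $(q - pq)/(p+q-pq)$. Alternatively, one can derive the strict case from the weak one via $P(X > Y) = P(X \geq Y) - P(X = Y)$, computing $P(X = Y) = \sum_k (1-p)^{k-1} p (1-q)^{k-1} q = pq/(p+q-pq)$ and subtracting; this cross-check is reassuring and confirms the internal consistency of the two formulas.

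There is no real obstacle here: the only subtlety is being careful about the convention for the geometric support (starting at $1$ versus $0$) and the corresponding tail probabilities, since an off-by-one there would corrupt the exponents. I would state the PMF convention explicitly at the outset (as the lemma already does) and make sure the tail formula $P(X \geq k) = (1-p)^{k-1}$ is consistent with it, so that the final algebra lands exactly on the denominator $p + q - pq$.
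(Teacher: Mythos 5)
Your proof is correct and follows essentially the same route as the paper's: condition on the value of $Y$, use independence, and sum the resulting geometric series with ratio $(1-p)(1-q)$ (the paper writes the tail $P(X \geq y)$ as an explicit inner sum rather than invoking the closed form $(1-p)^{y-1}$, a cosmetic difference). The only divergence is in the strict-inequality claim, where the paper avoids a second summation by writing $P(X > Y) = 1 - P(Y \geq X)$ and reusing the first formula with the roles of $p$ and $q$ swapped, whereas you recompute directly (and cross-check via $P(X=Y)$); both are equally valid one-line finishes.
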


\begin{proof}
For the first statement,
    \begin{align*}
        P(X \geq Y) &= \sum_{y=1}^\infty P(Y = y \cap X \geq y)\\
           &= \sum_{y=1}^\infty P(X \geq y | Y = y) P(Y = y) \\
           &= \sum_{y=1}^\infty P(X \geq y) P(Y = y) \\
           &= \sum_{y=1}^\infty \sum_{x=y}^\infty P(X = x) P(Y = y)\\
           &= \sum_{y=1}^\infty \sum_{x=y}^\infty (1 - p)^{x - 1} p \cdot (1 - q)^{y-1} q\\
           &= pq \sum_{y=1}^\infty (1-q)^{y-1} \sum_{x=y}^\infty (1-p)^{x-1}\\
           &= pq \cdot \sum_{y=1}^\infty (1-q)^{y-1} \cdot \frac{(1-p)^{y-1}}{p}\\
           &= \frac{q}{1 - (1 - p)(1 - q)} \\
           &= \frac{q}{p + q - pq}
    \end{align*}

For the second, note that
\begin{align*}
    P(X > Y) = 1 - P(Y \geq X) &= 1 - \frac{p}{p + q - pq}\\
    &= \frac{q - pq}{p + q - pq}
\end{align*}
\end{proof}

\begin{lemma} \label{lem:order}
Let $Y_1, \ldots, Y_t$ be independent Geometric random variables with respective parameters $q_1, \ldots, q_t$. Let $X\sim \mathrm{Geom}(p)$ be independent. Then
$$ P(X \geq \max_i Y_i) = \sum_{x=1}^\infty p(1 - p)^{x-1} \prod_{i=1}^t \Big(1 - (1 - q_t)^x\Big)  $$
\end{lemma}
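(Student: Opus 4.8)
The plan is to condition on the value of the independent variable $X$ and exploit independence among the $Y_i$ to factor the resulting maximum. First I would invoke the law of total probability over the support of $X$, writing $P(X \geq \max_i Y_i) = \sum_{x=1}^\infty P(X = x)\, P(X \geq \max_i Y_i \mid X = x)$. Since $X \sim \mathrm{Geom}(p)$ on $\{1,2,\ldots\}$, the first factor is $P(X=x) = p(1-p)^{x-1}$, which already matches the prefactor in the claimed identity.

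Next I would simplify the conditional probability. Because $X$ is independent of the family $Y_1, \ldots, Y_t$, conditioning on $\{X = x\}$ replaces the random threshold by the constant $x$, giving $P(X \geq \max_i Y_i \mid X = x) = P(\max_i Y_i \leq x)$. The event $\{\max_i Y_i \leq x\}$ is exactly the intersection $\bigcap_{i=1}^t \{Y_i \leq x\}$, so mutual independence of the $Y_i$ lets me factor it as $\prod_{i=1}^t P(Y_i \leq x)$.

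Then I would substitute the geometric CDF. For $Y_i \sim \mathrm{Geom}(q_i)$ supported on the positive integers, the tail is $P(Y_i > x) = (1-q_i)^x$, hence $P(Y_i \leq x) = 1 - (1-q_i)^x$ for each integer $x \geq 1$. Plugging this product back and combining with the prefactor reproduces the stated series $\sum_{x=1}^\infty p(1-p)^{x-1} \prod_{i=1}^t \bigl(1 - (1-q_i)^x\bigr)$, where the product index runs over $i$ (the apparent $q_t$ inside the product in the statement being a typo for $q_i$).

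There is essentially no hard step here; the only point that needs care is keeping the discrete inequality direction consistent — verifying that conditioning the event $X \geq \max_i Y_i$ on $\{X = x\}$ yields the non-strict event $\max_i Y_i \leq x$, and that the geometric CDF is evaluated as $1-(1-q_i)^x$ rather than at $x-1$. The conditioning-then-factor structure makes the computation immediate once independence is used twice: first to drop $X$ from the threshold, then to split the maximum into a product.
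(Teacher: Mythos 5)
Your proposal is correct and follows essentially the same route as the paper's proof: decompose over the value of $X$, use independence of $X$ from the $Y_i$ to reduce to $P(\max_i Y_i \leq x)$, factor this product by mutual independence, and substitute the geometric PMF and CDF. You also correctly identify that the $q_t$ inside the product in the statement is a typo for $q_i$.
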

\begin{proof}
    \begin{align*}
        P(X \geq \max_i Y_i) &= \sum_{x=1}^\infty P(X = x \cap \max_i Y_i \leq x) \\
        &= \sum_{x=1}^\infty P(X = x) P(\max_i Y_i \leq x)\\
        &= \sum_{x=1}^\infty P(X = x) \prod_{i=1}^t P(Y_i \leq x)
    \end{align*}
    by independence. Substitute the PMF and CMFs of the geometric distribution to finish.
\end{proof}

\textbf{Proof of Theorem~\ref{thm:topB_bound}}.
\begin{proof}
Note that
\begin{align*}
    A&=\Big\{\frac{1}{B+1} \sum_{j=1}^{B+1} R'_j < \frac{1}{B} \sum_{j=1}^B R'_j \Big\}\\
    &= \Big\{ B \sum_{j=1}^{B+1} R'_j < (B+1)  \sum_{j=1}^B R'_j \Big\}\\
    &= \Big\{ R'_{B+1} < \frac{1}{B} \sum_{j=1}^B R'_j \Big\}\\
\end{align*}

We now consider a subset of this event. We will compute the probability that $R'_1, \ldots, R'_B \in \mathcal{R}_b$ and that $R'_{B+1} \in \mathcal{R}_g$.
By definition, this latter event falls into $A$.

For this to occur, note from Lemma~\ref{lem:same_dist} that each survival is drawn from a Geometric distribution.
We want the chance that the $B$ largest survivals are from set $\mathcal{R}_b$ and then the next largest is from $\mathcal{R}_g$.
For additional simplicity, we further minimize the probability of this event by assigning all entries in $\mathcal{R}_g$ risk $\rmin$ and all entries in $\mathcal{R}_b$ risk $\rmin + 3\varepsilon$.

Conditioning on $s'_{B+1}=x$, all entries from $\mathcal{R}_g$ are below $x$ except for one, which is equal to $x$. There are $|\mathcal{R}_g|$ ways to pick this largest one.

Then $|\mathcal{R}| - B$ of the models in $\mathcal{R}_b$ are also below $x$, and $B$ are above $x$. There are $\binom{|\mathcal{R}_b|}{B}$ ways to choose such $B$.

Finally this probability is summed over all values of $x$. This is
\begin{align*}
    \sum_{x=1}^\infty |\mathcal{R}_g| (1-(1-\rmin)^x) (1-\rmin)^{x-1} \rmin \cdot \binom{|\mathcal{R}_b|}{B}(1 - (1 - \rmin - 3\varepsilon)^x)^{|\mathcal{R}_b|-B} (1-\rmin - 3\varepsilon)^{Bx}
\end{align*}
As before, to get a simpler expression, we just consider $x=1$, yielding
$$P(A) \geq |\mathcal{R}_g|\binom{|\mathcal{R}_b|}{B} \rmin^2 (\rmin + 3\varepsilon)^{|\mathcal{R}_b| - B}(1 - \rmin - 3\varepsilon)^B$$

\end{proof}

We also discuss the additional assumption of this Theorem to show that it is readily satisfied.

\textbf{Assumption.} There is a partition of $\mathcal{R} = \mathcal{R}_g \cup \mathcal{R}_b$ such that 
$\mathcal{R}_g = \{ \bdw \in \mathcal{R} : \rmin \leq R_\mathcal{D}(\bdw) \leq   \rmin + \varepsilon \} $
and 
$\mathcal{R}_b = \{ \bdw \in \mathcal{R} : \rmin + \varepsilon < R_\mathcal{D}(\bdw) \leq \rmin + 3\varepsilon \} $,
and that $|\mathcal{R}_b| \geq B $.
That is, there is a set of \textit{good} and \textit{bad} models in terms of risk. 

We can set $\varepsilon>0$ to be any small number that satisfies a partition within the reservoir separating "good" from "bad" models -- e.g. $\varepsilon = (\max_k R'_k - r_m) / 4$ works.
For demonstration, suppose the underlying risks of the models in the reservoir range from 0.1 to 0.3, with 0.1 being the best risk $r_m$.
Then $\varepsilon = 0.05$.
$\mathcal{R}_g$ then contains the models with risks in $[0.1, 0.15]$ and $\mathcal{R}_b$ contains those with risks in $(0.15, 0.3]$. The only time this isn't satisfied for any $\varepsilon$ is if one of the partitions always ends up empty, which only happens if all the $K$ models have the same risk, a highly unlikely scenario where it is obvious that an ensemble isn't needed.

\subsection{Learning bounds}

Our bound follows from the following lemma from~\cite{dekel2008online}. This itself is based on martingale inequalities from~\cite{freedman1975tail} and developments in~\cite{cesa2008improved}.

\begin{lemma}[Dekel] \label{lem:dekel}
    Let $L_1, \ldots, L_T$ be a sequence of real-valued random variables and let $Z_1, \ldots, Z_T$ be a sequence of random variables such that $L_t = \mathbb{E}[L_t | Z_1, \ldots, Z_t]$. Also assume that $L_t \in [0, C]$ for all $t$. Define $U_t = \mathbb{E}[L_t|Z_1, \ldots, Z_{t-1}]$,
    and let $\tilde{L}_t = \sum_{i=1}^t L_i$. Then for any $T\geq 4$ and $\delta \in (0, 1)$,
    $$P\bigg(\forall t \in \{1, \ldots, T\}, \hspace{1em} \sum_{i=1}^t U_i > \tilde{L}_t  + \sqrt{2C\log\Big(\frac{T}{\delta}\Big) \tilde{L}_t} + 7 C \log\Big(\frac{T}{\delta}\Big)  \bigg) < \delta $$
\end{lemma}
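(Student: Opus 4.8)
The plan is to recognize the stated lemma as a Bernstein-type concentration inequality for a martingale with a \emph{self-bounding} conditional variance, i.e. essentially the inequality of Dekel et al.~\cite{dekel2008online} built on Freedman's martingale tail bound~\cite{freedman1975tail} and the refinements of Cesa-Bianchi and Gentile~\cite{cesa2008improved}. Let $\mathcal{F}_t = \sigma(Z_1, \ldots, Z_t)$ be the natural filtration. The hypothesis $L_t = \mathbb{E}[L_t \mid Z_1, \ldots, Z_t]$ says precisely that $L_t$ is $\mathcal{F}_t$-measurable, while $U_t = \mathbb{E}[L_t \mid \mathcal{F}_{t-1}]$ is its one-step conditional mean. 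Setting $D_t = U_t - L_t$ gives $\mathbb{E}[D_t \mid \mathcal{F}_{t-1}] = 0$, so $(D_t)$ is a martingale difference sequence adapted to $(\mathcal{F}_t)$, and the object we must control is the partial sum $\sum_{i=1}^t D_i = \sum_{i=1}^t U_i - \tilde{L}_t$. The event in the lemma is exactly that this martingale exceeds the stated threshold for some $t \in \{1, \ldots, T\}$.

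The key structural observation is a self-bounding property of the conditional variance. Since $L_t \in [0, C]$ we have $L_t^2 \leq C L_t$ pointwise, and because $U_t$ is $\mathcal{F}_{t-1}$-measurable,
$$ \Var(D_t \mid \mathcal{F}_{t-1}) = \Var(L_t \mid \mathcal{F}_{t-1}) \leq \mathbb{E}[L_t^2 \mid \mathcal{F}_{t-1}] \leq C\, \mathbb{E}[L_t \mid \mathcal{F}_{t-1}] = C\, U_t. $$
Summing, the predictable quadratic variation obeys $V_t := \sum_{i=1}^t \Var(D_i \mid \mathcal{F}_{i-1}) \leq C \sum_{i=1}^t U_i$. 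This is what lets us trade a variance-dependent tail bound for one expressed through the running sum of conditional means, and it is the mechanism by which $\tilde{L}_t$ (rather than a separate variance term) ends up inside the square root of the final bound.

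Next I would invoke Freedman's inequality, which controls $P\big(\sum_{i\le t} D_i \geq a,\ V_t \leq v\big)$ by a Bernstein-type exponential in $a^2/(v + Ca)$, valid here since $|D_t| = |U_t - L_t| \leq C$. Because $V_t$ is random and data-dependent and the bound must hold simultaneously for all $t$, I would apply the standard peeling (stratification) argument over dyadic ranges of $V_t$ together with a union bound over $t \in \{1, \ldots, T\}$; this is the step that produces the $\log(T/\delta)$ factor and forces the technical hypothesis $T \geq 4$. Substituting $V_t \leq C\sum_{i\le t} U_i$ then yields an implicit inequality for the deviation $S_t := \sum_{i\le t} U_i - \tilde{L}_t$ of the form $S_t \leq \sqrt{2C(\tilde{L}_t + S_t)\log(T/\delta)} + c\,C\log(T/\delta)$ for an absolute constant $c$.

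The final step is to solve this implicit quadratic. Writing $A = \sum_{i\le t} U_i = \tilde{L}_t + S_t$, the Bernstein bound reads $A - \tilde{L}_t \leq \sqrt{2 C A \log(T/\delta)} + c\,C \log(T/\delta)$; viewing the right-hand side as a quadratic in $\sqrt{A}$, solving for $A$, and back-substituting $A = \tilde{L}_t + S_t$ collapses the cross term and the lower-order Bernstein term into a single coefficient, giving the stated $\sum_{i \le t} U_i \leq \tilde{L}_t + \sqrt{2C\log(T/\delta)\,\tilde{L}_t} + 7 C \log(T/\delta)$. I expect the main obstacle to be the careful handling of the random conditional variance — the peeling argument that makes the tail bound uniform over all $t$ at a single failure probability $\delta$ — together with the constant bookkeeping needed to absorb every $O(C\log(T/\delta))$ contribution into the clean coefficient $7$. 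Since the result is quoted directly from~\cite{dekel2008online}, I would either cite it verbatim or reproduce this Freedman-plus-peeling-plus-quadratic argument in full.
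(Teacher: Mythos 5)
Your proposal is correct and matches the paper's treatment: the paper does not prove this lemma itself but quotes it directly from the cited work of Dekel, noting it rests on Freedman's martingale inequality and the refinements of Cesa-Bianchi and Gentile, which is exactly the Freedman-plus-peeling-plus-quadratic route you reconstruct (martingale differences $D_t = U_t - L_t$, the self-bounding variance estimate $\mathrm{Var}(L_t \mid \mathcal{F}_{t-1}) \leq C\,U_t$, a union bound over $t$ and dyadic variance levels yielding the $\log(T/\delta)$ factor and the $T \geq 4$ hypothesis, and solving the implicit quadratic in $\sum_{i \leq t} U_i$ to absorb constants into the coefficient $7$). One minor note: the paper's statement writes the event with $\forall t$ where the cited source bounds the $\exists t$ event; you correctly read it in the latter (stronger) sense, which is what the downstream proof of the risk bound actually uses.
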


From this we obtain our learning bound. The proof follows analogously to the one in~\cite{dekel2008online}, with modifications to handle our specific algorithm.

\textbf{Proof of Theorem~\ref{thm:wrs_bound}}.
\begin{proof}
In our method we have a reservoir $\mathcal{R}$ containing $K$ models $\bdw$ with associated weights $b_\bdw$.
We focus on the case where the weight is equal to the passive steps survived by the model.
To simplify this proof we consider a slightly modified algorithm, where all models with survival exceeding a threshold $s$ are included in the reservoir.

\begin{equation}
    \pi_t = 
    \begin{cases}
         0, & \text{if}\ s(\bdw_t) < s \\
         s, & \text{if}\ s(\bdw_t) = s \\
         1, & \text{otherwise}
    \end{cases}
\end{equation}

We rewrite the weighted average of the reservoir models as a function of the entire training sequence of length $T$ by defining $\pi_t$ to be 1 if $\bdw_t$ is in $\mathcal{R}$ and 0 otherwise.
Furthermore on the first instance that $\bdw_t$ joins $\mathcal{R}$, the $\pi_t$ is instead set to $s$.
This makes the weighting deterministic with respect to the information at time $t - 1$, at the cost of the reservoir size not being explicitly defined.
While this is analogous to a top-k averaging scheme, the full WRS scheme will follow from the same reasoning, with a more complicated definition of $\pi_t$. This is because all decision-making is performed using information that has already been seen.

Then we write $\bdw_{wrs} = \frac{1}{\sum_{t=1}^T \pi_t} \sum_{t=1}^T \pi_t \bdw_t$, 
and our goal is to control $R_\mathcal{D}(\bdw_{wrs})$.
Furthermore, define the cumulative loss as $M_{wrs} = \sum_{t=1}^T \pi_t \ell(\bdw_{t-1}, z_t)$. (We note that this term is a little different from the observed cumulative WRS loss because of the requirement of reweighting on the $s$-th survival.)

In the lemma, let $L_t = \pi_t \ell(\bdw_{t-1}, z_t)$ and let $U_t = \mathbb{E}[\pi_t \ell(\bdw_{t-1}, z_t) | z_1^{t-1}]$. Noting that $L_t$ is deterministic when conditioning on $z_1^{t}$, the assumptions are met.
Then with probability $1-\delta$, we have $\sum_{i=1}^T U_i < \tilde{L}_T + \sqrt{2C\log\Big(\frac{T}{\delta}\Big) \tilde{L}_T} + 7 C \log\Big(\frac{T}{\delta}\Big)$.

Now $\sum_{i=1}^T U_i = \sum_{t=1}^T \pi_t \mathbb{E} [\ell(\bdw_{t-1}, z_t) | z_1^{t-1}] = \sum_{t=1}^T \pi_t R_{\mathcal{D}}(\bdw_{t-1})$,
where the first equality uses the fact that $\pi_t$ is determined by the filtration $\mathcal{F}_{t-1}$,
and the second similarly uses the conditioning information to fix $\bdw_{t-1}$ and the i.i.d. assumption over $z$.

Finally, we have $R_\mathcal{D}(\bdw_{wrs}) = R_\mathcal{D}\Big( \frac{1}{\sum_{t=1}^T \pi_t} \sum_{t=1}^T \pi_t \bdw_t \Big) \leq \frac{1}{\sum_{t=1}^T \pi_t} \sum_{t=1}^T \pi_t R_\mathcal{D}(\bdw_t)$ from Jensen's inequality.

Dividing the previous statement by $\sum_{t=1}^T \pi_t$, we put the pieces together to conclude
$$ R_\mathcal{D}(\bdw_{wrs}) \leq \frac{M_{wrs}}{\sum_{t} \pi_t} + \sqrt{\frac{2 C \log \Big(\frac{T}{\delta}\Big) M_{wrs}}{(\sum_t \pi_t)^2}} + \frac{7C \log \Big(\frac{T}{\delta}\Big)}{\sum_t \pi_t}$$

Finally, note that $\sum_t \pi_t$ is the total time steps involved with the reservoir weights, which is at least $K_T s_T$ with $K_T$ weights and $s_T$ the minimal survival time of the reservoir. Substituting completes the proof.

\end{proof}

\textbf{Proof of Theorem~\ref{thm:risk_bound}}.

\begin{proof}
    Combining the penultimate form of Theorem~\ref{thm:wrs_bound} (with $\sum_t \pi_t$ instead of $K_T s_T$) and the deduction that $M_{wrs}/\sum_t \pi_t \leq M_T/T$ we have
$$ R_\mathcal{D}(\bdw_{wrs}) \leq \frac{M_T}{T} + \sqrt{\frac{2 C \log \Big(\frac{T}{\delta}\Big) M_T}{T\sum_t \pi_t}} + \frac{7C \log \Big(\frac{T}{\delta}\Big)}{\sum_t \pi_t}$$

Next, we turn our attention to the generic base-model online regret bound. From this we get $M_T/T \leq \frac{1}{T} \sum_t \ell(\bdw^*; z_t) + r(T)/T$.

We need to bound the cumulative loss of $\bdw^*$, that is $\frac{1}{T} \sum_t \ell(\bdw^*; z_t)$, in terms of the risk: $R_\mathcal{D}(\bdw^*)$.
Using an application of Hoeffding bound, we get
$$ \frac{1}{T}\sum_{t=1}^T \ell(\bdw^*, z_t) \leq R_\mathcal{D}(\bdw^*) + C \sqrt{\frac{\log \Big(\frac{1}{\delta}\Big)}{2T}} $$
which holds with probability $1-\delta$.

Combining the undesired tail events under the union bound, we get $$
R_\mathcal{D}(\bdw_{wrs}) \leq R_\mathcal{D}(\bdw^*) + \frac{r(T)}{T} + \sqrt{\frac{2 C \log \Big(\frac{T}{\delta}\Big) M_T}{T\sum_t \pi_t}} + \frac{7C \log \Big(\frac{T}{\delta}\Big)}{\sum_t \pi_t} + C \sqrt{\frac{\log \Big(\frac{1}{\delta}\Big)}{2T}}
$$
with probability $1- 2\delta$.
As before, substitute in $K_T s_T$ for $\sum_{t} \pi_t$ to complete.

\end{proof}

\textbf{Remark.} The use of the online regret bound also indicates that the base models $\bdw_t$ are improving over time. This means that their risks are decreasing, and hence the observed survival times are increasing. Thus even in the error term without $T$ in the denominator, $s_T$ is an increasing function of $T$ so we expect the error terms to decay in $T$.

\newpage
\section{Additional details on methods and experimental setups}
\label{methods-details-appendix-pac}

\subsection{Datasets}

Of the 16 datasets used in our study (see Table \ref{table:datasets}), 14 are directly from existing repositories and/or literature, while 2 of them --- Newsgroups (Binary, CS) and SST-2 --- were modified from pre-existing assets. First, Newsgroups (binary, CS) was formed by taking the original multi-class Newsgroups dataset hosted in \cite{scikit-learn} and combining the labels ``Computers" and ``Science" as a $+1$ class, and all other labels as the $-1$ class. Second, SST-2 is a sentiment classification dataset originally containing text excerpts that we transformed into a linear classification dataset by using the \href{https://scikit-learn.org/stable/modules/generated/sklearn.feature_extraction.text.CountVectorizer.html#sklearn.feature_extraction.text.CountVectorizer}{\texttt{CountVectorizer}} from \cite{scikit-learn}.

\subsection{Base PAC and FSOL hyperparameter tuning}

We start our set of experiments by tuning the $C_{err}$ hyperparameter for base PAC, testing values in the set ${ 10^{-3}, 10^{-2}, \dots, 10^2, 10^3 }$. For base FSOL, following Zhao et al.'s approach, we test $\eta$ values in ${ 2^{-3}, 2^{-2}, \dots, 2^{8}, 2^{9} }$ and $\lambda$ values in ${ 0, 10^{-3}, 10^{-2}, \dots, 10^{2}, 10^{3} }$. For each dataset, we perform five runs of our base PAC and FSOL variants with randomized shuffles of the training and test data, running through the training data only once for each run. For each dataset, we pick the $C_{err}$, $\eta$, and $\lambda$ values corresponding to the base PAC and FSOL variants, within the top 2.5\% in terms of final test accuracy, that had the highest ROP, with all metrics averaged across the five runs. This experimental choice was made to strike a balance between simulating as difficult and risky conditions as possible and still choosing useful base model variants.

\subsection{Metrics logging}

Given that some of our datasets contain up to 36 million training points, it would be computationally and storage-wise unfeasible to record metrics at each timestep. As such, for each dataset, we record metrics at $\approx 200$ evenly-spaced timesteps (exact number depends on divisibility and integer division of $N_{\text{train}}$ by $200$) throughout the training stream, in addition to the initial and final timesteps. As such ROP was also approximated by proxying oracle accuracy by taking the cumulative maximum base model test accuracy across timesteps where metrics were recorded. As shown in the over-time figures in Appendix \ref{over-time-appendix}, such a reduced resolution still tells us a very clear picture, while cutting down computation and storage requirements by many orders of magnitude.

\subsection{Compute requirements and code availability}

All experiments were run on a Linux computing cluster with 32 nodes, each with 40 Intel Xeon E5-2650 CPU cores and a total of 500 GB of RAM per node, managed using SLURM. Nonetheless, no experiments require multiprocessing or multiple cores. Depending on dataset size, some trials could take less than a minute to run, while the largest datasets would take a couple hours at max. However, larger datasets like Criteo will require 32 GB of RAM to load the dataset into memory. 

Our source code for reproducing all experiments can be found at \href{https://github.com/FutureComputing4AI/Weighted-Reservoir-Sampling-Augmented-Training/tree/main}{https://github.com/FutureComputing4AI/Weighted-Reservoir-Sampling-Augmented-Training/tree/main}. All experiments were run on CPU.

\newpage

\section{Applications to Malware Classification}
\label{ember}

Given that the original motivation for this work was a real-world need in malware detection, we present empirical results of PAC, PAC-WRS, FSOL, and FSOL-WRS on the EMBER malware classification dataset \cite{2018arXiv180404637A}. The EMBER dataset is comprised of features extracted from 1.1M benign and malicious Windows portable executable files and thus can be considered very close to a real-world test case of our WRS-Augmented Training method on a real-world deployment setting.

In Figure \ref{fig:ember}, we observe that both base PAC and FSOL experience significant and frequent test accuracy fluctuations throughout the training process. However, both PAC-WRS and FSOL-WRS very successfully mitigate these test accuracy fluctuations and even outperform the oracle model, all without the use of a separate validation set.
\begin{figure}[H]
     \centering
     \includegraphics[scale = 0.55]{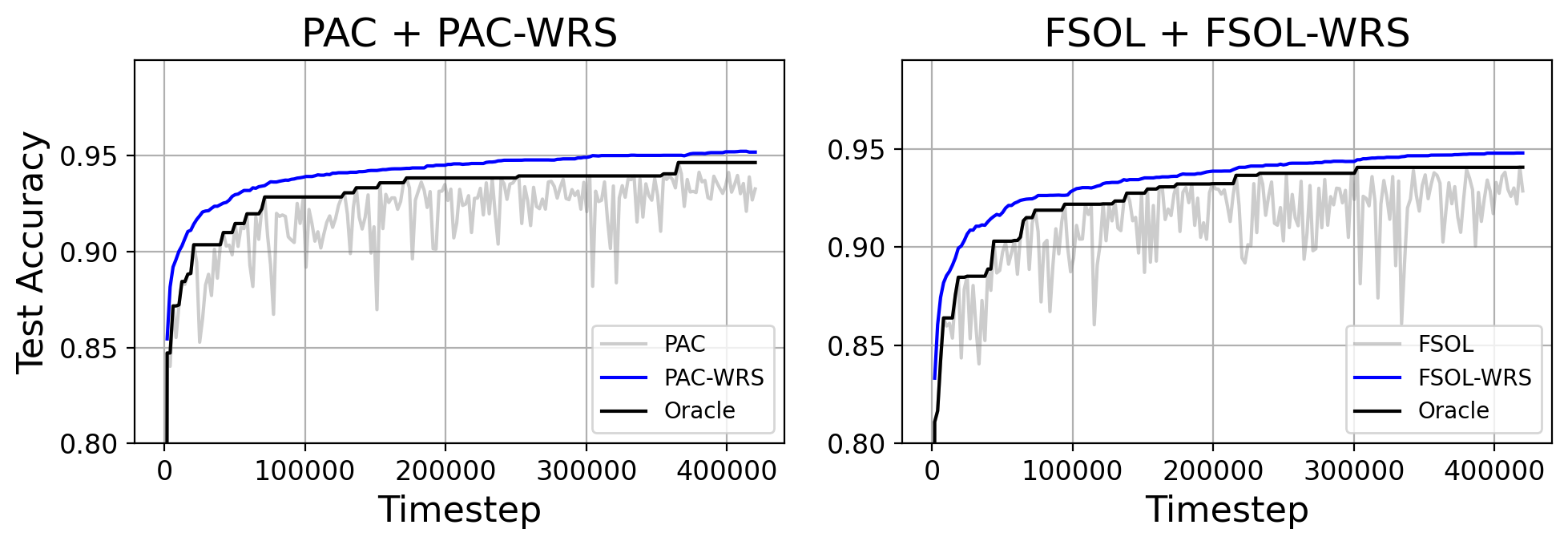}
     \caption{Test accuracies ($y$-axis) over timestep ($x$-axis) for PAC/FSOL and PAC/FSOL-WRS on the EMBER benchmark dataset for malware classification.}
     \label{fig:ember}
\end{figure}
Indeed, WRS-Augmented Training is a promising tool towards stabilizing test accuracy performance in real-world deployment settings like malware detection.

\newpage
\section{Comparisons to other ensembling and averaging schemes}
\label{other_averaging_schemes}

One natural question a reader may ask is: how does WRS-Augmented Training compare to traditional weight-averaging schemes such as moving average (e.g., averaging the most-recently-observed $K=64$ weight vectors at each timestep) or exponential average (e.g., forming an ensemble vector $\bar{\bdw}_t = \gamma \bdw_t + (1-\gamma) \bar{\bdw}_{t-1}$ at each timestep), where $\bdw_t$ is the base algorithm's candidate solution at timestep $t$.

From Figures \ref{fig:PAC_WRS+MA+EA} and \ref{fig:FSOL_WRS+MA+EA}, we observe that the exponential average scheme is consistently ineffective at mitigating the test accuracy instabilities of the base models PAC and FSOL. The stability of the exponential average scheme is usually not much better than that of the base model, which makes sense because it still puts the majority of the weighting on the most recent candidate solution. From these figures, we also observe that the $K=64$ moving average also has very mixed effectiveness. Overall, we see that WRS-Augmented Training is more preferable to the $K=64$ moving average. This makes sense because with WRS-Augmented Training, we are much more selective about the quality of the candidate solutions that we retain in our reservoir, compared to moving average, which necessarily by definition must include poor-performing solutions as they appear.

Third, from Table \ref{table:compute_times}\footnote{For Table \ref{table:compute_times}, we are still waiting on a full set of results over multiple seeds for Criteo and Avazu (Site) for moving average, but because we are studying online algorithms, we can still share full-epoch results for one seed, and partial-epoch results for the other seeds as they exist today. We will update our paper accordingly when the results are finished. We also emphasize that the moving average method, on larger datasets, is multiple times slower per iteration, on average, than our WRS-Augmented Training, contributing to this delay. The general trends are already very clear from the results presented below.}, we see that for datasets with dimension $D > 100K$, the moving average method can be significantly computationally slower per iteration than WRS-Augmented Training. For example, with PAC as the base model, the moving average method was, on average, $6.579$x slower per iteration than WRS-Augmented Training on KDD2010 (Algebra) and $10.050$x slower per iteration than WRS-Augmented Training on URL. Similar trends hold when using FSOL as the base algorithm. These results make sense because with WRS-Augmented Training, we do not always add candidate solutions to our reservoir for averaging, while with the moving average method, we must always add new candidate solutions into our set. These insertion and deletion costs will accrue over time. On smaller datasets, the moving average is faster or slower depending on the dataset, but runtime is dominated by IO and all methods finish within minutes. However, in more real-world, large-scale settings where evaluation and checking are prohibitively expensive, WRS-Augmented Training is the fastest, most accurate, and most reliable method compared to all the baselines.

\begin{figure}[H]
     \centering
     \includegraphics[scale = 0.35]{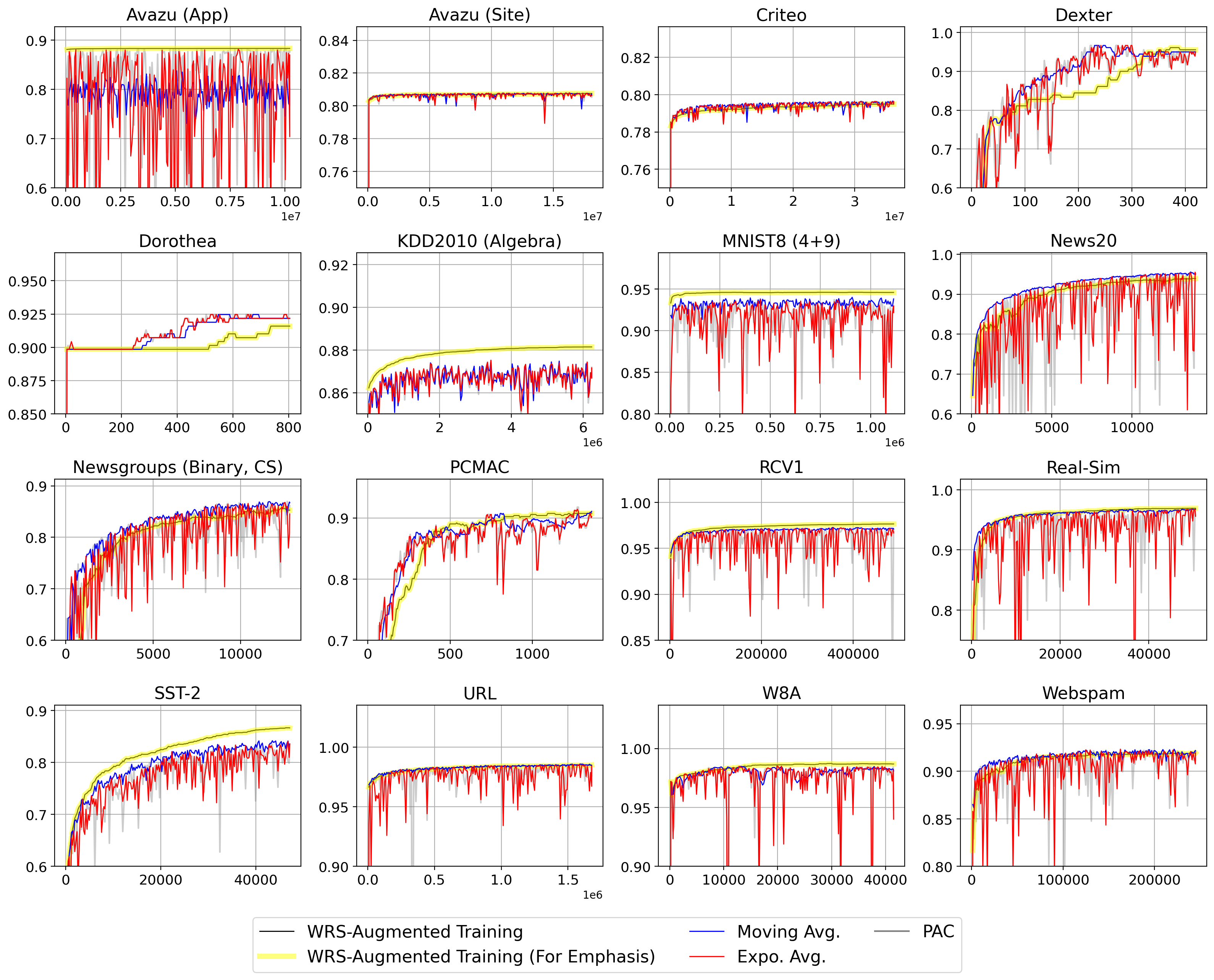}
     \caption{Test accuracies ($y$-axis) over timestep ($x$-axis) for WRS-Augmented Training $(K=64)$, moving average (most recent $K=64$) and exponential average $(\gamma = 0.9)$, using PAC as the base.}
     \label{fig:PAC_WRS+MA+EA}
\end{figure}

\begin{figure}[H]
     \centering
     \includegraphics[scale = 0.35]{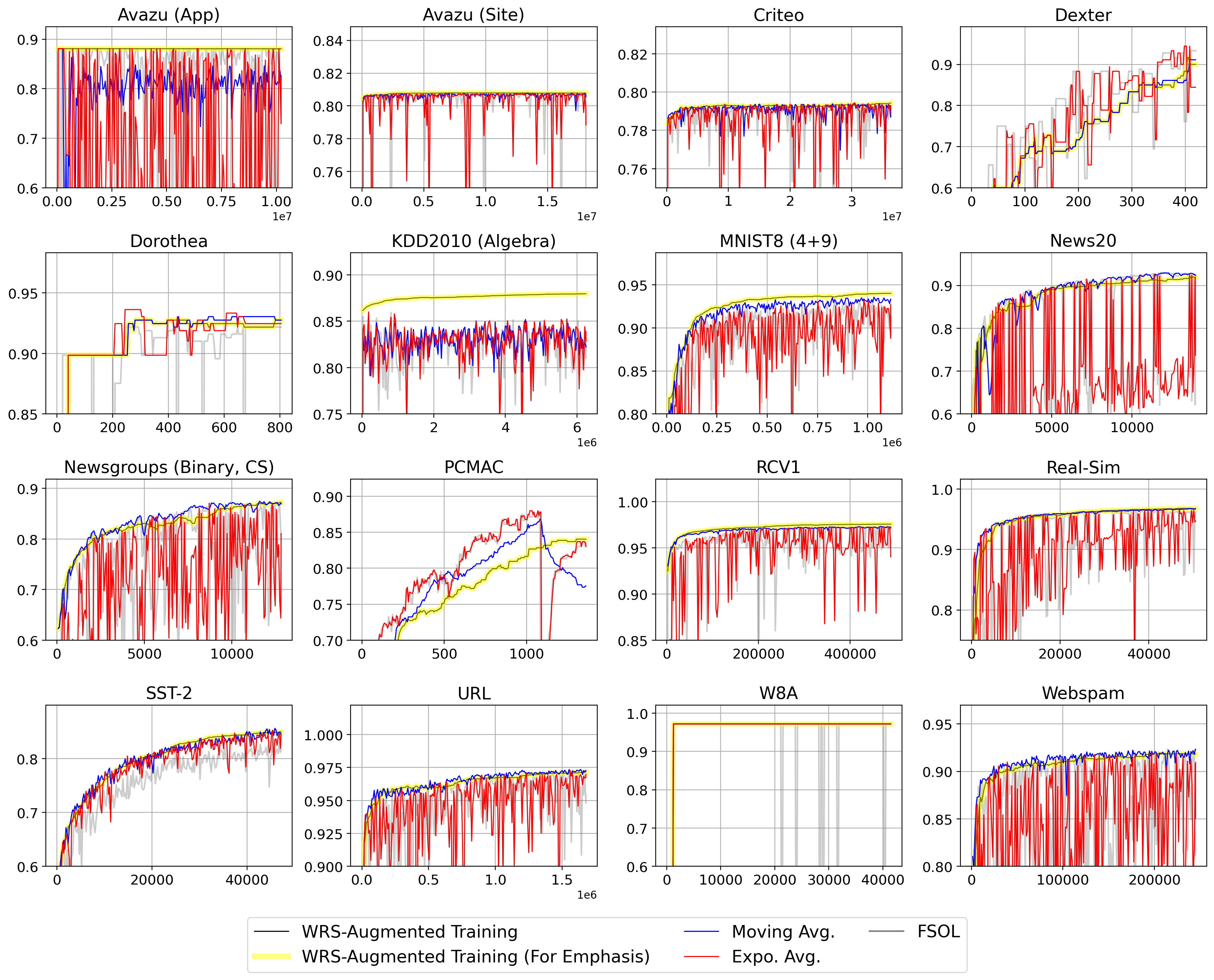}
     \caption{Test accuracies ($y$-axis) over timestep ($x$-axis) for WRS-Augmented Training $(K=64)$, moving average (most recent $K=64$) and exponential average $(\gamma = 0.9)$, using FSOL as the base.}
     \label{fig:FSOL_WRS+MA+EA}
\end{figure}

\begin{table}[H]
    \footnotesize
    \caption{Compute times per iteration (in seconds) of moving average and exponential average \textbf{relative to WRS-augmented training}, with PAC (left) and FSOL (right) as base models. For example, on URL with PAC, \textit{moving average was, on average, $10.050$x slower per iteration than WRS-augmented training.}}
    \centering
    \begin{adjustbox}{width=1.0\textwidth}
    \begin{tabular}{lr|rr|rr}
    & & \multicolumn{2}{c|}{\textbf{PAC}} & \multicolumn{2}{c}{\textbf{FSOL}} \\
    \midrule
    \textbf{Dataset} & \textbf{D} & \textbf{Moving Avg.} & \textbf{Expo. Avg.} & \textbf{Moving Avg.} & \textbf{Expo. Avg.} \\
    \midrule
    Webspam & 254 & 0.953 & 0.717 & 0.738 & 0.613 \\
    W8A & 300 & 0.420 & 0.353 & 0.368 & 0.336 \\
    MNIST8 (4+9) & 784 & 1.125 & 0.885 & 0.968 & 0.887 \\
    PCMAC & 3289 & 0.483 & 0.312 & 0.377 & 0.235 \\
    SST-2 & 13757 & 0.903 & 0.431 & 0.748 & 0.411 \\
    Dexter & 20000 & 0.577 & 0.143 & 0.285 & 0.066 \\
    Real-Sim & 20958 & 0.949 & 0.624 & 0.707 & 0.564 \\
    RCV1 & 47236 & 1.825 & 0.892 & 1.366 & 1.011 \\
    Dorothea & 100000 & 1.194 & 0.190000 & 0.580 & 0.254 \\
    Newsgroups (Binary, CS) & 101631 & 3.587 & 1.042 & 1.376 & 0.543 \\
    Avazu (App) & 1000000 & 8.357 & 2.507 & 8.078 & 2.544 \\
    Avazu (Site) & 1000000 & 18.704 & 3.207 & 11.612 & 3.683 \\
    Criteo & 1000000 & 14.267 & 2.876 & 7.009 & 2.381 \\
    News20 & 1355191 & 4.288 & 1.356 & 3.376 & 1.153 \\
    URL & 3231961 & 10.050 & 3.945 & 9.124 & 4.058 \\
    KDD2010 (Algebra) & 20216830 & 6.579 & 2.402 & 6.778 & 2.301 \\
    \bottomrule
    \end{tabular}
    \end{adjustbox}
    \label{table:compute_times}
\end{table}

\newpage
\section{Modified WRS-Augmented Training on non-passive aggressive online learning methods}
\label{non-passive-aggressive}

In this section, we explore applying a modified form of WRS-Augmented Training (WAT) on top of three non-passive-aggressive online learning methods: Stochastic Gradient Descent with Momentum \cite{sutskever2013importance}, ADAGRAD \cite{duchi2011adaptive}, and Truncated Gradient Descent \cite{langford2009sparse}, all using default hyperparameters. We emphasize that these three methods are not passive-aggressive as they always update their weight vectors at each time step, even if the data point was correctly classified. As such, the original WAT is not directly applicable. 

However, we can modify WAT as follows. First, we define a \textit{pseudo-passive} step as one where the current solution candidate made no classification error (i.e., no more concept of margin). Second, at a time step when the current solution candidate does make an error, we will sample the last weight vector before a mistake was made into our reservoir with probability proportional to the number of pseudo-passive steps, before resetting our counter. The rest of WAT operates as normal under this pseudo-passive step weighting.

From Figures \ref{fig:WRS-SGDM} - \ref{fig:WRS-TGD}, we observe that all three non-passive-aggressive online algorithms, in particular Stochastic Gradient Descent with Momentum (SGD+M) and Truncated Gradient Descent (TGD), are susceptible to experiencing concerning fluctuations in test accuracy. Consistently, across the algorithms, we observe that when such fluctuations are present, modified WAT effectively mitigates such fluctuations very well. On the other hand, when there is little fluctuation in the base model (e.g., see ADAGRAD), applying modified WAT will do little to no harm. We emphasize that WAT was not designed for non-passive-aggressive algorithms, but is still demonstrably useful and adaptable to a wider class of base models, which can constitute fruitful future work.

\begin{figure}[H]
     \centering
     \includegraphics[scale = 0.35]{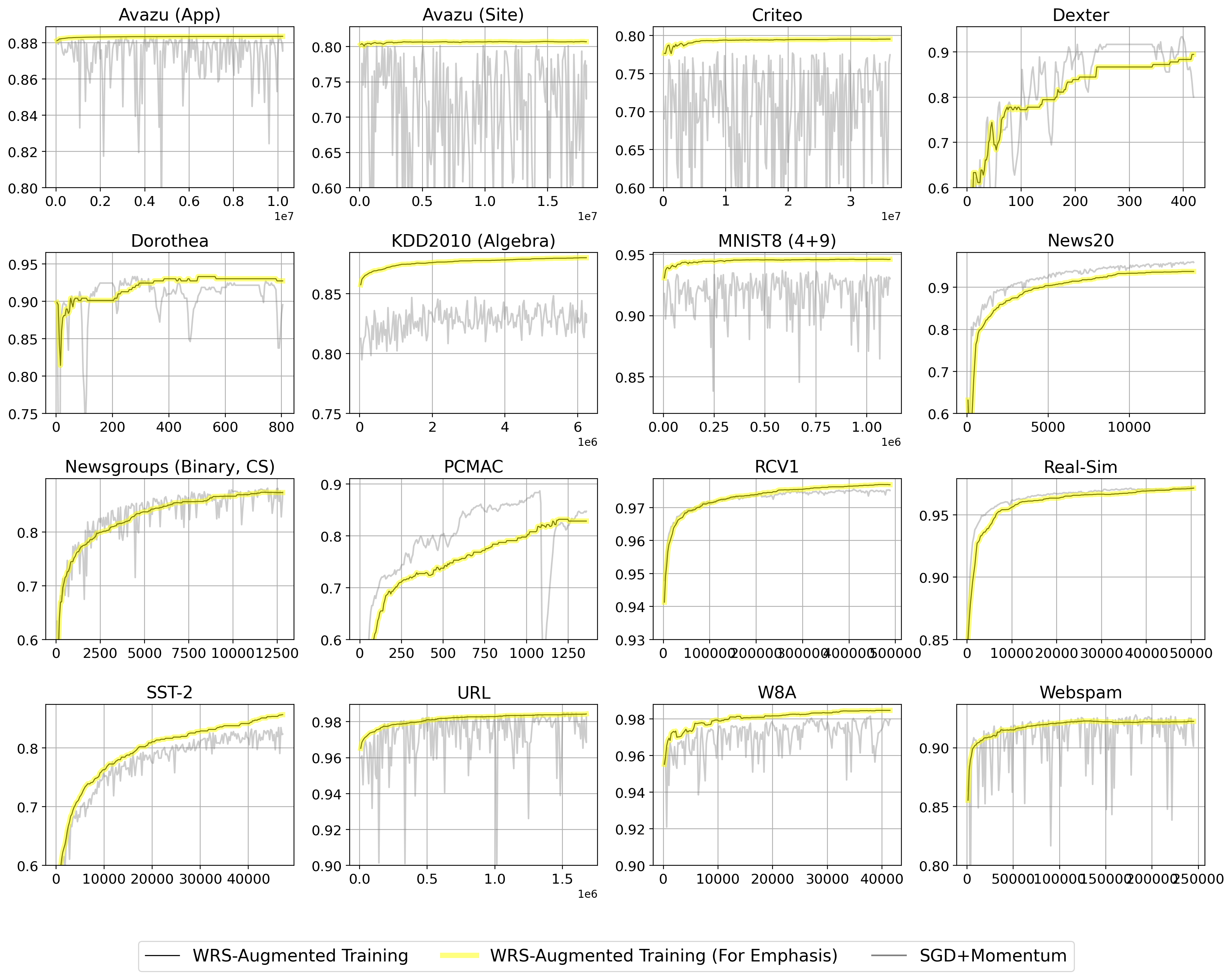}
     \caption{Test accuracies ($y$-axis) over timestep ($x$-axis) for modified WRS-Augmented Training $(K=64)$ on Stochastic Gradient Descent with Momentum.}
     \label{fig:WRS-SGDM}
\end{figure}

\begin{figure}[H]
     \centering
     \includegraphics[scale = 0.35]{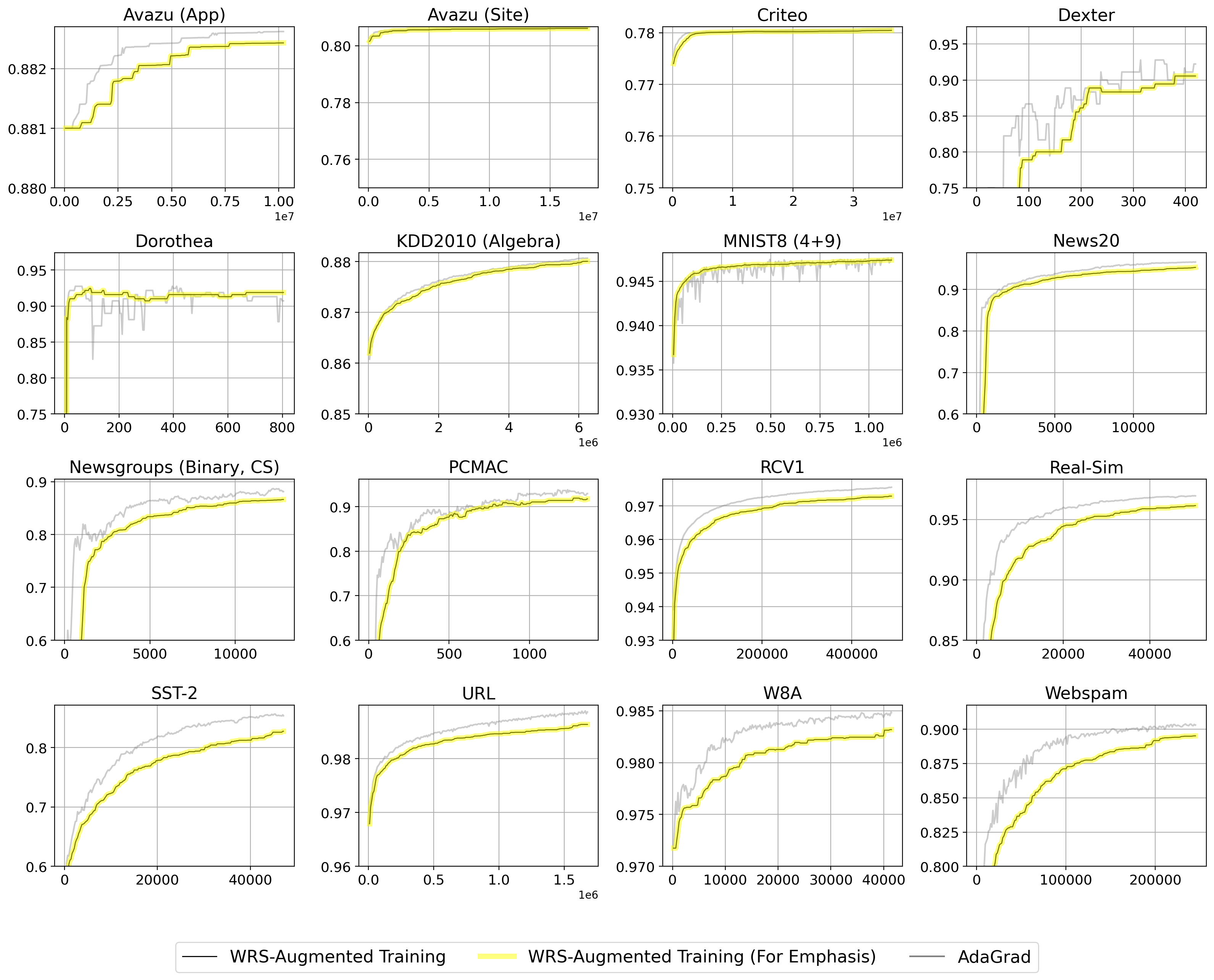}
     \caption{Test accuracies ($y$-axis) over timestep ($x$-axis) for modified WRS-Augmented Training $(K=64)$ on ADAGRAD.}
     \label{fig:WRS-ADA}
\end{figure}

\begin{figure}[H]
     \centering
     \includegraphics[scale = 0.35]{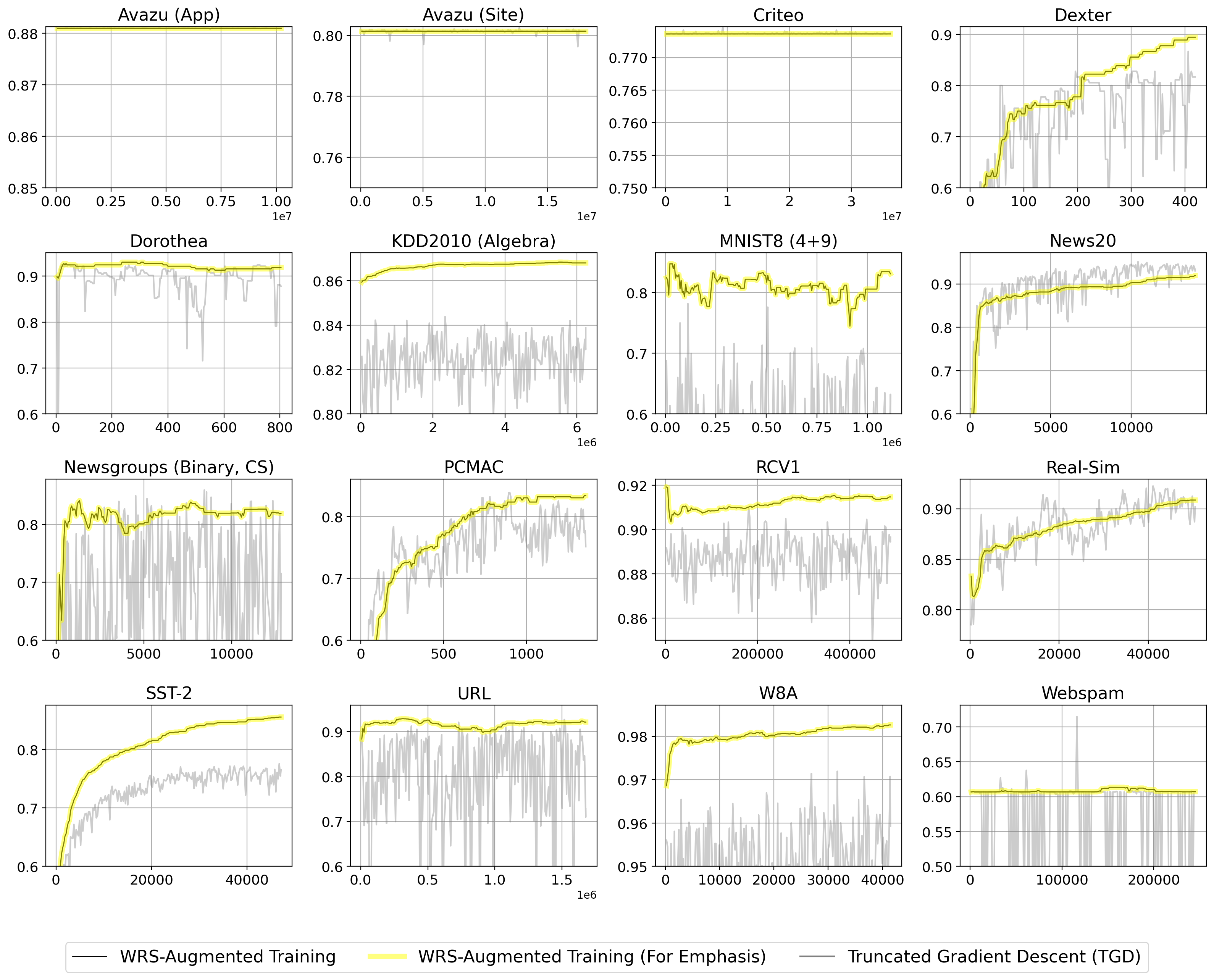}
     \caption{Test accuracies ($y$-axis) over timestep ($x$-axis) for modified WRS-Augmented Training $(K=64)$ on Truncated Gradient Descent.}
     \label{fig:WRS-TGD}
\end{figure}

\newpage

\section{Tables of $p$-values for Wilcoxon Signed-Rank Tests}
\label{wilcoxon-appendix}

Below, we provide full tables of $p$-values computed using Wilcoxon Signed-Rank Tests towards probing the statistical significance of the differences in ROP and final test accuracy between PAC/FSOL-WRS and their base model counterparts.

\begin{table}[H]
\caption{ Wilcoxon signed-rank test $p$-values testing whether differences in \textbf{relative oracle performance} between $K=64$ PAC/FSOL-WRS variants and base PAC/FSOL methods are statistically significant.
\normalsize}
\label{table:wilcoxon-ROP-WRS}
\centering
\begin{adjustbox}{width=0.55\textwidth}
\begin{tabular}{c|cc|cc}
  & \multicolumn{2}{c|}{\textbf{Simple Average}} & \multicolumn{2}{c}{\textbf{Weighted Average}} \\
  \midrule
  & Standard & Exponential & Standard & Exponential \\
  \midrule
  \textbf{PAC} & 0.0150 & 0.0199 & 0.0386 & 0.325 \\
  \textbf{FSOL} & 0.000437 & 0.0879 & 0.000437 & 0.0787 \\
  \hline
\end{tabular}
\end{adjustbox}
\end{table}

\begin{table}[H]
\caption{ Wilcoxon signed-rank test $p$-values testing whether differences in \textbf{final test accuracy} between $K=64$ PAC/FSOL-WRS variants and base PAC/FSOL methods are statistically significant. \normalsize}
\label{table:final-test-acc-WRS-wilcoxon}
\centering
\begin{adjustbox}{width=0.55\textwidth}
\begin{tabular}{c|cc|cc}
  & \multicolumn{2}{c|}{\textbf{Simple Average}} & \multicolumn{2}{c}{\textbf{Weighted Average}} \\
  \midrule
  & Standard & Exponential & Standard & Exponential \\
  \midrule
  \textbf{PAC} & 0.408 & 0.196 & 0.501 & 0.856 \\
  \textbf{FSOL} & 0.00836 & 0.313 & 0.00836 & 0.679 \\
  \hline
\end{tabular}
\end{adjustbox}
\end{table}

\newpage
\section{Additional results figures for PAC-WRS and FSOL-WRS}

\subsection{Uncertainty-quantified aggregate metrics across runs}
\label{errorbars-appendix}

Below, we provide figures with error bars (minimum and maximum across 5 trials) showing the relative oracle performances (ROP), final test accuracies, and final sparsities of PAC-WRS and FSOL-WRS alongside the base PAC and FSOL models, across all 16 datasets. The main takeaways are that a) PAC-WRS and FSOL-WRS overall incur substantially-lower ROP than their corresponding base models; b) PAC-WRS and FSOL-WRS overall achieve comparable, if not improved, final test accuracy compared to their corresponding base models, consistently with lower variance, too; c) the final sparsities of PAC-WRS and FSOL-WRS are overall comparable, if not higher than those of their corresponding base models.

\subsubsection{FSOL and FSOL-WRS}
\label{errorbars-appendix-fsol}

\begin{figure}[H]
     \centering
     \includegraphics[scale = 0.39]{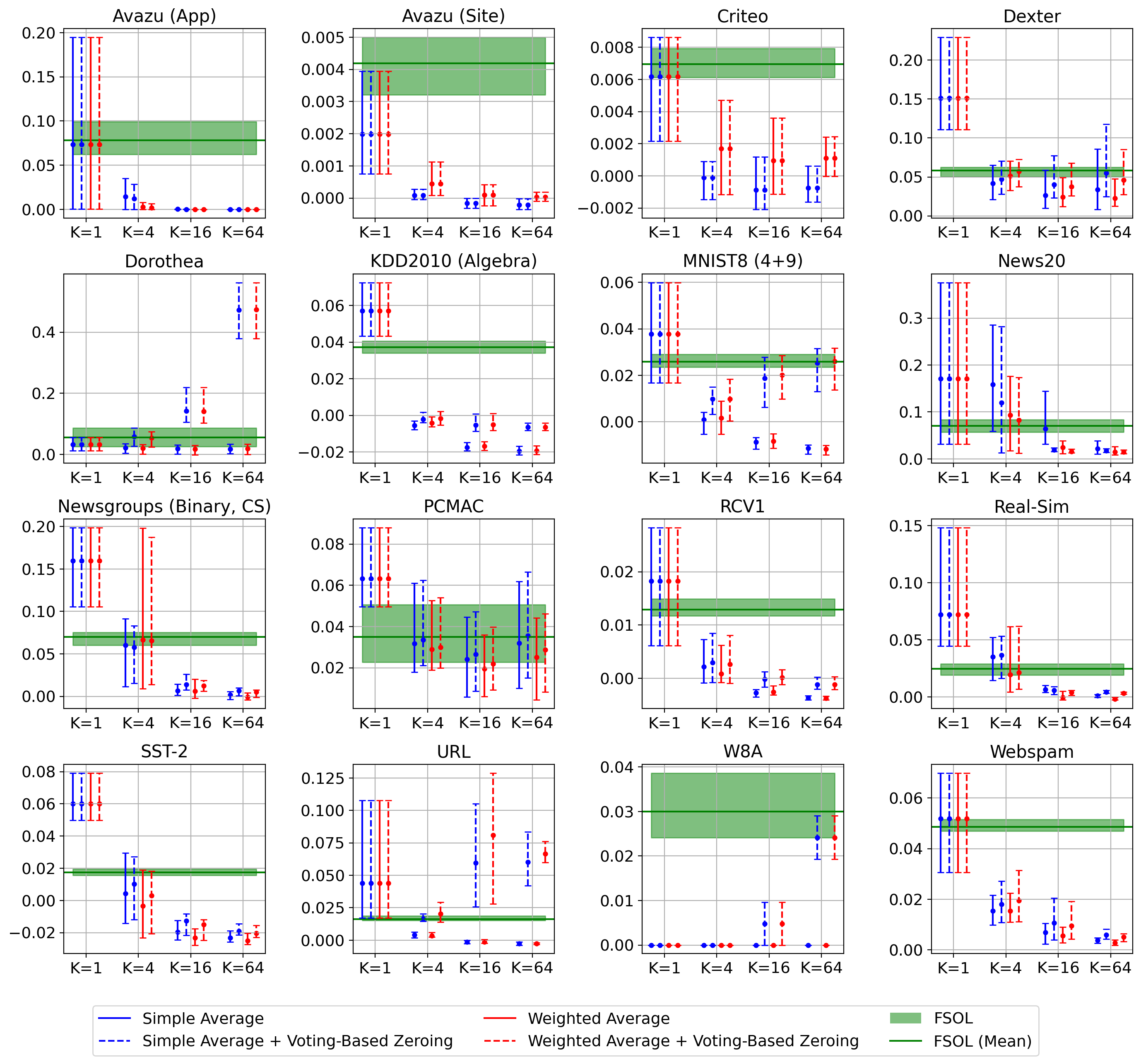}
     \caption{ Relative oracle performances  ($y$-axis) of base FSOL and FSOL-WRS using \textbf{standard} weights over reservoir sizes $K$ ($x$-axis) on all datasets. Error bars represent the minimum and maximum values achieved across 5 randomized trials. \textbf{Blue:} WRS-augmented variants via simple average of reservoir members. \textbf{Red:} WRS-augmented variants via weighted average of reservoir members. \textbf{Dotted lines:} indicates voting-based zeroing was performed for additional sparsity. \textbf{Lower values indicate more stable performance.} \normalsize}
     \label{fig:FSOL_ROP_errbar-dense}
\end{figure}

\begin{figure}[H]
     \centering
     \includegraphics[scale = 0.39]{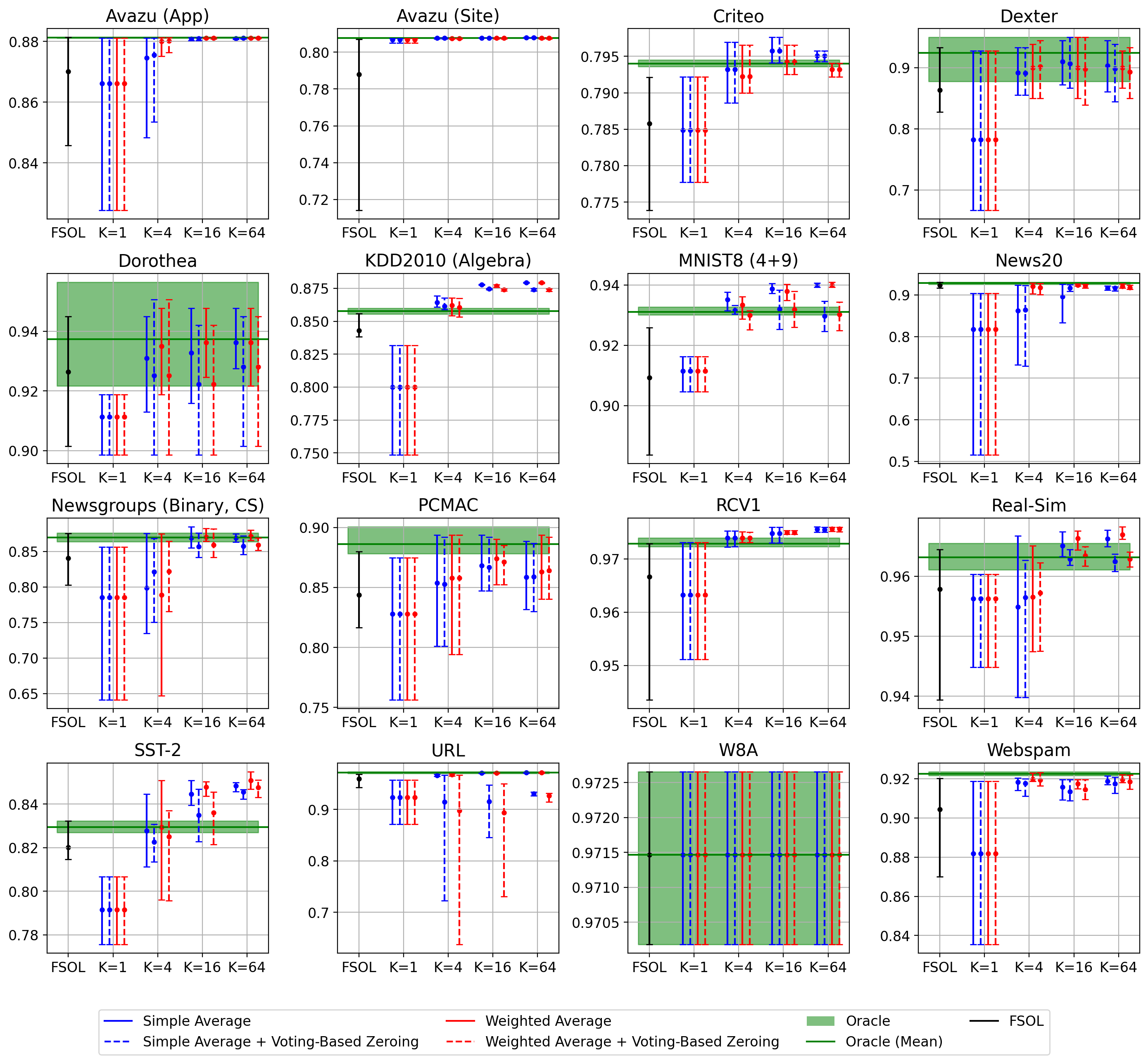}
     \caption{ Final test accuracies  ($y$-axis) of base FSOL and FSOL-WRS using \textbf{standard} weights over reservoir sizes $K$ ($x$-axis) on all datasets. Error bars represent the minimum and maximum values achieved across 5 randomized trials. \textbf{Blue:} WRS-augmented variants via simple average of reservoir members. \textbf{Red:} WRS-augmented variants via weighted average of reservoir members. \textbf{Dotted lines:} indicates voting-based zeroing was performed for additional sparsity.\normalsize}
     \label{fig:FSOL_final-test-acc_errbar-dense}
\end{figure}

\begin{figure}[H]
     \centering
     \includegraphics[scale = 0.39]{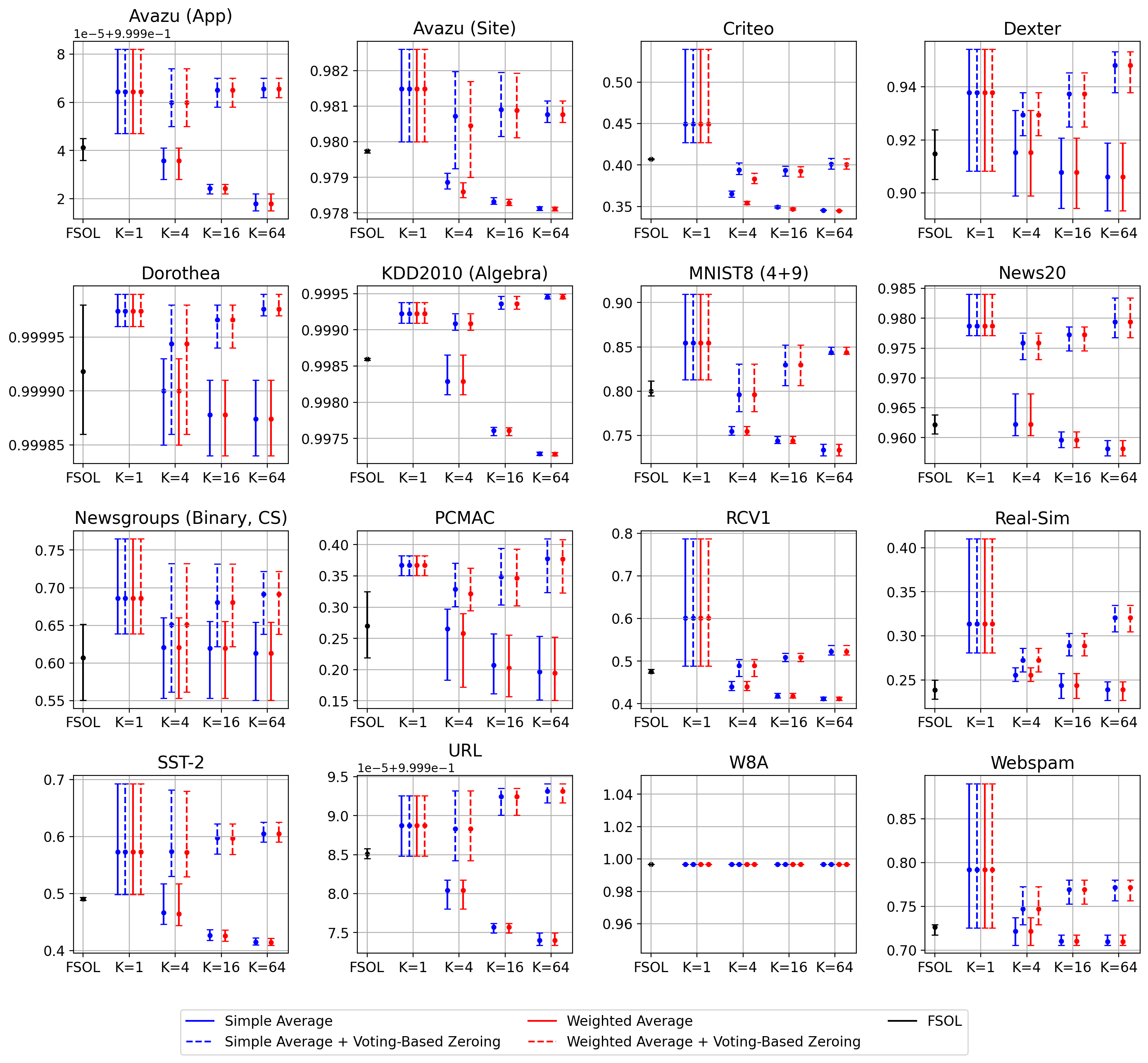}
     \caption{ Final sparsities  ($y$-axis) of base FSOL and FSOL-WRS using \textbf{standard} weights over reservoir sizes $K$ ($x$-axis) on all datasets. Error bars represent the minimum and maximum values achieved across 5 randomized trials. \textbf{Blue:} WRS-augmented variants via simple average of reservoir members. \textbf{Red:} WRS-augmented variants via weighted average of reservoir members. \textbf{Dotted lines:} indicates voting-based zeroing was performed for additional sparsity.\normalsize}
     \label{fig:FSOL_final-sparsities_errbar-dense}
\end{figure}

\begin{figure}[H]
     \centering
     \includegraphics[scale = 0.39]{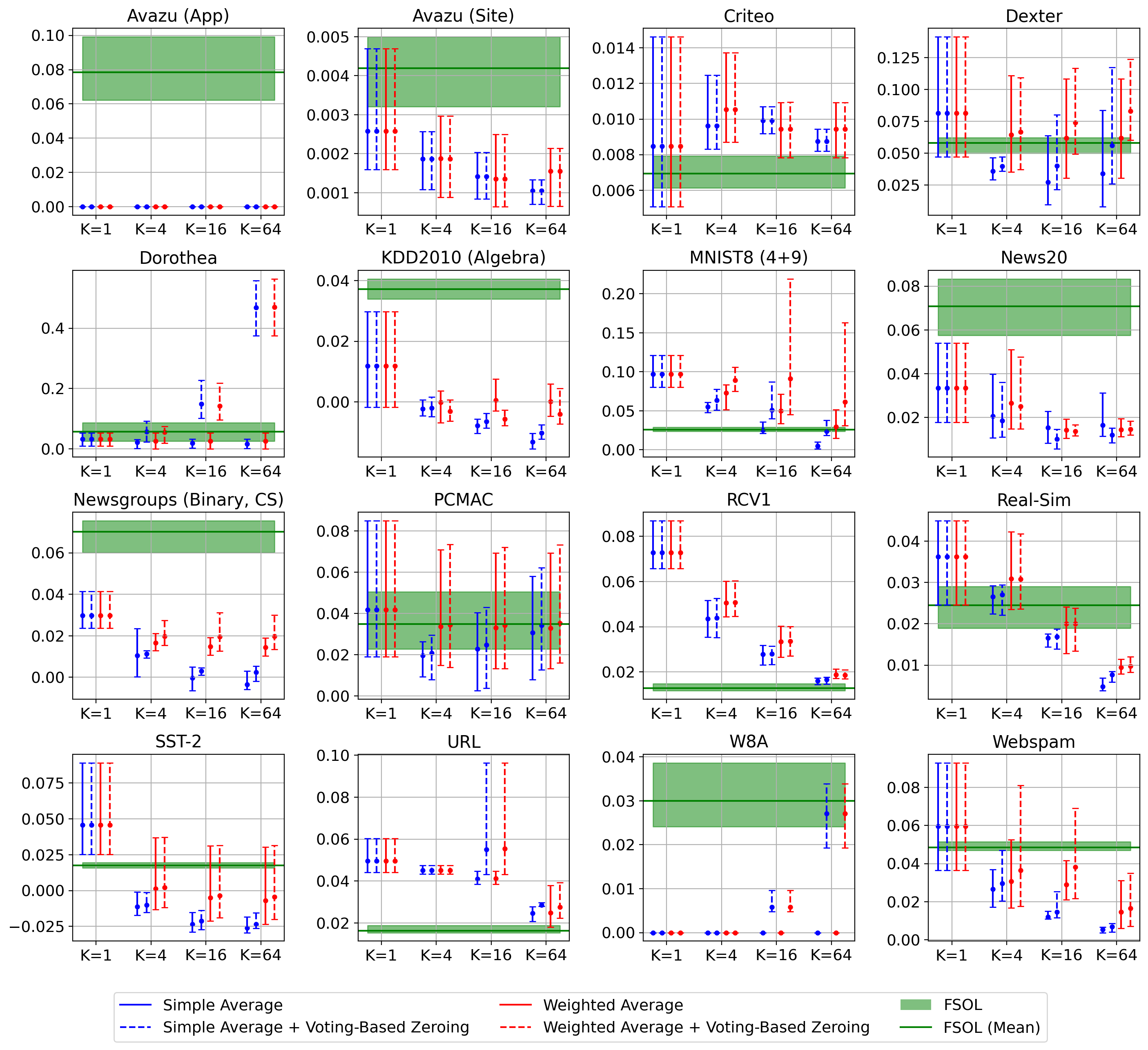}
     \caption{ Relative oracle performances  ($y$-axis) of base FSOL and FSOL-WRS using \textbf{exponential} weights over reservoir sizes $K$ ($x$-axis) on all datasets. Error bars represent the minimum and maximum values achieved across 5 randomized trials. \textbf{Blue:} WRS-augmented variants via simple average of reservoir members. \textbf{Red:} WRS-augmented variants via weighted average of reservoir members. \textbf{Dotted lines:} indicates voting-based zeroing was performed for additional sparsity. \textbf{Lower values indicate more stable performance.} \normalsize}
     \label{fig:FSOL_ROP_errbar-exp-dense}
\end{figure}

\begin{figure}[H]
     \centering
     \includegraphics[scale = 0.39]{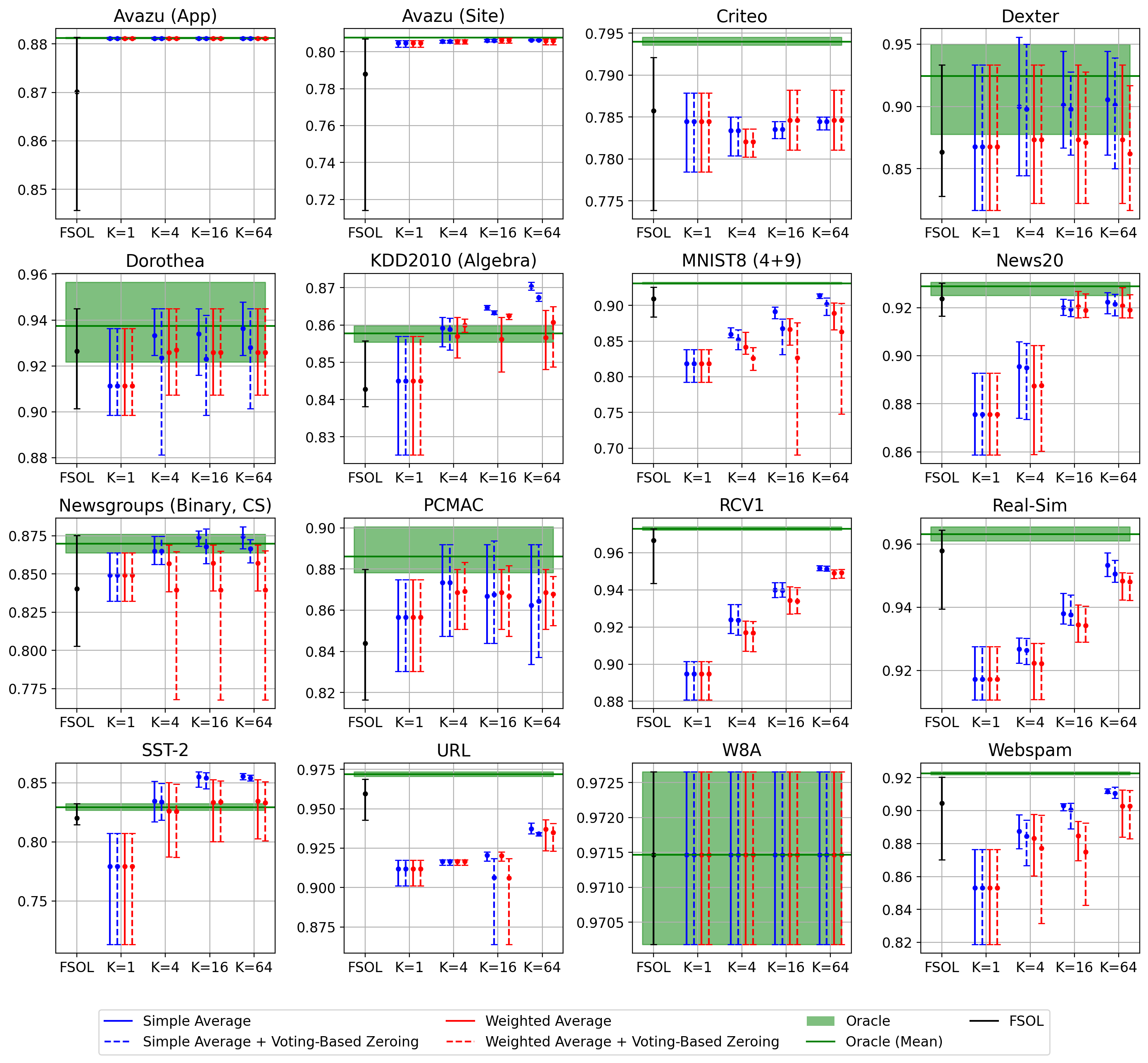}
     \caption{ Final test accuracies  ($y$-axis) of base FSOL and FSOL-WRS using \textbf{exponential} weights over reservoir sizes $K$ ($x$-axis) on all datasets. Error bars represent the minimum and maximum values achieved across 5 randomized trials. \textbf{Blue:} WRS-augmented variants via simple average of reservoir members. \textbf{Red:} WRS-augmented variants via weighted average of reservoir members. \textbf{Dotted lines:} indicates voting-based zeroing was performed for additional sparsity.\normalsize}
     \label{fig:FSOL_final-test-acc_errbar-exp-dense}
\end{figure}

\begin{figure}[H]
     \centering
     \includegraphics[scale = 0.39]{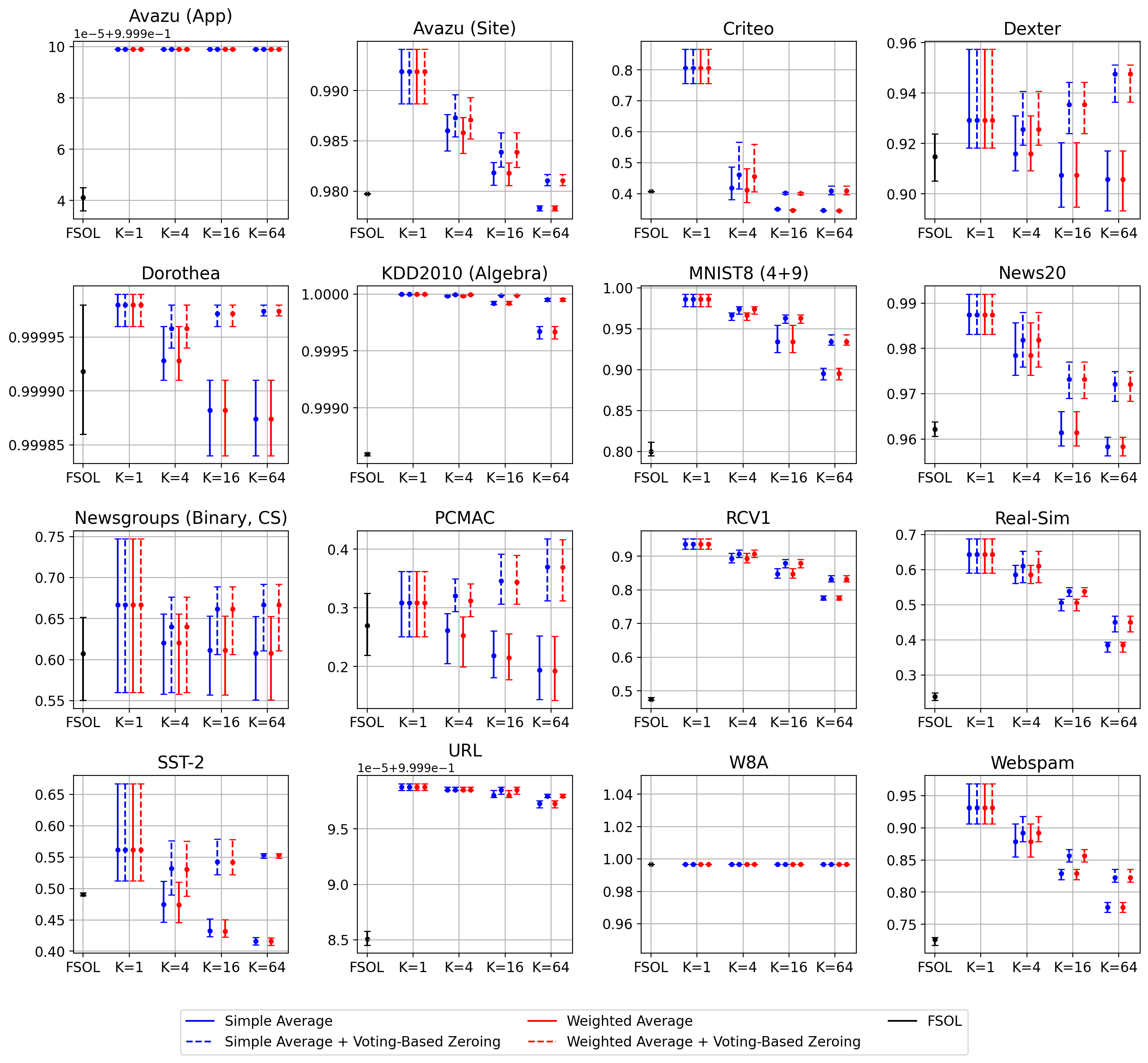}
     \caption{ Final sparsities  ($y$-axis) of base FSOL and FSOL-WRS using \textbf{exponential} weights over reservoir sizes $K$ ($x$-axis) on all datasets. Error bars represent the minimum and maximum values achieved across 5 randomized trials. \textbf{Blue:} WRS-augmented variants via simple average of reservoir members. \textbf{Red:} WRS-augmented variants via weighted average of reservoir members. \textbf{Dotted lines:} indicates voting-based zeroing was performed for additional sparsity.\normalsize}
     \label{fig:FSOL_final-sparsities_errbar-exp-dense}
\end{figure}

\subsubsection{PAC and PAC-WRS}
\label{errorbars-appendix-pac}

\begin{figure}[H]
     \centering
     \includegraphics[scale = 0.39]{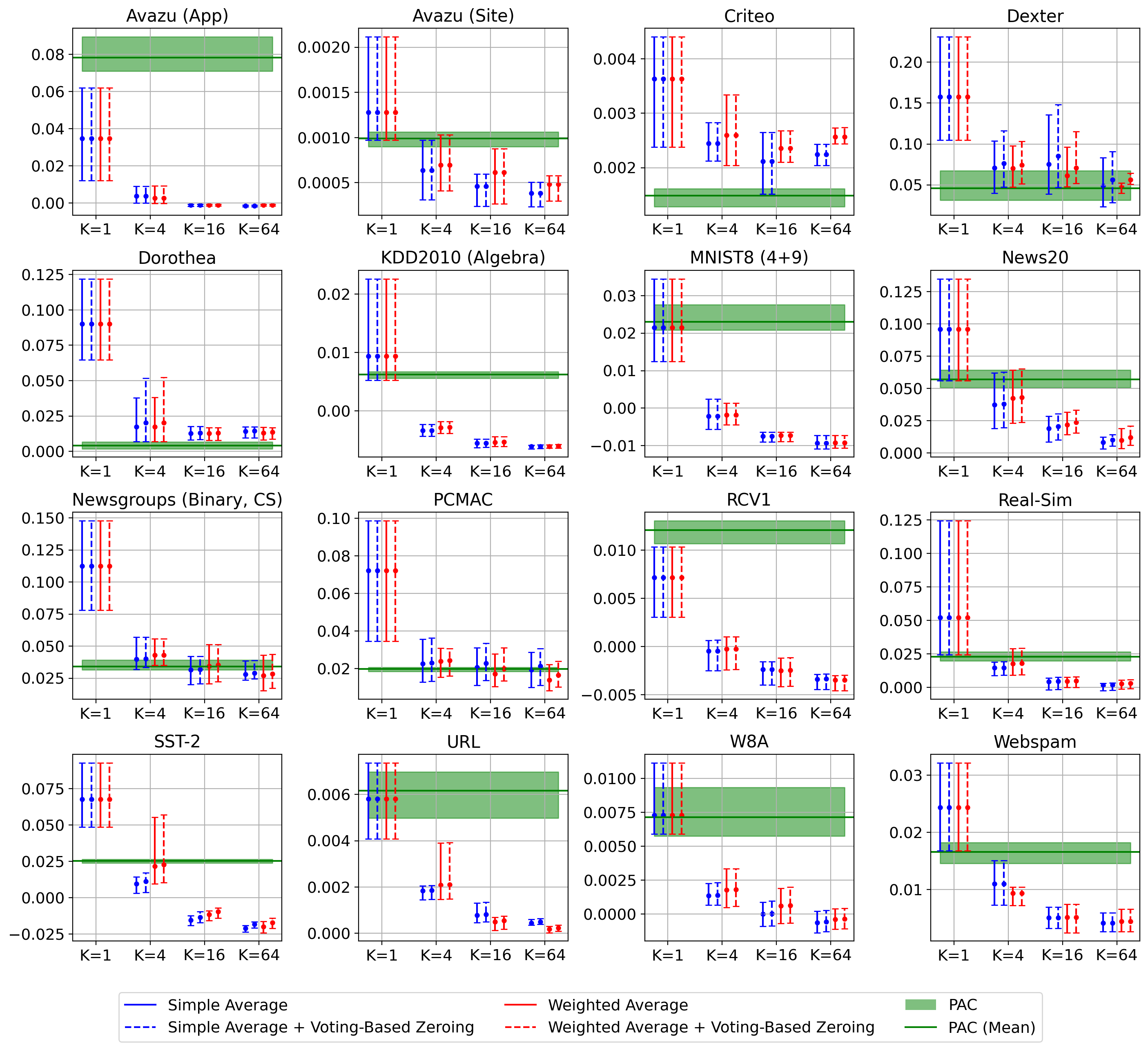}
     \caption{ Relative oracle performances  ($y$-axis) of base PAC and PAC-WRS using \textbf{standard} weights over reservoir sizes $K$ ($x$-axis) on all datasets. Error bars represent the minimum and maximum values achieved across 5 randomized trials. \textbf{Blue:} WRS-augmented variants via simple average of reservoir members. \textbf{Red:} WRS-augmented variants via weighted average of reservoir members. \textbf{Dotted lines:} indicates voting-based zeroing was performed for additional sparsity. \textbf{Lower values indicate more stable performance.} \normalsize}
     \label{fig:PAC_ROP_errbar-dense}
\end{figure}

\begin{figure}[H]
     \centering
     \includegraphics[scale = 0.39]{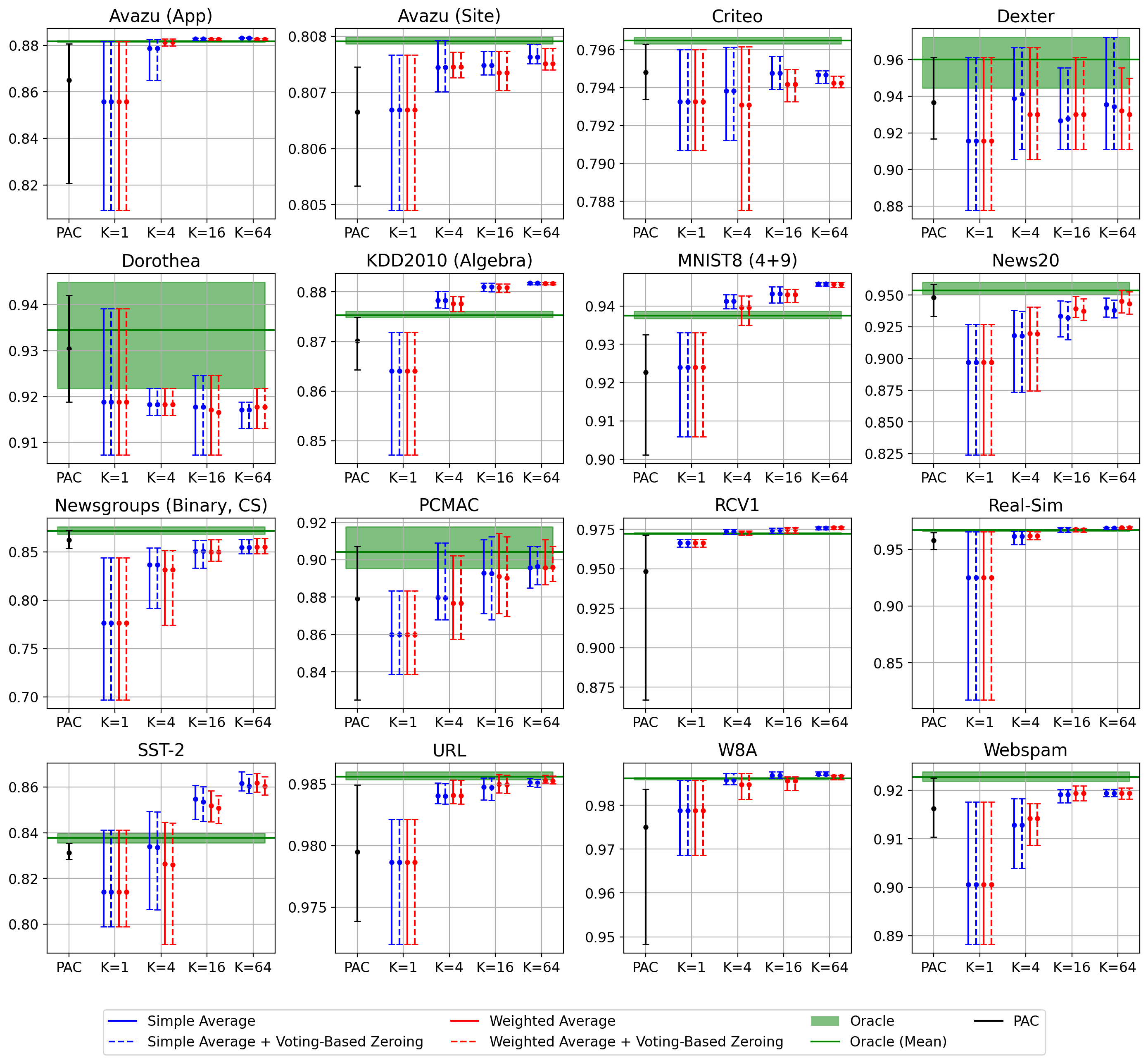}
     \caption{ Final test accuracies  ($y$-axis) of base PAC and PAC-WRS using \textbf{standard} weights over reservoir sizes $K$ ($x$-axis) on all datasets. Error bars represent the minimum and maximum values achieved across 5 randomized trials. \textbf{Blue:} WRS-augmented variants via simple average of reservoir members. \textbf{Red:} WRS-augmented variants via weighted average of reservoir members. \textbf{Dotted lines:} indicates voting-based zeroing was performed for additional sparsity.\normalsize}
     \label{fig:PAC_final-test-acc_errbar-dense}
\end{figure}

\begin{figure}[H]
     \centering
     \includegraphics[scale = 0.39]{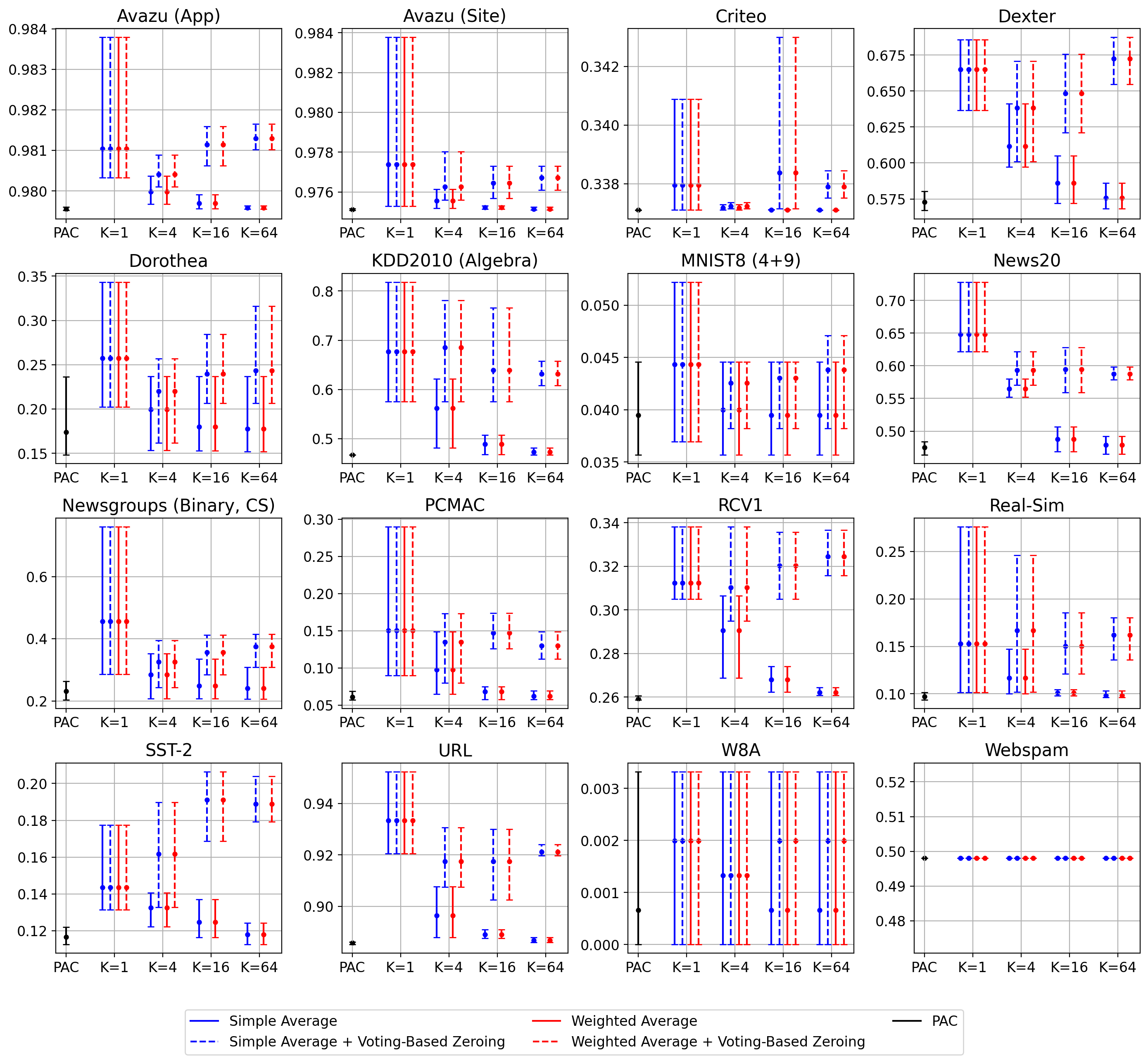}
     \caption{ Final sparsities  ($y$-axis) of base PAC and PAC-WRS using \textbf{standard} weights over reservoir sizes $K$ ($x$-axis) on all datasets. Error bars represent the minimum and maximum values achieved across 5 randomized trials. \textbf{Blue:} WRS-augmented variants via simple average of reservoir members. \textbf{Red:} WRS-augmented variants via weighted average of reservoir members. \textbf{Dotted lines:} indicates voting-based zeroing was performed for additional sparsity.\normalsize}
     \label{fig:PAC_final-sparsities_errbar-dense}
\end{figure}

\begin{figure}[H]
     \centering
     \includegraphics[scale = 0.39]{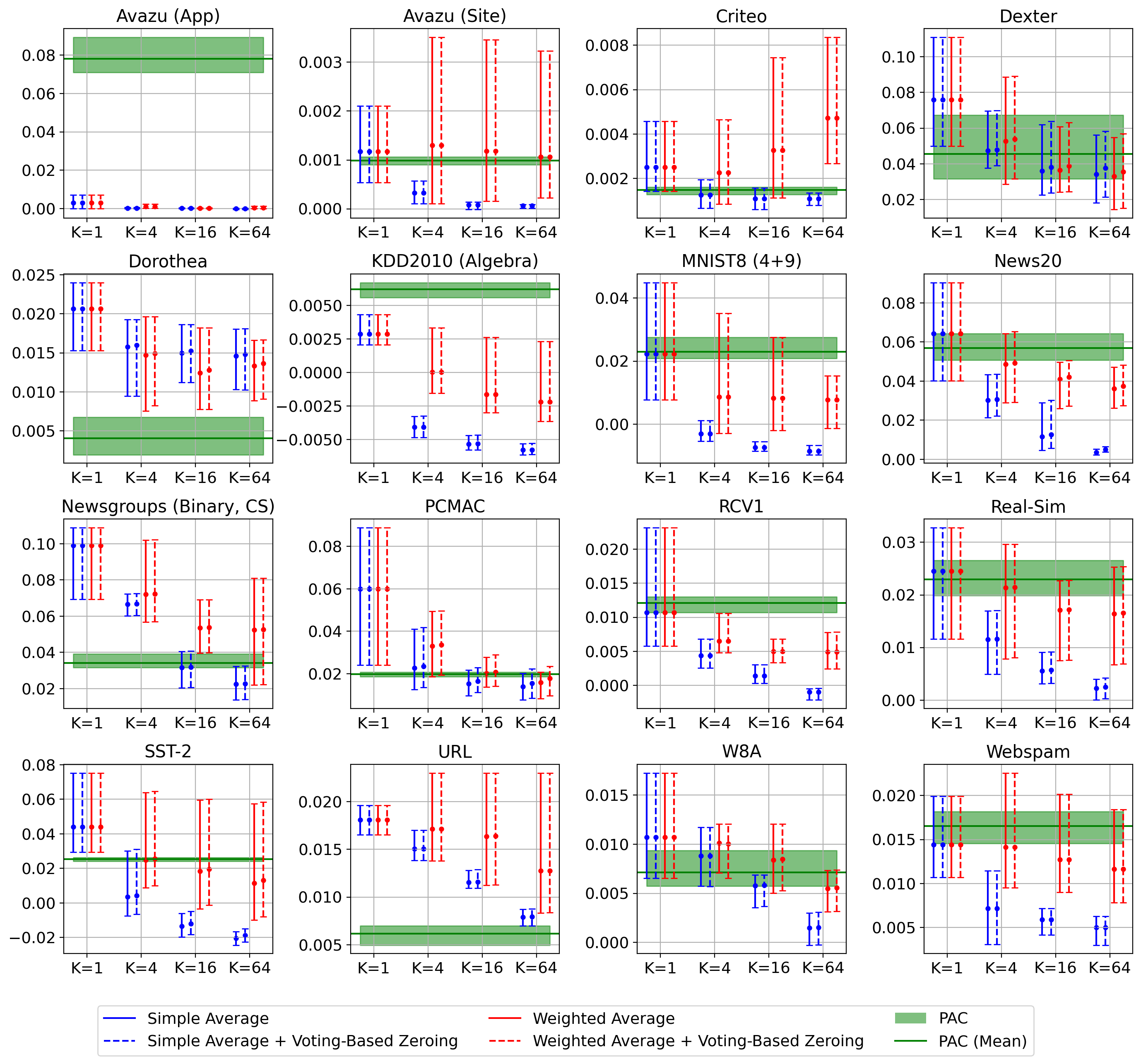}
     \caption{ Relative oracle performances  ($y$-axis) of base PAC and PAC-WRS using \textbf{exponential} weights over reservoir sizes $K$ ($x$-axis) on all datasets. Error bars represent the minimum and maximum values achieved across 5 randomized trials. \textbf{Blue:} WRS-augmented variants via simple average of reservoir members. \textbf{Red:} WRS-augmented variants via weighted average of reservoir members. \textbf{Dotted lines:} indicates voting-based zeroing was performed for additional sparsity. \textbf{Lower values indicate more stable performance.} \normalsize}
     \label{fig:PAC_ROP_errbar-exp-dense}
\end{figure}

\begin{figure}[H]
     \centering
     \includegraphics[scale = 0.39]{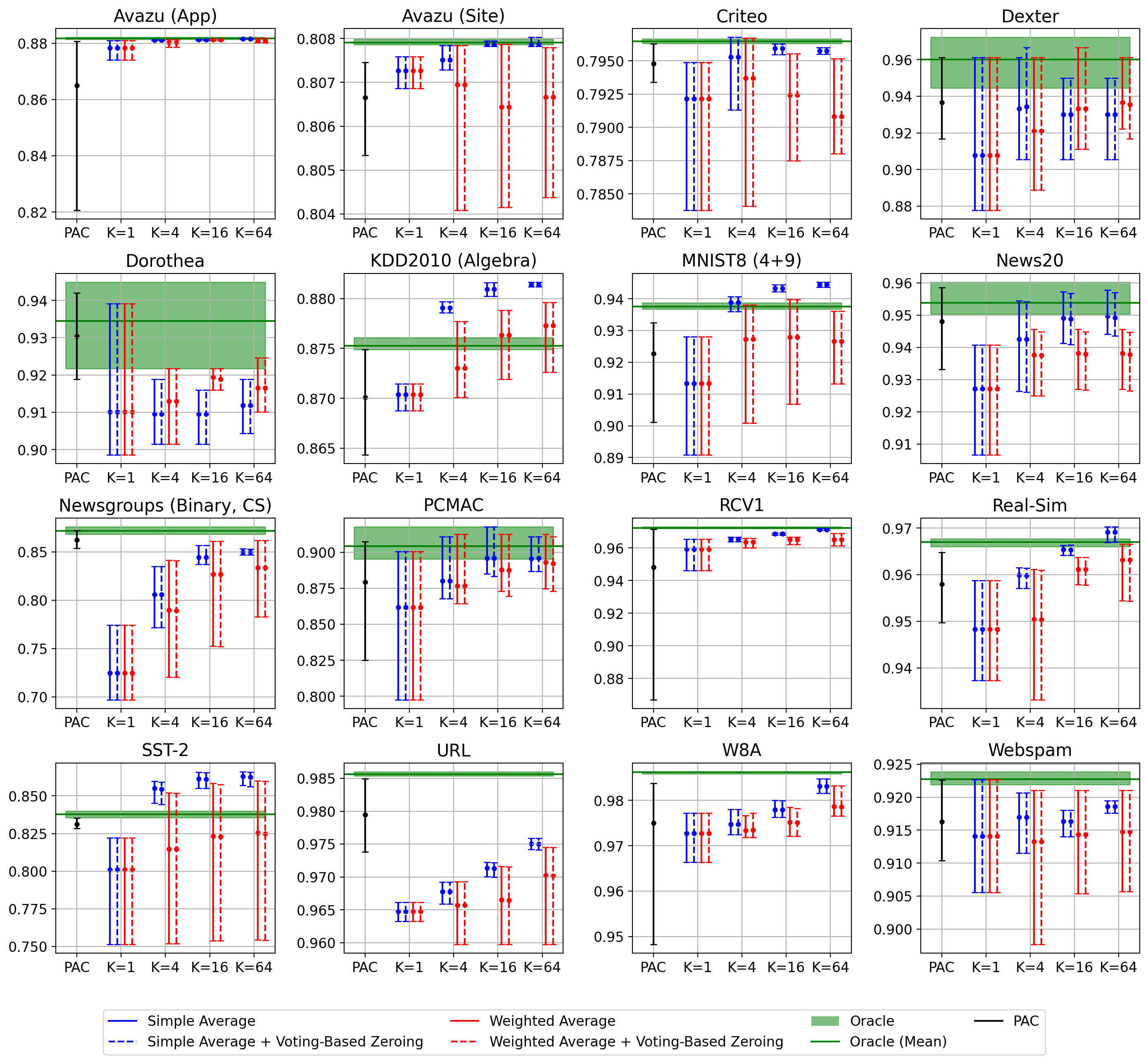}
     \caption{ Final test accuracies  ($y$-axis) of base PAC and PAC-WRS using \textbf{exponential} weights over reservoir sizes $K$ ($x$-axis) on all datasets. Error bars represent the minimum and maximum values achieved across 5 randomized trials. \textbf{Blue:} WRS-augmented variants via simple average of reservoir members. \textbf{Red:} WRS-augmented variants via weighted average of reservoir members. \textbf{Dotted lines:} indicates voting-based zeroing was performed for additional sparsity.\normalsize}
     \label{fig:PAC_final-test-acc_errbar-exp-dense}
\end{figure}

\begin{figure}[H]
     \centering
     \includegraphics[scale = 0.39]{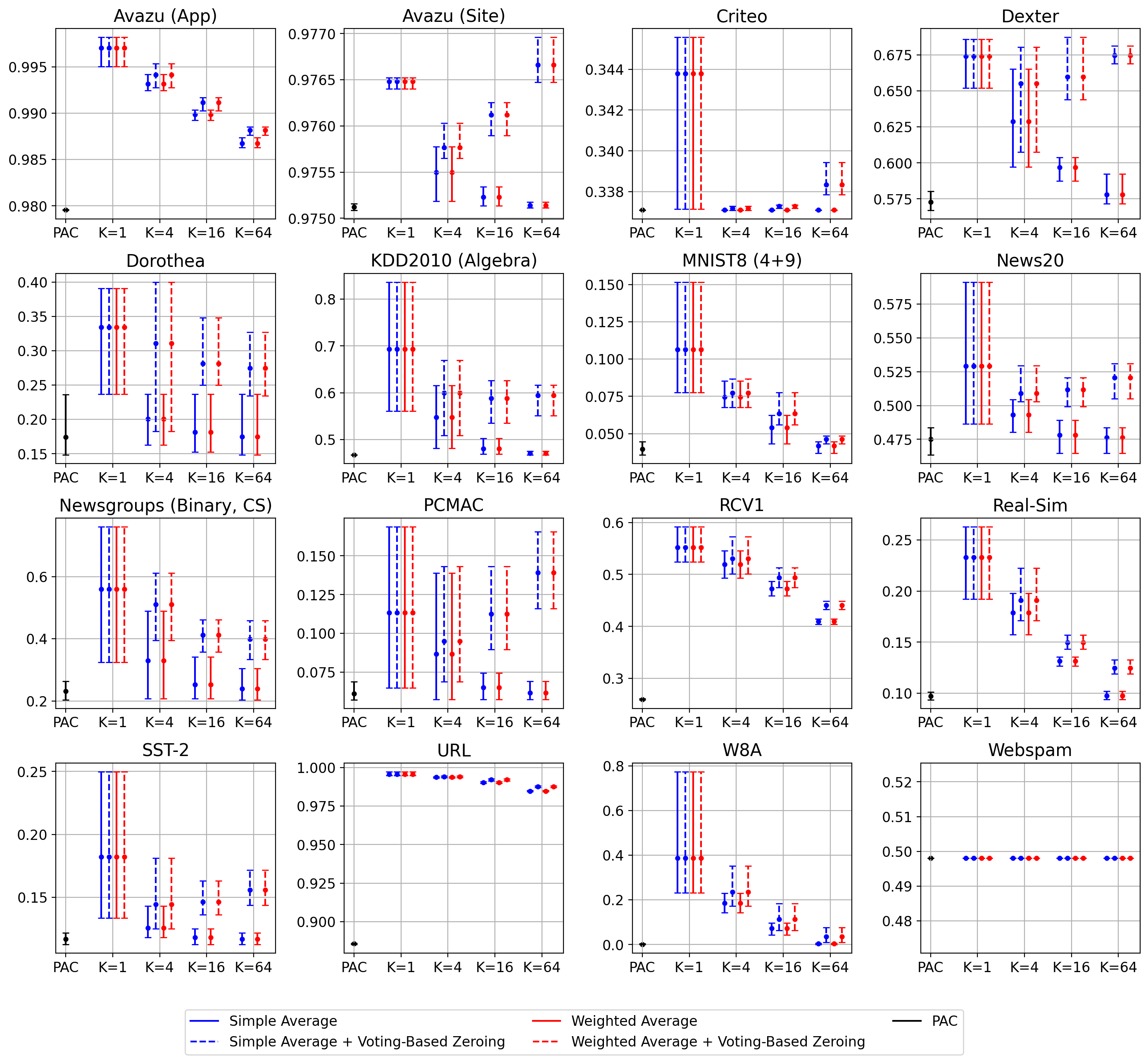}
     \caption{ Final sparsities  ($y$-axis) of base PAC and PAC-WRS using \textbf{exponential} weights over reservoir sizes $K$ ($x$-axis) on all datasets. Error bars represent the minimum and maximum values achieved across 5 randomized trials. \textbf{Blue:} WRS-augmented variants via simple average of reservoir members. \textbf{Red:} WRS-augmented variants via weighted average of reservoir members. \textbf{Dotted lines:} indicates voting-based zeroing was performed for additional sparsity.\normalsize}
     \label{fig:PAC_final-sparsities_errbar-exp-dense}
\end{figure}

\newpage

\subsection{Test accuracies and sparsities over time}
\label{over-time-appendix}

Below, we provide figures showing test accuracies and sparsities over time on individual runs of PAC-WRS and FSOL-WRS, compared against their base model counterparts, on all 16 datasets. The main takeaways are that a) test accuracies over time on PAC-WRS and FSOL-WRS are overall much stabler, if not also higher, than their base model counterparts (and much closer to that of the oracle); b) applying WAT to PAC and FSOL does not significantly impact sparsity --- in some cases, especially with voting-based zeroing, sparsity actually increases compared to the base model!

\subsubsection{FSOL and FSOL-WRS}
\label{over-time-appendix-fsol}

\begin{figure}[H]
     \centering
     \includegraphics[scale = 0.39]{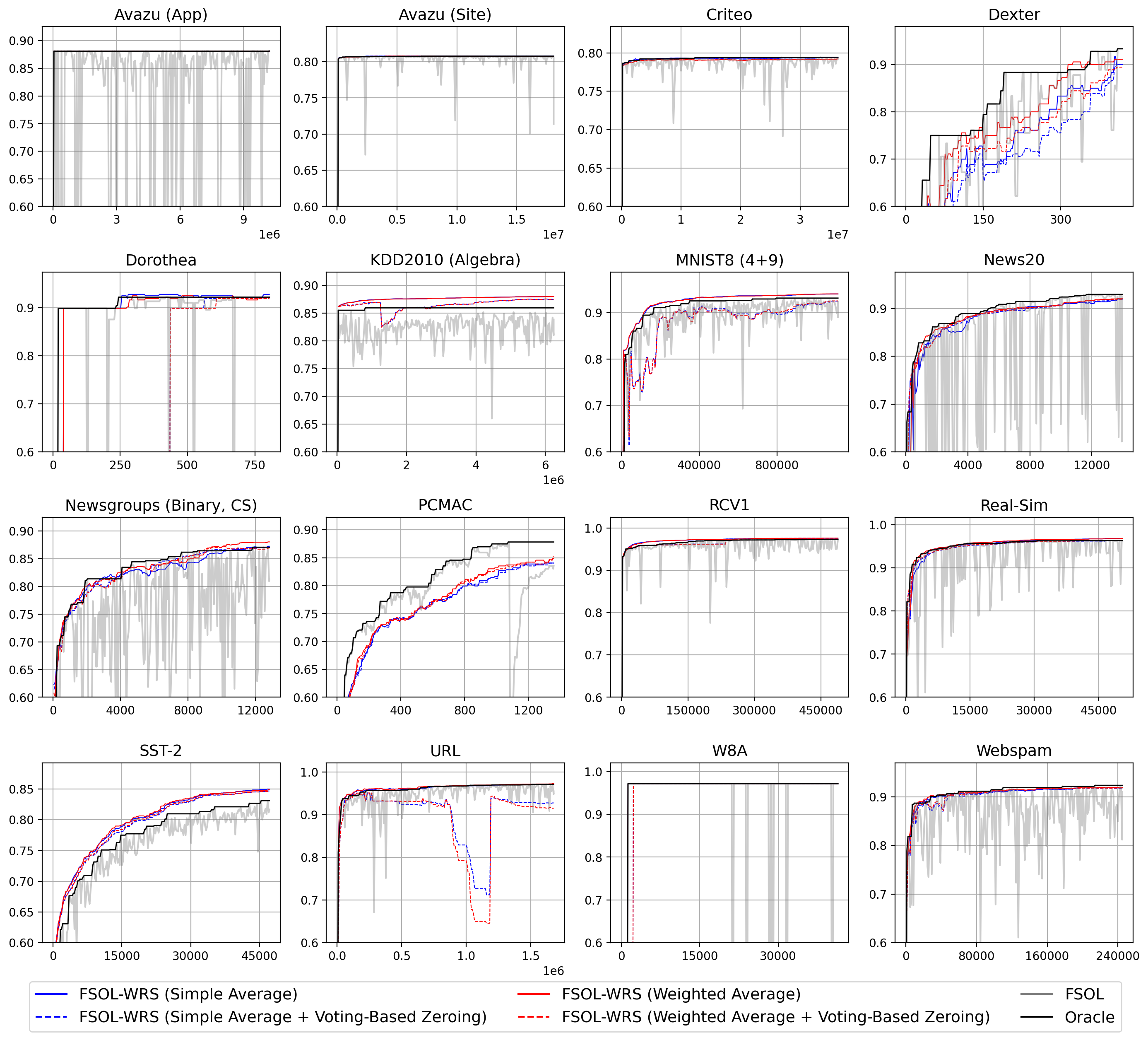}
     \caption{ Test accuracies ($y$-axis) over timestep ($x$-axis) for FSOL-WRS with reservoir size $K=64$ using standard weighting on all 16 tested datasets. \textbf{Light grey lines:} test accuracies of the FSOL baseline methods at each timestep. \textbf{Solid black lines:} test accuracies of the ``oracle" models, computed as the cumulative maximum of the FSOL baselines. \textbf{Blue:} corresponds to FSOL-WRS variants ensembled via simple averaging. \textbf{Red:} corresponds to FSOL-WRS variants ensembled via weighted averaging. \textbf{Dotted lines:} indicate whether voting-based zeroing was applied for additional sparsity. \normalsize}
     \label{fig:FSOL_ws=dense_metric=test-set-acc_K=64}
\end{figure}

\begin{figure}[H]
     \centering
     \includegraphics[scale = 0.39]{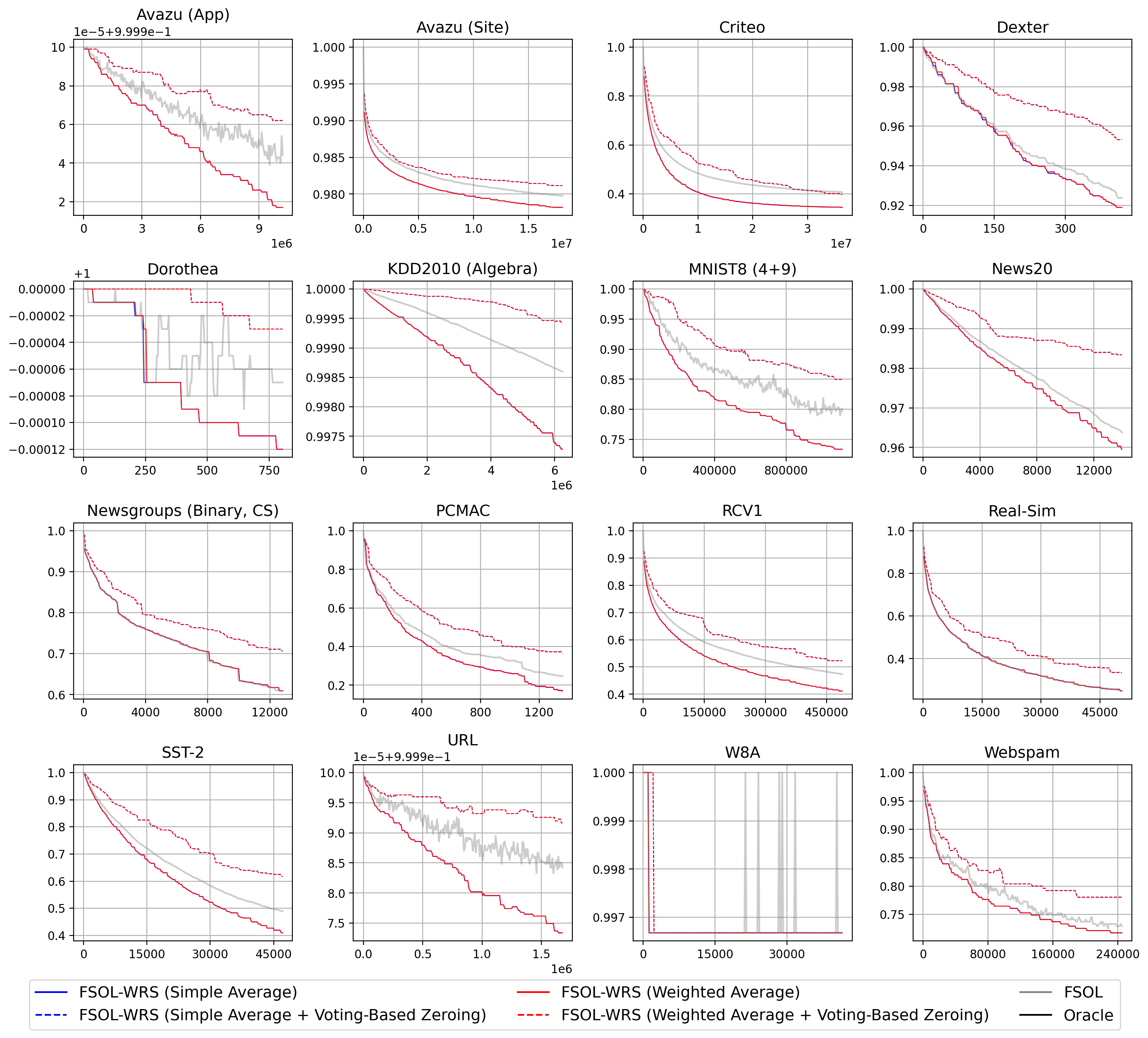}
     \caption{ Sparsity ($y$-axis) over timestep ($x$-axis) for FSOL-WRS with reservoir size $K=64$ using standard weighting on all 16 tested datasets. \textbf{Light grey lines:} sparsities of the FSOL baseline methods at each timestep. \textbf{Blue:} corresponds to FSOL-WRS variants ensembled via simple averaging. \textbf{Red:} corresponds to FSOL-WRS variants ensembled via weighted averaging. \textbf{Dotted lines:} indicate whether voting-based zeroing was applied for additional sparsity. \normalsize}
     \label{fig:FSOL_ws=dense_metric=sparsity_K=64}
\end{figure}

\begin{figure}[H]
     \centering
     \includegraphics[scale = 0.39]{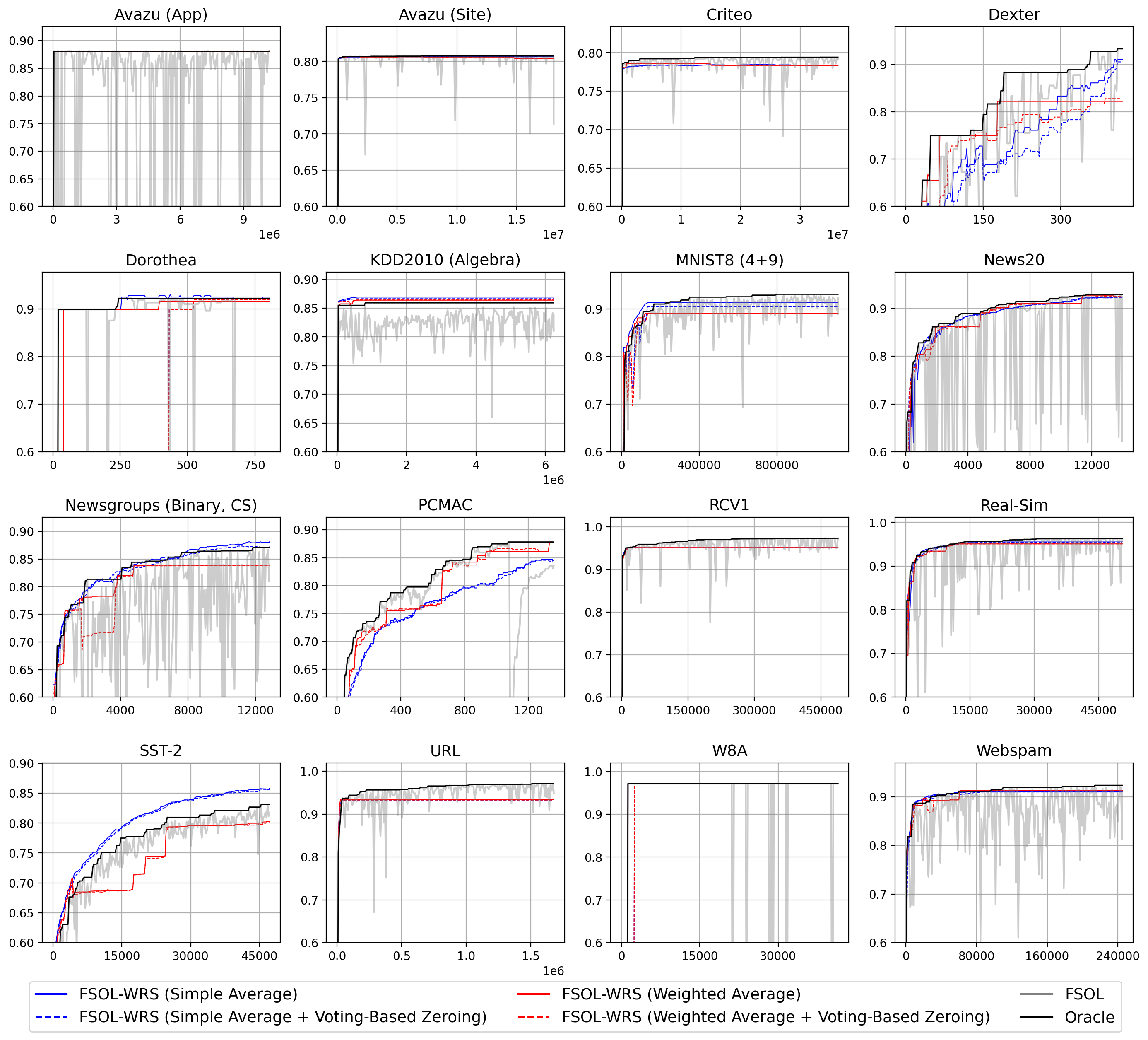}
     \caption{ Test accuracies ($y$-axis) over timestep ($x$-axis) for FSOL-WRS with reservoir size $K=64$ using exponential weighting on all 16 tested datasets. \textbf{Light grey lines:} test accuracies of the FSOL baseline methods at each timestep. \textbf{Solid black lines:} test accuracies of the ``oracle" models, computed as the cumulative maximum of the FSOL baselines. \textbf{Blue:} corresponds to FSOL-WRS variants ensembled via simple averaging. \textbf{Red:} corresponds to FSOL-WRS variants ensembled via weighted averaging. \textbf{Dotted lines:} indicate whether voting-based zeroing was applied for additional sparsity. \normalsize}
     \label{fig:FSOL_ws=exp-dense_metric=test-set-acc_K=64}
\end{figure}

\begin{figure}[H]
     \centering
     \includegraphics[scale = 0.39]{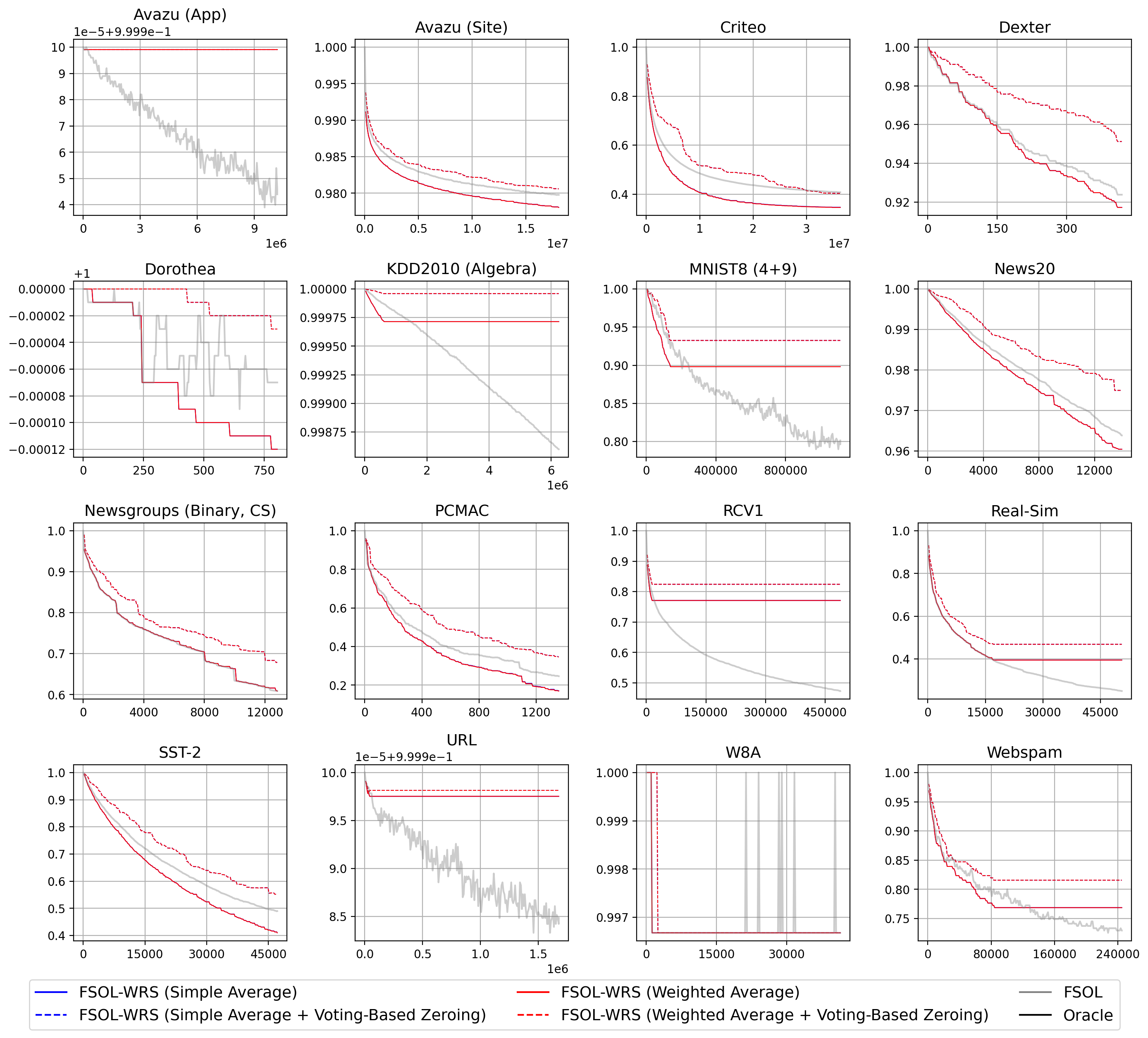}
     \caption{ Sparsity ($y$-axis) over timestep ($x$-axis) for FSOL-WRS with reservoir size $K=64$ using exponential weighting on all 16 tested datasets. \textbf{Light grey lines:} sparsities of the FSOL baseline methods at each timestep. \textbf{Blue:} corresponds to FSOL-WRS variants ensembled via simple averaging. \textbf{Red:} corresponds to FSOL-WRS variants ensembled via weighted averaging. \textbf{Dotted lines:} indicate whether voting-based zeroing was applied for additional sparsity. \normalsize}
     \label{fig:FSOL_ws=exp-dense_metric=sparsity_K=64}
\end{figure}

\subsubsection{PAC and PAC-WRS}
\label{over-time-appendix-pac}

\begin{figure}[H]
     \centering
     \includegraphics[scale = 0.39]{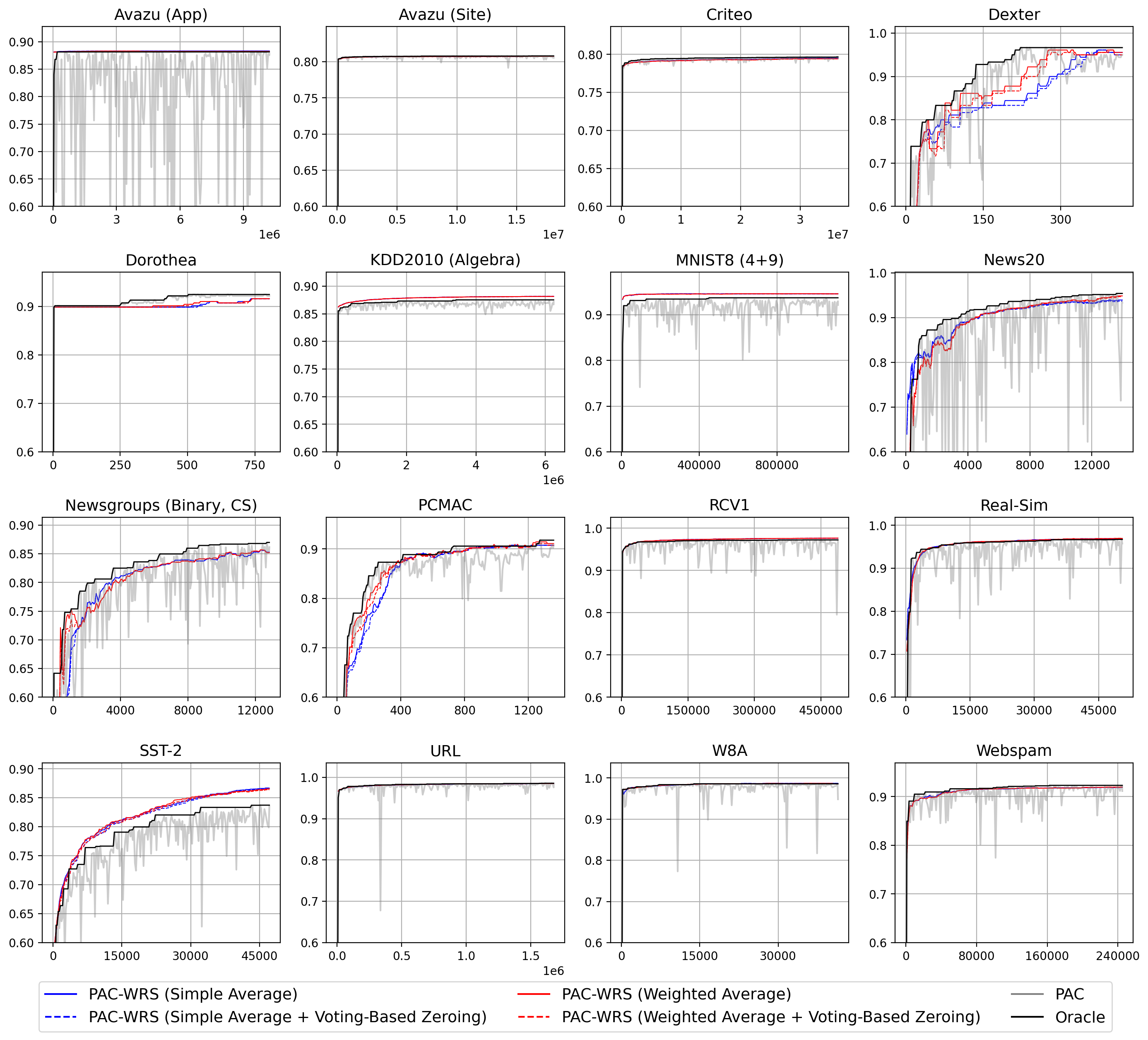}
     \caption{ Test accuracies ($y$-axis) over timestep ($x$-axis) for PAC-WRS with reservoir size $K=64$ using standard weighting on all 16 tested datasets. \textbf{Light grey lines:} test accuracies of the PAC baseline methods at each timestep. \textbf{Solid black lines:} test accuracies of the ``oracle" models, computed as the cumulative maximum of the PAC baselines. \textbf{Blue:} corresponds to PAC-WRS variants ensembled via simple averaging. \textbf{Red:} corresponds to PAC-WRS variants ensembled via weighted averaging. \textbf{Dotted lines:} indicate whether voting-based zeroing was applied for additional sparsity. \normalsize}
     \label{fig:PAC_ws=dense_metric=test-set-acc_K=64}
\end{figure}

\begin{figure}[H]
     \centering
     \includegraphics[scale = 0.39]{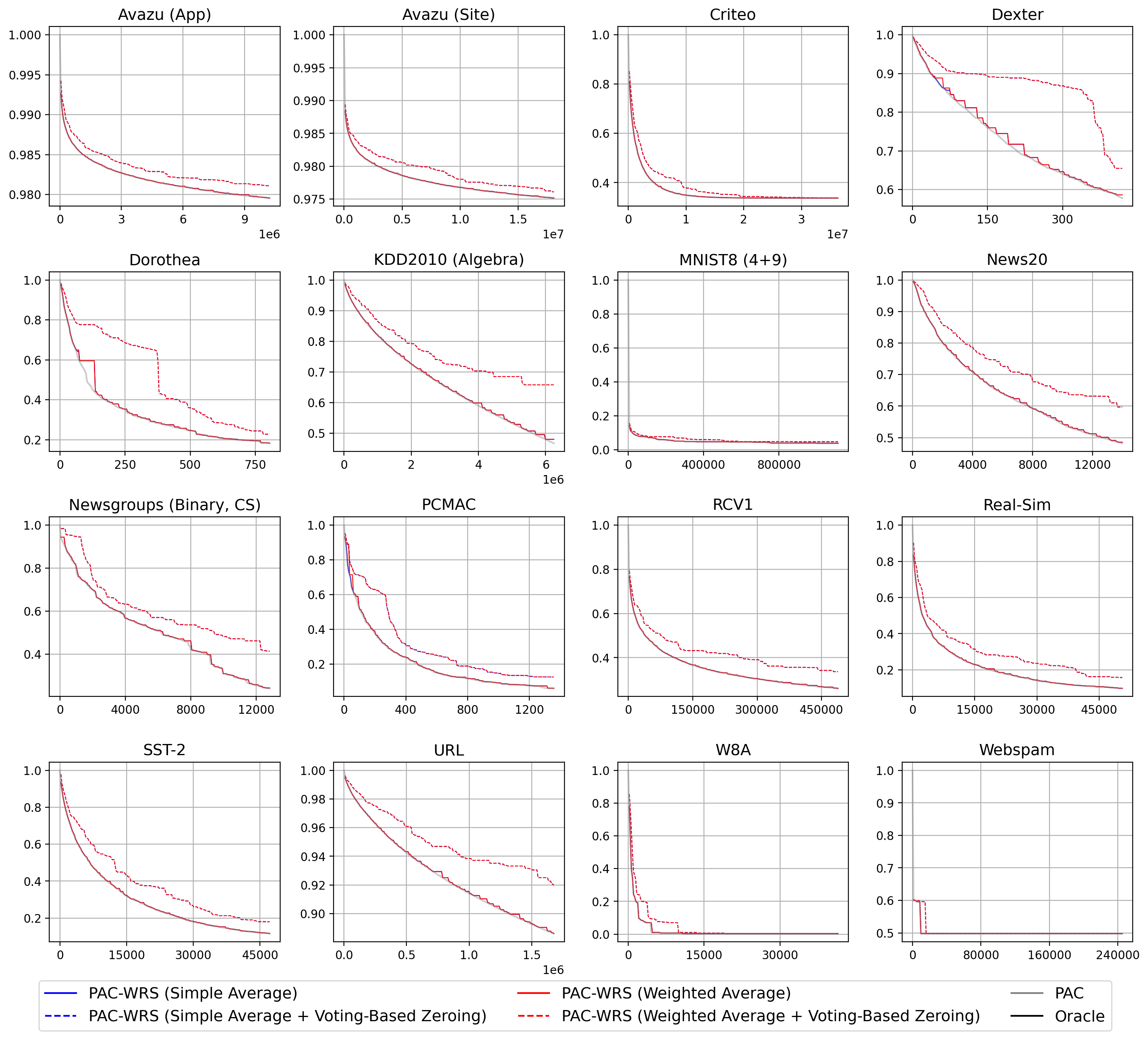}
     \caption{ Sparsity ($y$-axis) over timestep ($x$-axis) for PAC-WRS with reservoir size $K=64$ using standard weighting on all 16 tested datasets. \textbf{Light grey lines:} sparsities of the PAC baseline methods at each timestep. \textbf{Blue:} corresponds to PAC-WRS variants ensembled via simple averaging. \textbf{Red:} corresponds to PAC-WRS variants ensembled via weighted averaging. \textbf{Dotted lines:} indicate whether voting-based zeroing was applied for additional sparsity. \normalsize}
     \label{fig:PAC_ws=dense_metric=sparsity_K=64}
\end{figure}

\begin{figure}[H]
     \centering
     \includegraphics[scale = 0.39]{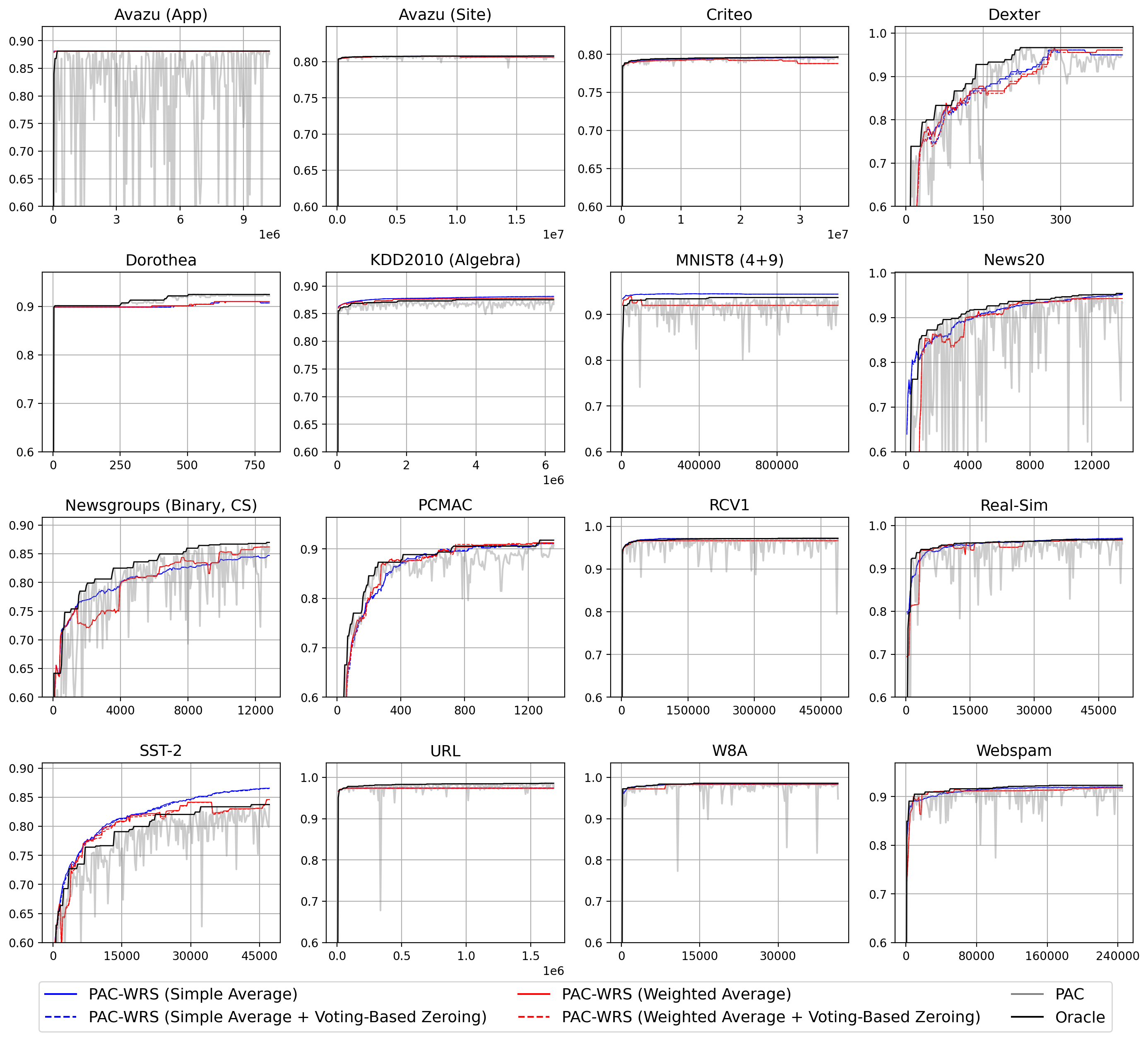}
     \caption{ Test accuracies ($y$-axis) over timestep ($x$-axis) for PAC-WRS with reservoir size $K=64$ using exponential weighting on all 16 tested datasets. \textbf{Light grey lines:} test accuracies of the PAC baseline methods at each timestep. \textbf{Solid black lines:} test accuracies of the ``oracle" models, computed as the cumulative maximum of the PAC baselines. \textbf{Blue:} corresponds to PAC-WRS variants ensembled via simple averaging. \textbf{Red:} corresponds to PAC-WRS variants ensembled via weighted averaging. \textbf{Dotted lines:} indicate whether voting-based zeroing was applied for additional sparsity. \normalsize}
     \label{fig:PAC_ws=exp-dense_metric=test-set-acc_K=64}
\end{figure}

\begin{figure}[H]
     \centering
     \includegraphics[scale = 0.39]{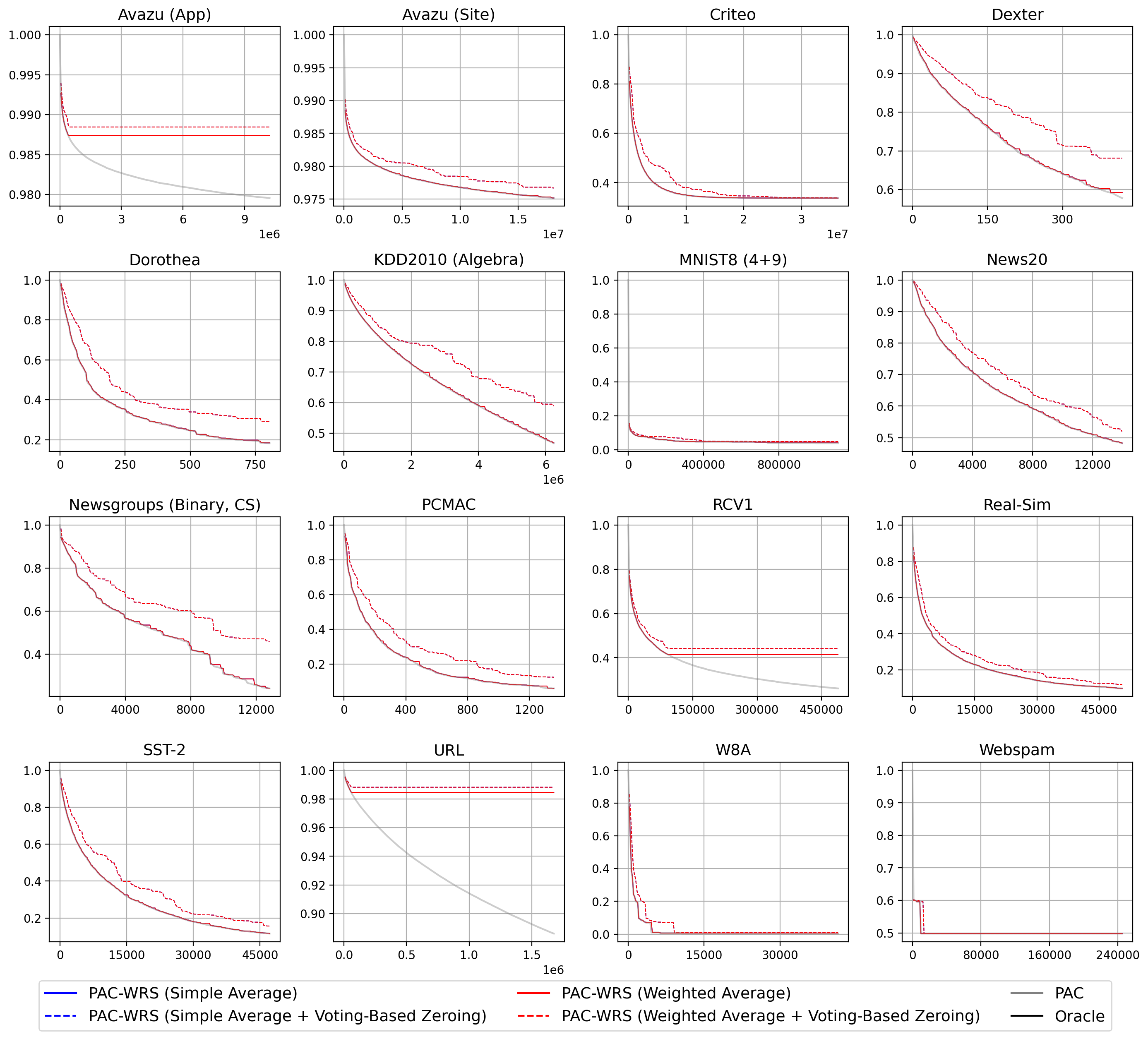}
     \caption{ Sparsity ($y$-axis) over timestep ($x$-axis) for PAC-WRS with reservoir size $K=64$ using exponential weighting on all 16 tested datasets. \textbf{Light grey lines:} sparsities of the PAC baseline methods at each timestep. \textbf{Blue:} corresponds to PAC-WRS variants ensembled via simple averaging. \textbf{Red:} corresponds to PAC-WRS variants ensembled via weighted averaging. \textbf{Dotted lines:} indicate whether voting-based zeroing was applied for additional sparsity. \normalsize}
     \label{fig:PAC_ws=exp-dense_metric=sparsity_K=64}
\end{figure}

\newpage
\section*{NeurIPS Paper Checklist}

\begin{enumerate}

\item {\bf Claims}
    \item[] Question: Do the main claims made in the abstract and introduction accurately reflect the paper's contributions and scope?
    \item[] Answer: \answerYes{} %
    \item[] Justification: Please see our Abstract and Introduction.
    \item[] Guidelines:
    \begin{itemize}
        \item The answer NA means that the abstract and introduction do not include the claims made in the paper.
        \item The abstract and/or introduction should clearly state the claims made, including the contributions made in the paper and important assumptions and limitations. A No or NA answer to this question will not be perceived well by the reviewers. 
        \item The claims made should match theoretical and experimental results, and reflect how much the results can be expected to generalize to other settings. 
        \item It is fine to include aspirational goals as motivation as long as it is clear that these goals are not attained by the paper. 
    \end{itemize}

\item {\bf Limitations}
    \item[] Question: Does the paper discuss the limitations of the work performed by the authors?
    \item[] Answer: \answerYes{} %
    \item[] Justification: Please see Section \ref{conclusion+future-work}.
    \item[] Guidelines:
    \begin{itemize}
        \item The answer NA means that the paper has no limitation while the answer No means that the paper has limitations, but those are not discussed in the paper. 
        \item The authors are encouraged to create a separate "Limitations" section in their paper.
        \item The paper should point out any strong assumptions and how robust the results are to violations of these assumptions (e.g., independence assumptions, noiseless settings, model well-specification, asymptotic approximations only holding locally). The authors should reflect on how these assumptions might be violated in practice and what the implications would be.
        \item The authors should reflect on the scope of the claims made, e.g., if the approach was only tested on a few datasets or with a few runs. In general, empirical results often depend on implicit assumptions, which should be articulated.
        \item The authors should reflect on the factors that influence the performance of the approach. For example, a facial recognition algorithm may perform poorly when image resolution is low or images are taken in low lighting. Or a speech-to-text system might not be used reliably to provide closed captions for online lectures because it fails to handle technical jargon.
        \item The authors should discuss the computational efficiency of the proposed algorithms and how they scale with dataset size.
        \item If applicable, the authors should discuss possible limitations of their approach to address problems of privacy and fairness.
        \item While the authors might fear that complete honesty about limitations might be used by reviewers as grounds for rejection, a worse outcome might be that reviewers discover limitations that aren't acknowledged in the paper. The authors should use their best judgment and recognize that individual actions in favor of transparency play an important role in developing norms that preserve the integrity of the community. Reviewers will be specifically instructed to not penalize honesty concerning limitations.
    \end{itemize}

\item {\bf Theory Assumptions and Proofs}
    \item[] Question: For each theoretical result, does the paper provide the full set of assumptions and a complete (and correct) proof?
    \item[] Answer: \answerYes{} %
    \item[] Justification: Please see Section \ref{math-theory} and Appendix \ref{appendix-proofs}.
    \item[] Guidelines:
    \begin{itemize}
        \item The answer NA means that the paper does not include theoretical results. 
        \item All the theorems, formulas, and proofs in the paper should be numbered and cross-referenced.
        \item All assumptions should be clearly stated or referenced in the statement of any theorems.
        \item The proofs can either appear in the main paper or the supplemental material, but if they appear in the supplemental material, the authors are encouraged to provide a short proof sketch to provide intuition. 
        \item Inversely, any informal proof provided in the core of the paper should be complemented by formal proofs provided in appendix or supplemental material.
        \item Theorems and Lemmas that the proof relies upon should be properly referenced. 
    \end{itemize}

    \item {\bf Experimental Result Reproducibility}
    \item[] Question: Does the paper fully disclose all the information needed to reproduce the main experimental results of the paper to the extent that it affects the main claims and/or conclusions of the paper (regardless of whether the code and data are provided or not)?
    \item[] Answer: \answerYes{} %
    \item[] Justification: Please see Section \ref{numerical-experiments} and Appendix \ref{methods-details-appendix-pac}.
    \item[] Guidelines:
    \begin{itemize}
        \item The answer NA means that the paper does not include experiments.
        \item If the paper includes experiments, a No answer to this question will not be perceived well by the reviewers: Making the paper reproducible is important, regardless of whether the code and data are provided or not.
        \item If the contribution is a dataset and/or model, the authors should describe the steps taken to make their results reproducible or verifiable. 
        \item Depending on the contribution, reproducibility can be accomplished in various ways. For example, if the contribution is a novel architecture, describing the architecture fully might suffice, or if the contribution is a specific model and empirical evaluation, it may be necessary to either make it possible for others to replicate the model with the same dataset, or provide access to the model. In general. releasing code and data is often one good way to accomplish this, but reproducibility can also be provided via detailed instructions for how to replicate the results, access to a hosted model (e.g., in the case of a large language model), releasing of a model checkpoint, or other means that are appropriate to the research performed.
        \item While NeurIPS does not require releasing code, the conference does require all submissions to provide some reasonable avenue for reproducibility, which may depend on the nature of the contribution. For example
        \begin{enumerate}
            \item If the contribution is primarily a new algorithm, the paper should make it clear how to reproduce that algorithm.
            \item If the contribution is primarily a new model architecture, the paper should describe the architecture clearly and fully.
            \item If the contribution is a new model (e.g., a large language model), then there should either be a way to access this model for reproducing the results or a way to reproduce the model (e.g., with an open-source dataset or instructions for how to construct the dataset).
            \item We recognize that reproducibility may be tricky in some cases, in which case authors are welcome to describe the particular way they provide for reproducibility. In the case of closed-source models, it may be that access to the model is limited in some way (e.g., to registered users), but it should be possible for other researchers to have some path to reproducing or verifying the results.
        \end{enumerate}
    \end{itemize}

\item {\bf Open access to data and code}
    \item[] Question: Does the paper provide open access to the data and code, with sufficient instructions to faithfully reproduce the main experimental results, as described in supplemental material?
    \item[] Answer: \answerYes{} %
    \item[] Justification: Please see our GitHub at \href{https://github.com/FutureComputing4AI/Weighted-Reservoir-Sampling-Augmented-Training/tree/main}{https://github.com/FutureComputing4AI/Weighted-Reservoir-Sampling-Augmented-Training/tree/main}, with a detailed \texttt{README.md} file.
    \item[] Guidelines:
    \begin{itemize}
        \item The answer NA means that paper does not include experiments requiring code.
        \item Please see the NeurIPS code and data submission guidelines (\url{https://nips.cc/public/guides/CodeSubmissionPolicy}) for more details.
        \item While we encourage the release of code and data, we understand that this might not be possible, so “No” is an acceptable answer. Papers cannot be rejected simply for not including code, unless this is central to the contribution (e.g., for a new open-source benchmark).
        \item The instructions should contain the exact command and environment needed to run to reproduce the results. See the NeurIPS code and data submission guidelines (\url{https://nips.cc/public/guides/CodeSubmissionPolicy}) for more details.
        \item The authors should provide instructions on data access and preparation, including how to access the raw data, preprocessed data, intermediate data, and generated data, etc.
        \item The authors should provide scripts to reproduce all experimental results for the new proposed method and baselines. If only a subset of experiments are reproducible, they should state which ones are omitted from the script and why.
        \item At submission time, to preserve anonymity, the authors should release anonymized versions (if applicable).
        \item Providing as much information as possible in supplemental material (appended to the paper) is recommended, but including URLs to data and code is permitted.
    \end{itemize}

\item {\bf Experimental Setting/Details}
    \item[] Question: Does the paper specify all the training and test details (e.g., data splits, hyperparameters, how they were chosen, type of optimizer, etc.) necessary to understand the results?
    \item[] Answer: \answerYes{} %
    \item[] Justification: Please see Algorithm \ref{alg:General-WRS}, Section \ref{numerical-experiments}, and Appendix \ref{methods-details-appendix-pac}, as well as our GitHub at \href{https://github.com/FutureComputing4AI/Weighted-Reservoir-Sampling-Augmented-Training/tree/main}{https://github.com/FutureComputing4AI/Weighted-Reservoir-Sampling-Augmented-Training/tree/main}.
    \item[] Guidelines:
    \begin{itemize}
        \item The answer NA means that the paper does not include experiments.
        \item The experimental setting should be presented in the core of the paper to a level of detail that is necessary to appreciate the results and make sense of them.
        \item The full details can be provided either with the code, in appendix, or as supplemental material.
    \end{itemize}

\item {\bf Experiment Statistical Significance}
    \item[] Question: Does the paper report error bars suitably and correctly defined or other appropriate information about the statistical significance of the experiments?
    \item[] Answer: \answerYes{} %
    \item[] Justification: Please see error bars and significance tests in Section \ref{numerical-experiments} and Appendix \ref{errorbars-appendix}.
    \item[] Guidelines:
    \begin{itemize}
        \item The answer NA means that the paper does not include experiments.
        \item The authors should answer "Yes" if the results are accompanied by error bars, confidence intervals, or statistical significance tests, at least for the experiments that support the main claims of the paper.
        \item The factors of variability that the error bars are capturing should be clearly stated (for example, train/test split, initialization, random drawing of some parameter, or overall run with given experimental conditions).
        \item The method for calculating the error bars should be explained (closed form formula, call to a library function, bootstrap, etc.)
        \item The assumptions made should be given (e.g., Normally distributed errors).
        \item It should be clear whether the error bar is the standard deviation or the standard error of the mean.
        \item It is OK to report 1-sigma error bars, but one should state it. The authors should preferably report a 2-sigma error bar than state that they have a 96\% CI, if the hypothesis of Normality of errors is not verified.
        \item For asymmetric distributions, the authors should be careful not to show in tables or figures symmetric error bars that would yield results that are out of range (e.g. negative error rates).
        \item If error bars are reported in tables or plots, The authors should explain in the text how they were calculated and reference the corresponding figures or tables in the text.
    \end{itemize}

\item {\bf Experiments Compute Resources}
    \item[] Question: For each experiment, does the paper provide sufficient information on the computer resources (type of compute workers, memory, time of execution) needed to reproduce the experiments?
    \item[] Answer: \answerYes{} %
    \item[] Justification: Please see Appendix \ref{methods-details-appendix-pac}.
    \item[] Guidelines:
    \begin{itemize}
        \item The answer NA means that the paper does not include experiments.
        \item The paper should indicate the type of compute workers CPU or GPU, internal cluster, or cloud provider, including relevant memory and storage.
        \item The paper should provide the amount of compute required for each of the individual experimental runs as well as estimate the total compute. 
        \item The paper should disclose whether the full research project required more compute than the experiments reported in the paper (e.g., preliminary or failed experiments that didn't make it into the paper). 
    \end{itemize}
    
\item {\bf Code Of Ethics}
    \item[] Question: Does the research conducted in the paper conform, in every respect, with the NeurIPS Code of Ethics \url{https://neurips.cc/public/EthicsGuidelines}?
    \item[] Answer: \answerYes{} %
    \item[] Justification: We have carefully read the NeurIPS Code of Ethics and have made sure that our research conforms to it.
    \item[] Guidelines:
    \begin{itemize}
        \item The answer NA means that the authors have not reviewed the NeurIPS Code of Ethics.
        \item If the authors answer No, they should explain the special circumstances that require a deviation from the Code of Ethics.
        \item The authors should make sure to preserve anonymity (e.g., if there is a special consideration due to laws or regulations in their jurisdiction).
    \end{itemize}

\item {\bf Broader Impacts}
    \item[] Question: Does the paper discuss both potential positive societal impacts and negative societal impacts of the work performed?
    \item[] Answer: \answerNA{} %
    \item[] Justification: This paper introduces a foundational research method that does not have any direct paths to negative societal impacts, aside from the general interactions of machine learning with society.
    \item[] Guidelines:
    \begin{itemize}
        \item The answer NA means that there is no societal impact of the work performed.
        \item If the authors answer NA or No, they should explain why their work has no societal impact or why the paper does not address societal impact.
        \item Examples of negative societal impacts include potential malicious or unintended uses (e.g., disinformation, generating fake profiles, surveillance), fairness considerations (e.g., deployment of technologies that could make decisions that unfairly impact specific groups), privacy considerations, and security considerations.
        \item The conference expects that many papers will be foundational research and not tied to particular applications, let alone deployments. However, if there is a direct path to any negative applications, the authors should point it out. For example, it is legitimate to point out that an improvement in the quality of generative models could be used to generate deepfakes for disinformation. On the other hand, it is not needed to point out that a generic algorithm for optimizing neural networks could enable people to train models that generate Deepfakes faster.
        \item The authors should consider possible harms that could arise when the technology is being used as intended and functioning correctly, harms that could arise when the technology is being used as intended but gives incorrect results, and harms following from (intentional or unintentional) misuse of the technology.
        \item If there are negative societal impacts, the authors could also discuss possible mitigation strategies (e.g., gated release of models, providing defenses in addition to attacks, mechanisms for monitoring misuse, mechanisms to monitor how a system learns from feedback over time, improving the efficiency and accessibility of ML).
    \end{itemize}
    
\item {\bf Safeguards}
    \item[] Question: Does the paper describe safeguards that have been put in place for responsible release of data or models that have a high risk for misuse (e.g., pretrained language models, image generators, or scraped datasets)?
    \item[] Answer: \answerNA{} %
    \item[] Justification: This paper does not involve any data or models that have a high risk for misuse.
    \item[] Guidelines:
    \begin{itemize}
        \item The answer NA means that the paper poses no such risks.
        \item Released models that have a high risk for misuse or dual-use should be released with necessary safeguards to allow for controlled use of the model, for example by requiring that users adhere to usage guidelines or restrictions to access the model or implementing safety filters. 
        \item Datasets that have been scraped from the Internet could pose safety risks. The authors should describe how they avoided releasing unsafe images.
        \item We recognize that providing effective safeguards is challenging, and many papers do not require this, but we encourage authors to take this into account and make a best faith effort.
    \end{itemize}

\item {\bf Licenses for existing assets}
    \item[] Question: Are the creators or original owners of assets (e.g., code, data, models), used in the paper, properly credited and are the license and terms of use explicitly mentioned and properly respected?
    \item[] Answer: \answerYes{} %
    \item[] Justification: We enumerate and cite all datasets in Table \ref{table:datasets}. Aside from our own preprocessed versions of Newsgroups (Binary, CS) and SST-2, the 14 other datasets are all publicly available.
    \item[] Guidelines:
    \begin{itemize}
        \item The answer NA means that the paper does not use existing assets.
        \item The authors should cite the original paper that produced the code package or dataset.
        \item The authors should state which version of the asset is used and, if possible, include a URL.
        \item The name of the license (e.g., CC-BY 4.0) should be included for each asset.
        \item For scraped data from a particular source (e.g., website), the copyright and terms of service of that source should be provided.
        \item If assets are released, the license, copyright information, and terms of use in the package should be provided. For popular datasets, \url{paperswithcode.com/datasets} has curated licenses for some datasets. Their licensing guide can help determine the license of a dataset.
        \item For existing datasets that are re-packaged, both the original license and the license of the derived asset (if it has changed) should be provided.
        \item If this information is not available online, the authors are encouraged to reach out to the asset's creators.
    \end{itemize}

\item {\bf New Assets}
    \item[] Question: Are new assets introduced in the paper well documented and is the documentation provided alongside the assets?
    \item[] Answer: \answerYes{} %
    \item[] Justification: Please see the \texttt{README.md} file in our accompanying GitHub \href{https://github.com/FutureComputing4AI/Weighted-Reservoir-Sampling-Augmented-Training/tree/main}{https://github.com/FutureComputing4AI/Weighted-Reservoir-Sampling-Augmented-Training/tree/main}.
    \item[] Guidelines:
    \begin{itemize}
        \item The answer NA means that the paper does not release new assets.
        \item Researchers should communicate the details of the dataset/code/model as part of their submissions via structured templates. This includes details about training, license, limitations, etc. 
        \item The paper should discuss whether and how consent was obtained from people whose asset is used.
        \item At submission time, remember to anonymize your assets (if applicable). You can either create an anonymized URL or include an anonymized zip file.
    \end{itemize}

\item {\bf Crowdsourcing and Research with Human Subjects}
    \item[] Question: For crowdsourcing experiments and research with human subjects, does the paper include the full text of instructions given to participants and screenshots, if applicable, as well as details about compensation (if any)? 
    \item[] Answer: \answerNA{} %
    \item[] Justification: No crowdsourcing or work with human subjects was performed for this paper.
    \item[] Guidelines:
    \begin{itemize}
        \item The answer NA means that the paper does not involve crowdsourcing nor research with human subjects.
        \item Including this information in the supplemental material is fine, but if the main contribution of the paper involves human subjects, then as much detail as possible should be included in the main paper. 
        \item According to the NeurIPS Code of Ethics, workers involved in data collection, curation, or other labor should be paid at least the minimum wage in the country of the data collector. 
    \end{itemize}

\item {\bf Institutional Review Board (IRB) Approvals or Equivalent for Research with Human Subjects}
    \item[] Question: Does the paper describe potential risks incurred by study participants, whether such risks were disclosed to the subjects, and whether Institutional Review Board (IRB) approvals (or an equivalent approval/review based on the requirements of your country or institution) were obtained?
    \item[] Answer: \answerNA{} %
    \item[] Justification: This paper does not have any study participants and/or crowdsourcing.
    \item[] Guidelines:
    \begin{itemize}
        \item The answer NA means that the paper does not involve crowdsourcing nor research with human subjects.
        \item Depending on the country in which research is conducted, IRB approval (or equivalent) may be required for any human subjects research. If you obtained IRB approval, you should clearly state this in the paper. 
        \item We recognize that the procedures for this may vary significantly between institutions and locations, and we expect authors to adhere to the NeurIPS Code of Ethics and the guidelines for their institution. 
        \item For initial submissions, do not include any information that would break anonymity (if applicable), such as the institution conducting the review.
    \end{itemize}

\end{enumerate}

\end{document}